\newcommand\norm[1]{\left\lVert#1\right\rVert}
\newcommand{\nextver}[1]{{}}
\def\algon{\texttt{DecoR}\xspace}
\def\bfs{\texttt{BFS}\xspace}
\def\torrent{\texttt{Torrent}\xspace}
\def\tor{\texttt{Tor}\xspace}
\def\ols{\texttt{OLS}\xspace}
\def\outliers{G_n}
\def\outliersnn{G}
\def\inliers{G_n^{\mathsf{c}}}
\def\sampleset{\mathcal{U}_n}
\def\B{\mathrm{Inl}(\mathcal{U}_n)}
\newtheorem{theorem}{Theorem}
\newtheorem{remark}{Remark}
\newtheorem{assumption}{Assumption}
\newtheorem{proposition}{Proposition}
\newtheorem{definition}{Definition}
\newtheorem{setting}{Setting}
\newtheorem{corollary}{Corollary}
\newtheorem{lemma}{Lemma}
\DeclareMathOperator{\rank}{rank}
\DeclareMathOperator*{\argmin}{arg\,min}
\title{DecoR: Deconfounding Time Series with Robust Regression}
\author{Felix Schur\footnote{felix.schur@stat.math.ethz.ch}\\
        ETH Zurich, Switzerland \and Jonas Peters\\
        ETH Zurich, Switzerland}
\date{\today}
\begin{document}

\maketitle

\begin{abstract}
    Causal inference on time series data is a challenging problem, especially in the presence of unobserved confounders. This work focuses on estimating the causal effect between two time series that are confounded by a third, unobserved time series. Assuming spectral sparsity of the confounder, we show how in the frequency domain this problem can be framed as an adversarial outlier problem. We introduce \textbf{Deco}nfounding by \textbf{R}obust regression
    (\algon),
    a novel approach that estimates the causal effect using robust linear regression in the frequency domain.
    Considering two different robust regression techniques, we first improve existing bounds on the estimation error for such techniques. Crucially, our results do not require distributional assumptions on the covariates. We can therefore use them in time series settings.
    Applying these results to \algon, we prove,
    under suitable assumptions, upper bounds for the estimation error of \algon that imply consistency.
    We demonstrate \algon's effectiveness through experiments on 
    both synthetic and real-world data from Earth system science. The simulation experiments furthermore suggest that \algon is robust with respect to model misspecification. \\

    \noindent \textbf{KEYWORDS:}\\
    \noindent Causal inference, confounding, Earth system science, robust regression, time series analysis

\end{abstract}

\section{Introduction and Related Work}
Understanding causal relationships is a fundamental problem in many scientific 
disciplines ranging from economics and epidemiology to
biology and Earth system science.
Predicting a response from observations of covariates often falls short to answer the scientific question at hand. Instead, 
one often wants
to understand  how the response
reacts to interventions on the covariates, one of the core questions studied in causal inference
\citep{bb9b1ce1-c4cd-345c-b1f2-6bb227500876, pearl2009causality, peters2017elements}.
A recurring challenge in causal inference 
on non-randomized data
is the presence of unobserved confounders, that is, unobserved
variables
that influence both the predictor and the covariate and that
potentially lead
to biased estimates of the causal effect.

Instrumental variable (IV) regression offers a framework to 
remove bias due to hidden confounding 
by using instruments -- variables that influence the response variable only via the covariates of interest
\citep{wright1928tariff, reiersol1945confluence, bowden1990instrumental, angrist2009mostly}. 
Instrumental variables regression for time series data leads to
additional challenges due to temporal dependencies \citep{fair1970estimation, newey1986simple,thams2022identifying}.
In cases where instruments are not available, 
one may aim to exploit
alternative assumptions regarding the nature of the confounding.
For example, under the strong assumption of independent additive noise, \cite{janzig} propose a method that detects confounding in
i.i.d.~data.

In this paper, we assume
that the confounder is sparse in the frequency domain (or, more generally, after a suitable basis transformation). This assumption
allows us to frame the problem 
as an adversarial outlier problem %
in the frequency domain and thereby enabling us to apply 
robust regression techniques to estimate the causal effect reliably. 
Figure \ref{fig:main_plot} illustrates our proposed
method, \textbf{Deco}nfounding by \textbf{R}obust regression (\algon),
and offers a graphical comparison with ordinary least squares
(OLS). We analyse \algon theoretically and provide assumptions under which \algon is consistent for estimating the causal effect.

Our approach was inspired by \cite{mahecha2010global} who predict temperature sensitivity of ecosystem respiratory processes in the case where basal respiration, the unobserved confounder, is slowly varying. While \cite{mahecha2010global} also consider the time series data in the frequency domain, they %
assume that the support of the confounder is known (which is not required in our framework). Furthermore, they
focus on the application and
neither
provide a 
formal mathematical
framework nor any statistical guarantees.
The work by
\cite{cevid2020spectral} and \cite{scheidegger2023spectral} 
also consider sparsity with hidden confounders.
However, they consider i.i.d.~data and assume 
a sparse parameter vector
rather than sparse confounding. 
Consequently, their procedure is different from ours in that 
they apply singular value decomposition to the data and then trim the large eigenvalues.
A related line of research focuses on spatial deconfounding in environmental and epidemiological applications; see, e.g.,~\citet{clayton1993spatial, paciorek2010importance, page2017estimation}. 
These works assume
that a spatial regression problem is confounded by an unobserved confounder that is slowly varying. Different methods have been developed to solve this problem.
In \cite{reich2006effects}, \cite{hughes2013dimension} and \cite{prates2019alleviating} the residual spatial process is restricted to be orthogonal to the covariates. Another approach is to remove the slowly varying components from either the response or the covariates or both \citep{paciorek2010importance, thaden2018structural, keller2020selecting, dupont2022spatial+}. A similar 
methodology, that also includes removing the slowly varying part of the covariate, was
developed by \cite{sippel2019uncovering} for meteorological time series data.
More recently, \cite{guan2023spectral} and \cite{marques2022mitigating} consider Gaussian random fields as a data generation process and propose to remove confounding by considering different scales when estimating the (unconfounded) covariance matrix.

Our method reduces the unobserved confounder problem to
linear regression with adversarial outliers on non-i.i.d.~data. Linear adversarial outlier problems have
been studied extensively both in terms of methodology and theory.
In the setting where outliers are given by an oblivious adversary, that is, the outliers are not allowed to depend on the predictors, several consistent estimators are known \citep{bhatia2017consistent, suggala2019adaptive, d2021consistent}. However, when the outliers are chosen 
adversarially, that is, adaptively with respect to the predictors, there does not exist a consistent estimator when the 
fraction of data points contaminated by outliers and the noise variance are constant, even when the data is i.i.d.~(see Appendix~\ref{app:lower_bound}). 
Furthermore, in the i.i.d.~setting  with vanishing fraction of bounded outliers, 
standard \ols is consistent
(for details see Appendix~\ref{app:inconsistent}) -- even though it may be suboptimal in terms of finite sample results.
Surprisingly, for the non-i.i.d.~setting we are considering, we prove that there are cases for which robust regression is consistent while \ols is not, even when outliers are bounded. 
Even though it is generally impossible to construct consistent estimators, a variety of results have been derived for the linear adversarial outliers focusing mostly on the i.i.d.~setting.
\citet{klivans2018efficient} assume i.i.d.~data, contractivity constraints on the
distribution of the predictors and assume that the outcomes are bounded. The authors of \citet{chen2013robust} and \cite{diakonikolas2019efficient} assume (sub)-Gaussian design with uncorrelated predictors, and \citet{sasai2020robust} assume i.i.d.~Gaussian design and make restricted eigenvalue assumptions. Similarly, \citet{pensia2020robust} assume i.i.d.~data with contractivity constraints
and assume that the distance between contaminations are bounded. In \citet{bhatia2015robust} the authors assume eigenvalue bounds on the predictors and assume the contaminations to be bounded. 
One of the methods we study in more detail is the algorithm proposed by \citet{bhatia2015robust}: their results are among the few that do not require i.i.d.~data.
Not only do we improve the bounds given by \cite{bhatia2015robust} but we also prove that robust regression can be consistent with constant fraction of adversarial outliers and non-vanishing noise variance. The insight here is that when i.i.d.~noise with constant variance is considered in the frequency domain, the variance 
vanishes with increasing sample size.

The remainder of this paper is structured as follows.
We formally introduce the problem
of causal inference with unobserved confounders for time series in Section~\ref{sec:sparse_deconfounding}. In Section~\ref{sec:sparse_deconfounding_robust} we present \algon and showcase how it is used for causal inference in the presence of unobserved confounders. We provide theoretical guarantees for \bfs and \torrent in Section~\ref{sec:theoretical_g} and guarantees for \algon in Section~\ref{sec:theoretical_g_an}. In Section~\ref{sec:exp} we provide experimental results on both synthetic and real-world data.

\begin{figure}[t]
    \centering
    \includegraphics[width=12cm]{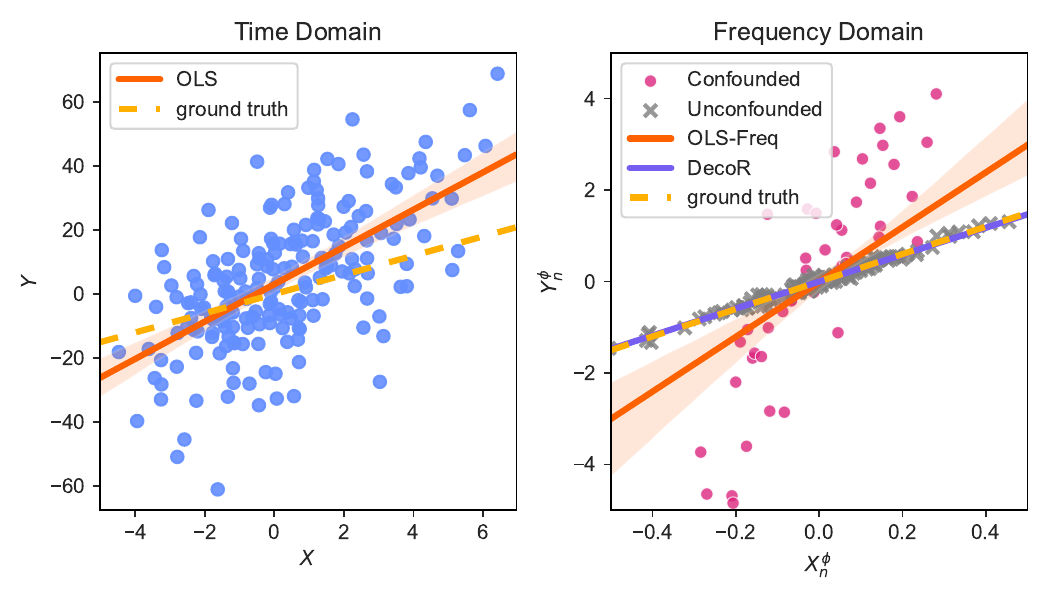}
        \caption{
        Due to the hidden confounder, regressing $Y$ on $X$ in the time-domain (left) does not yield a consistent estimator of the true causal effect (dashed, yellow). The idea of \algon (green, see Section~\ref{sec:sparse_deconfounding_robust}) is to consider the data in the frequency domain. Even though it is unknown, which of the data points correspond to confounded (orange) and unconfounded (gray) frequencies, robust regression techniques can be used to estimate the causal effect. We prove that \algon is consistent under weak assumptions if the confounding is sparse, see Section~\ref{sec:theoretical_g}.
        }
    \label{fig:main_plot}
\end{figure}

\section{Sparse Representation for Deconfounding}
\label{sec:sparse_deconfounding}
We now formalize the underlying 
model assumptions and the methodological procedure.
\begin{setting}
\label{set:robust_improved}
    Let $d \in \mathbb{N}$ and let $T \in \mathbb{R}_{>0}$ denote a fixed time horizon.
    Let $X = (X_t)_{t \in [0,T]}$ be a stochastic process in $\mathbb{R}^d$ and $U = (U_t)_{t \in [0,T]}$ a stochastic process in $\mathbb{R}$. Let $\eta = (\eta_t)_{t \in [0,T]}$ be a process of i.i.d.~centred Gaussian random variables independent of $X$ 
    and with constant variance $\sigma^2_{\eta} \geq 0$.
    Let $\beta \in \mathbb{R}^d$ and
    let $Y = (Y_t)_{t \in [0,T]}$ be a stochastic process that satisfies
    \begin{equation*}
        Y_t =  X_t^\top \beta  + U_t + \eta_t.
    \end{equation*}
    Fix $n \in \mathbb{N}$. We assume that we observe
    $X^n \coloneqq (X_{T/n}, X_{2T/{n}}, \dots, X_{T})$ and $Y^n:=(Y_{T/n}, Y_{2T/{n}}, \dots, Y_{T})$.
\end{setting}
We assume Setting~\ref{set:robust_improved} for the remainder of this section and consider the goal of estimating $\beta$ from $X^n$ and $Y^n$. Setting~\ref{set:robust_improved} 
does not require
any underlying causal model and 
$\beta$ is an interesting target of inference from a regression point of view. However, the set-up is particularly relevant if $\beta \in \mathbb{R}$ is the total causal effect of $X_t$ on $Y_t$, that is, $\beta = \frac{\partial}{\partial x}\mathbb{E}[Y_t \ | \ \mathrm{do}(X_t=x)]$ for all $t \in [0,T]$ \citep{pearl2009causality}.
In the most basic scenario, $U$ acts as a confounder, effecting $Y$ through a linear relationship while 
its effect on $X$ can be non-linear.
We illustrate this scenario in Figure~\ref{fig:DAG}.

However, Setting~\ref{set:robust_improved} 
is more general than the additive structure may initially suggest. In particular, we show that it suffices to assume that $Y_t = X_t^{\top} \beta + \epsilon_t$ for some (not necessarily uncorrelated) stochastic processes $X$ and $\epsilon$; for details, see Appendix~\ref{sec:more_general}.

Consider 
a (known) orthonormal basis
$\phi = \{\phi_k\}_{k \in \mathbb{N}}$ of $L^2([0, T])$.
We typically choose the 
cosine
basis 
(see Definition~\ref{def:cosine_basis})
as $\phi$ in our applications.
The 
main assumption of this paper is
that the confounder is sparse in this basis.
\begin{definition}[$(\phi, \outliersnn)$-sparse process %
]
\label{def:main}
    Let $\outliersnn \subseteq \mathbb{N}$ and let $U = (U_t)_{t \in [0,T]}$ be a stochastic process in $\mathbb{R}$ satisfying $\mathbb{E}[ \int_0^T U_t^2 dt] < \infty$. If for all $k \notin \outliersnn$ almost surely
    \begin{equation*}
        \langle U, \phi_k\rangle_{L^2} = 0,
    \end{equation*}
    we call $U$ a \emph{$(\phi, \outliersnn)$-sparse process}.
\end{definition}

\begin{assumption}[Sparse confounding assumption]
\label{ass:main}
    The set $G \subseteq \mathbb{N}$ is
    such that $U$ is a $(\phi, \outliersnn)$-sparse process.
\end{assumption}
The theoretical results presented in the remainder of this paper explicitly
assume
Assumption~\ref{ass:main} to hold for a $\outliersnn$ with suitable properties (informally speaking, for `small' $\outliersnn$).
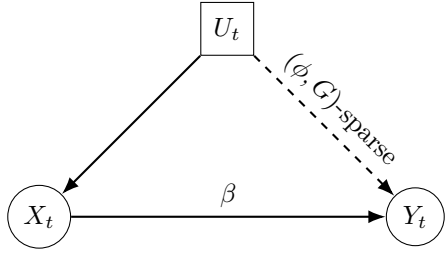
\begin{figure}[t]
  \centering
  \begin{tikzpicture}[
  ov/.style = {shape=circle,draw,minimum size=2em},
  uv/.style = {shape=rectangle,draw,minimum size=2em},
  edge/.style = {->, thick, -Latex},
  ]
    \node[ov] (X) at (0,0) {$X_t$};
    \node[ov] (Y) at (5,0) {$Y_t$};
    \node[uv] (U) at (2.5,2.5) {$U_t$};
    \draw[edge] (X) to node[above,sloped] {$\beta$} (Y);
    \draw[edge]  (U) -- (X);
    \draw[edge, dashed]  (U) to node[above,sloped] {$(\phi, \outliersnn)$-sparse} (Y);
    \end{tikzpicture}
\caption{
Directed acyclic graph
covered by
Setting~\ref{set:robust_improved}. The dashed arrow from $U_t$ to $Y_t$ indicates that we assume 
$U_t$ to be sparse and the effect on $Y_t$ to be additive,
}
\label{fig:DAG}
\end{figure}
To see how this sparsity can exploited, 
let %
$\mathcal{S}$ be the set of $\mathbb{R}^d$-valued stochastic processes\footnote{We work with general stochastic processes since we do not assume regularity conditions for $\eta$.}
on $[0, T]$ and $\mathcal{C}$ the set of random variables on $\mathbb{R}^d$. Define, for all $k \leq n$, the function $T_k^{\phi,n}: \mathcal{S} \to \mathcal{C}$ by
\begin{equation}
\label{eq:ft}
    T_k^{\phi,n }(V) \coloneqq 
    \begin{bmatrix}
        \frac{1}{n} \sum_{l=1}^n (V_{Tl/n})_1 \phi_k(Tl/n)\\
        \vdots\\
        \frac{1}{n} \sum_{l=1}^n (V_{Tl/n})_d \phi_k(Tl/n)
    \end{bmatrix}.
\end{equation}
For the cosine basis, Equation \eqref{eq:ft} corresponds to 
taking the discrete cosine transform (applied to $V_{T/n}, \ldots, V_{T}$). 
Using the linearity of $T_k^{\phi,n }$, it holds for all $k \leq n$
\begin{align}
\label{eq:core_idea}
\begin{split}
    T_k^{\phi,n }(Y) &= T_k^{\phi,n }(X)^{\top} \beta + T_k^{\phi,n }(U) + T_k^{\phi,n }(\eta)\\
    &\overset{\text{a.s.}}{=} 
    \begin{cases}
        T_k^{\phi,n }(X)^{\top}\beta + T_k^{\phi,n }(U) + T_k^{\phi,n }(\eta) & \text{if } k \in \outliersnn, \\
        T_k^{\phi,n }(X)^{\top}\beta + T_k^{\phi,n }(\eta) & \text{if } k \notin \outliersnn.
    \end{cases}
\end{split}
\end{align}
Equation~\eqref{eq:core_idea}
makes use of a technical condition on the transformation; see Assumption~\ref{ass:new}\ref{ass:new:2} 
and its discussion in Section~\ref{sec:theoretical_g_an}.

The core idea is to consider the pairs $\{(T_k^{\phi,n }(X), T_k^{\phi,n }(Y))\}_{k \leq n}$ as a new data set and to observe that by Assumption~\ref{ass:main} only the data points with an index $k \in G$ are confounded. If $\outliersnn$ is known or if there exists a known subset $S \subseteq \mathbb{N}$ with $G \subseteq S$, the unconfounded data set $\{(T_k^{\phi,n}(X), T_k^{\phi,n}(Y))\}_{k \notin S}$ can be analysed using standard linear inference methods to, for example,
consistently estimate $\beta$. However, in many cases, $\outliersnn$ (or $S$) might be unknown. In the next section, we introduce \algon, an algorithm for estimating %
$\beta$ without assuming knowledge of $\outliersnn$ but assuming sparsity instead.

\subsection{\algon: Deconfounding by Robust Regression %
}
\label{sec:sparse_deconfounding_robust}

The key insight exploited in
this section is the identification of \eqref{eq:core_idea} as an adversarial outlier problem.
We define $X^n_{\phi} \coloneqq (T_1^{\phi,n }(X), \dots, T_n^{\phi,n }(X))^{\top}$
and similarly for $Y^n_{\phi}$ and $\eta^n_{\phi}$, and
$o^n_{\phi}$ as $o^n_{\phi} \coloneqq (T_1^{\phi,n }(U), \dots, T_n^{\phi,n }(U))^{\top}$. This reformulation allows us to express the relationship \eqref{eq:core_idea} as:
\begin{equation}
\label{eq:hhgf}
    Y^n_{\phi} = X^n_{\phi}\beta + \eta^n_{\phi} + o^n_{\phi}.
\end{equation}
The outlier vector  $o^n_{\phi}$ may exhibit dependence on $X$. However, because of Assumption~\ref{ass:main} we know that for all indices $k$ not in $\outliersnn$, the component $(o^n_{\phi})_k$ equals $0$. This scenario mirrors a linear regression model with adversarial outliers, a challenge well-studied in robust statistics literature \citep[see, e.g.,][]{bhatia2015robust}.
To estimate $\beta$ we can thus apply robust linear regression 
to \eqref{eq:hhgf}. We call this methodology \algon 
and outline the full procedure in Algorithm~\ref{alg:algon}.
While it can be paired with any adversarial robust linear regression algorithm,
in our experiments, we have used \torrent \citep{bhatia2015robust}.
Figure~\ref{fig:algon_vis} in Appendix~\ref{app:figure}
offers a graphical illustration of the method, emphasizing the transition from time-domain data to frequency-domain analysis and the subsequent application of a robust regression algorithm.
\begin{algorithm}
\caption{\algon}\label{alg:algon}
\begin{algorithmic}
\Require $X^n \in \mathbb{R}^{n \times d}$, $Y^n\in \mathbb{R}^n$, orthonormal basis $\phi$, robust linear regression algorithm $\mathcal{A}$
\State $X^n_{\phi} \gets (T_1^{\phi,n }(X), \dots, T_n^{\phi,n }(X))^{\top}$ \Comment{See \eqref{eq:ft}}
\State $Y^n_{\phi} \gets (T_1^{\phi,n }(Y), \dots, T_n^{\phi,n }(Y))^{\top}$
\State $\hat{\beta}^{\phi,n}_{\algon} \gets \mathcal{A}(X^n_{\phi}, Y^n_{\phi})$
\State \Return $\hat{\beta}^{\phi,n}_{\algon}$
\end{algorithmic}
\end{algorithm}

In Section~\ref{sec:theoretical_g} we introduce the robust linear regression with adversarial outliers problem, discuss the robust algorithms \bfs and \torrent and provide novel theoretical guarantees for their estimation errors. In Section~\ref{sec:theoretical_g_an} we use these results to provide conditions under which the estimator returned by \algon is consistent.

\section{Theoretical Guarantees for Robust Regression}
\label{sec:theoretical_g}

We first introduce the setting of a linear model with adversarial outliers,
which we assume for the remainder of Section \ref{sec:theoretical_g}.
\begin{setting}
\label{set:outlier}
    Let $d \in \mathbb{N}$ and $\beta \in \mathbb{R}^d$. For all $n \geq d$, let $\epsilon \in \mathbb{R}^n$, $X \in \mathbb{R}^{n \times d}$ and $\outliers \subseteq \{1, \ldots, n\}$ and let $o \in \mathbb{R}^n$ be such that $\forall i \notin \outliers: o_i = 0$. Define 
    $Y \in \mathbb{R}^n$ by
    \begin{equation} \label{eq:linmodmain}
        Y \coloneqq X \beta+ \epsilon + o.
    \end{equation}
    We call $\outliers$ the \emph{(potential) outliers} and $\inliers$ the \emph{inliers}. We observe $X$ and $Y$. The goal is to estimate $\beta$.
\end{setting}
Setting~\ref{set:outlier} describes the adversarial outlier setting. In particular, $o$ can depend on $X$. For any $S \subseteq \{1, \ldots, n\}$, we denote by $X_S \in \mathbb{R}^{|S|\times d}$ the submatrix of $X$ that only includes rows with indices in $S$ 
(for $S = \emptyset$, this is the empty matrix, whose rank equals zero).
We similarly denote the subvectors of $Y, \epsilon$ and $o$ that only includes rows with indices in $S$ by $Y_S, \epsilon_S$ and $o_S$.
If $\rank(X_S) \geq d$, we denote by 
\begin{equation}
\label{eq:ols}
    \hat{\beta}^S_{\ols}(X, Y) \coloneqq (X_S^{\top} X_S)^{-1} X_S^{\top} Y_S
\end{equation}
the ordinary-least-squares estimator\footnote{If $X_{S}^{\top}X_{S}$ is not invertible, we use the Moore-Penrose inverse $(X_{S}^{\top}X_{S})^+$ instead; that is, more generally, we define $\hat{\beta}^S_{\ols}(X, Y) \coloneqq (X_S^{\top} X_S)^{+} X_S^{\top} Y_S$.}.
We often write $\hat{\beta}^S_{\ols}$ instead of $\hat{\beta}^S_{\ols}(X, Y)$ when $X,Y$ is apparent
from the context. We also write $\hat{\beta}_{\ols}(X,Y) \coloneqq \hat{\beta}^{\{1, \dots, n\}}_{\ols}(X,Y)$.

This work discusses two algorithms for estimating $\beta$ in~\eqref{eq:linmodmain},
both of these methods
include some form of linear regression. 
Algorithm~\ref{alg:bfrs}
takes as input a collection $\sampleset$ of candidate sets for the inliers $\inliers$.
For example, we might have knowledge of an upper bound $a$ for the number of outliers. We can then define $\sampleset = \{S \subseteq \{1,\dots, n\} \ \vert \ |S| = n -a\}$.
We can then search over all elements in $\sampleset$,
compute the \ols estimate on the corresponding data,
and choose the estimate with the smallest prediction error. We call this method Brute Force Search (\bfs) 
and detail it in Algorithm~\ref{alg:bfrs}.
We will see in Section~\ref{sec:worst_case} that if, indeed, $\inliers \in \sampleset$, the procedure is consistent under suitable assumptions.
\begin{algorithm}
\caption{\bfs}\label{alg:bfrs}
\begin{algorithmic}
\Require $X \in \mathbb{R}^{n}$, $Y\in \mathbb{R}^n, \sampleset$
\State $\hat{\beta}^n_{\bfs}(\sampleset) \gets 0$,
$\mathrm{err}_0 \gets \infty$
\For{$S \in \sampleset$}
    \State $\mathrm{err}_1 \gets \frac{1}{|S|}\norm{Y_S - X_S\hat{\beta}^{S}_{\ols}(X, Y)}^2_2$
\Comment{see \eqref{eq:ols}}
    \If{$\mathrm{err}_1 < \mathrm{err}_0$}
        \State $\hat{\beta}^n_{\bfs}(\sampleset) \gets \hat{\beta}_{S}$
        \State $\mathrm{err}_0 \gets \mathrm{err}_1$
    \EndIf
\EndFor
\State \Return $\hat{\beta}^n_{\bfs}(\sampleset)$
\end{algorithmic}
\end{algorithm}

Algorithm~\ref{alg:bfrs} is computationally expensive
unless $\sampleset$ contains only few sets.
In practice, \torrent \citep{bhatia2015robust} is an often computationally more efficient method for estimating $\beta$. 
For all $v \in \mathbb{R}^n$ let $s_1, \dots, s_n$ be the unique permutation of $\{1, \dots, n\}$ such that $v_{s_1} \leq v_{s_2} \leq \dots \leq v_{s_n}$, where ties are broken with a fixed deterministic rule.
Further, define for $a \in \{1,\dots,n \}$
the set
\begin{equation}
\label{eq:ht}
    \mathrm{HT}(v, a) \coloneqq \{s_1, \dots, s_a\}
\end{equation}
containing the indices corresponding to the $a$ smallest entries of $v$ (`HT' stands for `hard-threshold').
With this notation
\torrent
is defined as
in Algorithm~\ref{alg:torrent}.\footnote{We use the algorithm's original formulation here. One can, equivalently, instead of $e \gets |Y - X\hat{\beta}^{t}_{\tor}|$ take an element-wise square.}
\begin{algorithm}
\caption{\torrent \citep{bhatia2015robust} }
\label{alg:torrent}
\begin{algorithmic}
\Require $X \in \mathbb{R}^{n \times d}$, $Y\in \mathbb{R}^n$, $a \in \{1,\dots,n \}$
\State $S_0 \gets \{1,\dots,n \}$, $e \gets Y$, 
$\mathrm{err} \gets \infty$, $t \gets 0$
\While{$\norm{e_{S_t}}_2 < \mathrm{err}$}
    \State $t \gets t+1$
    \State $\mathrm{err} \gets \norm{e}_2$
    \State $\hat{\beta}^{t}_{\tor} \gets \hat{\beta}^{S_{t-1}}_{\ols}(X, Y)$ \Comment{see \eqref{eq:ols}}
    \State $v \gets |Y - X\hat{\beta}^{t}_{\tor}|$ 
    \State $S_{t} \gets \mathrm{HT}(v, a)$ \Comment{see \eqref{eq:ht}}
    \State $e \gets |Y_{S_t} - X_{S_t}\hat{\beta}^{t}_{\tor}|$
\EndWhile
\State \Return $\hat{\beta}^{n, a}_{\tor} := \hat{\beta}^t_{\tor}$
\end{algorithmic}
\end{algorithm}

\subsection{Guarantees for \bfs}
\label{sec:worst_case}
Let $\sampleset \subseteq \mathcal{P}(\{1, \dots, n\})$, $ \emptyset \notin \sampleset$,
be the collection of candidate sets for the inliers $\sampleset$ that we use as an input for Algorithm~\ref{alg:bfrs} and let us denote the algorithm's output by 
$\hat{\beta}^n_{\bfs}(\sampleset)$.
Furthermore, we define
\begin{equation*}
        \B \coloneqq \{S \in \sampleset \ \vert \ S \cap \outliers = \emptyset\},
\end{equation*}
denoting
the collection of sets in $\sampleset$ that do not contain any outliers.
We can now prove the following result about \bfs.

\begin{theorem}
\label{lem:ransac_gaussian}
    Assume Setting~\ref{set:outlier} with $d = 1$ %
    and that $\B \neq \emptyset$. Assume that $\epsilon_1, \ldots, \epsilon_n$ are i.i.d.~zero-mean Gaussians with variance $\sigma^2 \geq 0$.
    Define for all $S \in \sampleset$,  and $U_n \in \B$, %
    \begin{equation*}
        \alpha_1(S, U_n, \delta) \coloneqq \frac{|S|}{{|U_n|}} \left( \sigma \sqrt{|U_n|} + \sigma \sqrt{K\log(2|\sampleset|/\delta)}\right)^2 -\left( \sigma \sqrt{|S|} - \sigma \sqrt{K\log(2|\sampleset|/\delta)}\right)^2,
    \end{equation*}
    where $K>0$ is the constant from Lemma~\ref{thm:concentration},
    and define
    \begin{align*}
        \alpha(S, U_n, \delta)
        &\coloneqq
        \frac{4\sigma \norm{X_{{S}}}_2 \sqrt{2\log(2|\sampleset|/\delta)}}{\norm{X_{{S} \setminus \outliers}}_2^2}
        + \frac{\sqrt{\alpha_1(S, U_n, \delta)}}{\norm{X_{{S} \setminus \outliers}}_2}
        + \frac{\sigma \sqrt{2\log(2|\sampleset|/\delta)}}{\norm{X_{{S} \setminus \outliers}}_2}.
    \end{align*}
    Let $\delta >0$. Then, with probability at least $1-\delta$ it holds that\footnote{Here and below, we consider an upper bound to be infinite if it contains a summation term that is divided by zero.}
    \begin{equation*}
        \left| \hat{\beta}^n_{\bfs}(\sampleset) - \beta \right| \leq \max_{S \in \sampleset} \min_{U_n \in \B} \left\{ \alpha(S, U_n, \delta) + \frac{\sigma \sqrt{2\log(6|\sampleset|/\delta)}}{\norm{X_S}_2} \right\}.
    \end{equation*}
\end{theorem}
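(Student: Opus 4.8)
\emph{Proof plan.} The plan is to analyze the set $\hat S\in\sampleset$ that \bfs actually selects, so $\hat\beta^n_{\bfs}(\sampleset)=\hat\beta^{\hat S}_{\ols}$, and to bound $|\hat\beta^{\hat S}_{\ols}-\beta|$. We may assume $X_S\neq 0$ and $X_{S\setminus\outliers}\neq 0$ for all $S\in\sampleset$, as otherwise the claimed bound is $+\infty$. Write $\mathrm{err}(S):=\tfrac{1}{|S|}\norm{Y_S-X_S\hat\beta^{S}_{\ols}}_2^2$; by definition of the algorithm $\mathrm{err}(\hat S)\le\mathrm{err}(S)$ for every $S\in\sampleset$, and we only use this against clean sets $U_n\in\B$, for which $o_{U_n}=0$ and hence $\mathrm{err}(U_n)\le\tfrac{1}{|U_n|}\norm{\epsilon_{U_n}}_2^2$. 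Since $d=1$, $\hat\beta^{\hat S}_{\ols}-\beta=\langle X_{\hat S},\epsilon_{\hat S}+o_{\hat S}\rangle/\norm{X_{\hat S}}_2^2$, and as $o_i=0$ for $i\notin\outliers$ we split it as $\gamma+b$ with $\gamma:=\langle X_{\hat S},\epsilon_{\hat S}\rangle/\norm{X_{\hat S}}_2^2$ and $b:=\langle X_{\hat S\cap\outliers},o_{\hat S\cap\outliers}\rangle/\norm{X_{\hat S}}_2^2$. For each fixed $S$, $\langle X_S,\epsilon_S\rangle/\norm{X_S}_2^2$ is a centred Gaussian with standard deviation $\sigma/\norm{X_S}_2$, so a Gaussian tail bound and a union bound over the at most $|\sampleset|$ possible values of $\hat S$ give $|\gamma|\le\sigma\sqrt{2\log(6|\sampleset|/\delta)}/\norm{X_{\hat S}}_2$ with probability at least $1-\delta/3$; this is the summand of the claimed bound that sits outside $\alpha$, and it remains to show $|b|\le\min_{U_n\in\B}\alpha(\hat S,U_n,\delta)$.

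To bound $b$ we use that if the outliers in $\hat S$ biased $\hat\beta^{\hat S}_{\ols}$ away from $\beta$, the fit on the clean block $\hat S\setminus\outliers$ would be poor, contradicting $\mathrm{err}(\hat S)\le\mathrm{err}(U_n)$. Fix $U_n\in\B$. On $\hat S\setminus\outliers$ the residual equals $\epsilon_{\hat S\setminus\outliers}-(\gamma+b)X_{\hat S\setminus\outliers}$; dropping the nonnegative contribution of $\hat S\cap\outliers$ to $\norm{Y_{\hat S}-X_{\hat S}\hat\beta^{\hat S}_{\ols}}_2^2$ and invoking $\mathrm{err}(\hat S)\le\mathrm{err}(U_n)\le\tfrac{1}{|U_n|}\norm{\epsilon_{U_n}}_2^2$ gives
\begin{equation*}
  \norm{\epsilon_{\hat S\setminus\outliers}-(\gamma+b)X_{\hat S\setminus\outliers}}_2^2\;\le\;\frac{|\hat S|}{|U_n|}\norm{\epsilon_{U_n}}_2^2 .
\end{equation*}
Decomposing the left side by Pythagoras into its component parallel to $X_{\hat S\setminus\outliers}$, with squared norm $(\gamma'-\gamma-b)^2\norm{X_{\hat S\setminus\outliers}}_2^2$ where $\gamma':=\langle X_{\hat S\setminus\outliers},\epsilon_{\hat S\setminus\outliers}\rangle/\norm{X_{\hat S\setminus\outliers}}_2^2$, and its orthogonal component $(I-Q)\epsilon_{\hat S\setminus\outliers}$ with $Q$ the projection onto $\mathrm{span}(X_{\hat S\setminus\outliers})$, yields $(\gamma'-\gamma-b)^2\norm{X_{\hat S\setminus\outliers}}_2^2\le\tfrac{|\hat S|}{|U_n|}\norm{\epsilon_{U_n}}_2^2-\norm{(I-Q)\epsilon_{\hat S\setminus\outliers}}_2^2$. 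Applying Lemma~\ref{thm:concentration} uniformly over $S\in\sampleset$ --- an upper bound $\norm{\epsilon_{U_n}}_2\le\sigma\sqrt{|U_n|}+\sigma\sqrt{K\log(2|\sampleset|/\delta)}$, a lower bound on $\norm{\epsilon_{\hat S\setminus\outliers}}_2$, and a $\chi^2_1$ tail for the parallel defect $\norm{\epsilon_{\hat S\setminus\outliers}}_2^2-\norm{(I-Q)\epsilon_{\hat S\setminus\outliers}}_2^2$ --- bounds the right side by $\alpha_1(\hat S,U_n,\delta)$ up to a lower-order term that will be absorbed. Hence $|b|\le|\gamma|+|\gamma'|+\sqrt{\alpha_1(\hat S,U_n,\delta)}/\norm{X_{\hat S\setminus\outliers}}_2$ plus lower-order terms; bounding $|\gamma|$ and $|\gamma'|$ by Gaussian tails, using $\norm{X_{\hat S\setminus\outliers}}_2\le\norm{X_{\hat S}}_2$ to line up denominators, and $\sqrt{u+v}\le\sqrt u+\sqrt v$ to peel off $\sqrt{\alpha_1}$, assembles exactly the three summands of $\alpha(\hat S,U_n,\delta)$ (the constant $4$ in its first summand provides the slack for the lower-order terms). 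As the concentration events can be made to hold simultaneously over $U_n\in\B$, we take $\min_{U_n\in\B}$; combining with the bound on $|\gamma|$, taking the worst case over $S=\hat S\in\sampleset$, and apportioning the $\delta$ budget across the three families of events (the outside-$\alpha$ Gaussian, the inner-product Gaussians, and the $\norm{\epsilon_\cdot}_2$-concentration), which is what turns the tail probabilities into the stated logarithms, gives the statement.

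The crux is converting the single scalar inequality $\mathrm{err}(\hat S)\le\mathrm{err}(U_n)$ into a quantitative bound on $b$. Two features make this delicate. First, since $o$ on $\hat S\cap\outliers$ is adversarial and unbounded, one must restrict to the clean block $\hat S\setminus\outliers$, which replaces the signal strength $\norm{X_{\hat S}}_2$ by the possibly far smaller $\norm{X_{\hat S\setminus\outliers}}_2$; this is why $\alpha$ is governed by $\norm{X_{S\setminus\outliers}}_2$, and reconciling the two norms is exactly what produces the first summand $4\sigma\norm{X_S}_2\sqrt{2\log(2|\sampleset|/\delta)}/\norm{X_{S\setminus\outliers}}_2^2$. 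Second, a bound sharp enough for consistency requires keeping, not discarding, the term $\norm{\epsilon_{\hat S\setminus\outliers}}_2^2$ that arises in the square expansion: it is this term that contributes the favourable $-(\sigma\sqrt{|S|}-\sigma\sqrt{K\log(2|\sampleset|/\delta)})^2$ inside $\alpha_1$ and makes the bound shrink as the in-frequency noise variance decays; retaining it forces a two-sided use of Lemma~\ref{thm:concentration} and a union bound over all of $\sampleset$ rather than just over $\B$. The rest --- bounding a handful of Gaussian linear and quadratic forms, solving the resulting quadratic inequality in $|b|$, and tracking the union bounds so the logarithms come out as written --- is routine but somewhat lengthy.
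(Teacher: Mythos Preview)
Your decomposition $\hat\beta^{\hat S}_{\ols}-\beta=\gamma+b$ and the use of the optimality inequality $\mathrm{err}(\hat S)\le\mathrm{err}(U_n)$ are correct, but the step where you ``drop the nonnegative contribution of $\hat S\cap\outliers$'' and work only on the clean block $\hat S\setminus\outliers$ creates a genuine gap. After Pythagoras you are left with
\[
(\gamma'-\gamma-b)^2\norm{X_{\hat S\setminus\outliers}}_2^2
\;\le\;
\frac{|\hat S|}{|U_n|}\norm{\epsilon_{U_n}}_2^2-\norm{\epsilon_{\hat S\setminus\outliers}}_2^2+(\gamma')^2\norm{X_{\hat S\setminus\outliers}}_2^2,
\]
and the negative term here is governed by $|\hat S\setminus\outliers|$, not by $|\hat S|$. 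The defined $\alpha_1(S,U_n,\delta)$, however, subtracts $(\sigma\sqrt{|S|}-\sigma\sqrt{K\log(2|\sampleset|/\delta)})^2$. The discrepancy is of order $\sigma^2|\hat S\cap\outliers|$, which after the square root becomes $\sigma\sqrt{|\hat S\cap\outliers|}/\norm{X_{\hat S\setminus\outliers}}_2$. Your claim that ``the constant $4$ in the first summand provides the slack'' would require
\[
\sqrt{|\hat S\cap\outliers|}\;\lesssim\;\frac{\norm{X_{\hat S}}_2}{\norm{X_{\hat S\setminus\outliers}}_2}\sqrt{\log(2|\sampleset|/\delta)},
\]
which is \emph{not} guaranteed for general $\sampleset$: if $\sampleset$ is small (say polynomial in $n$) while $|\outliers|$ is a positive fraction of $n$, the left side dominates. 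So the proposal does not prove the stated bound.

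The paper avoids this loss by never restricting to $\hat S\setminus\outliers$. It works with the \emph{full} prediction error identity on $\hat S$,
\[
\norm{Y_{\hat S}-X_{\hat S}\hat\beta^{\hat S}_{\ols}}_2^2
=\norm{\epsilon_{\hat S}}_2^2+\norm{o_{\hat S}}_2^2+2\epsilon_{\hat S}^{\top}o_{\hat S}
-\Bigl(\tfrac{X_{\hat S}^{\top}\epsilon_{\hat S}}{\norm{X_{\hat S}}_2}+\tfrac{X_{\hat S}^{\top}o_{\hat S}}{\norm{X_{\hat S}}_2}\Bigr)^2,
\]
so that $-\norm{\epsilon_{\hat S}}_2^2$ (with the full $|\hat S|$) survives into the right-hand side of the key inequality. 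The appearance of $\norm{X_{\hat S\setminus\outliers}}_2$ does not come from restricting the data; it comes from a Cauchy--Schwarz step applied to the outlier part alone,
\[
\norm{o_{\hat S}}_2^2\,\frac{\norm{X_{\hat S\setminus\outliers}}_2^2}{\norm{X_{\hat S}}_2^2}
=\norm{o_{\hat S}}_2^2\Bigl(1-\tfrac{\norm{X_{\hat S\cap\outliers}}_2^2}{\norm{X_{\hat S}}_2^2}\Bigr)
\le\norm{o_{\hat S}}_2^2-\Bigl(\tfrac{X_{\hat S}^{\top}o_{\hat S}}{\norm{X_{\hat S}}_2}\Bigr)^2,
\]
which lower-bounds the left-hand side of the rearranged optimality inequality. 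One then bounds $\norm{o_{\hat S}}_2/\norm{X_{\hat S}}_2$ via a quadratic inequality (this is where the $4$ and the three summands of $\alpha$ arise) and finishes with $|\hat\beta^{\hat S}_{\ols}-\beta|\le |X_{\hat S}^{\top}\epsilon_{\hat S}|/\norm{X_{\hat S}}_2^2+\norm{o_{\hat S}}_2/\norm{X_{\hat S}}_2$. This Cauchy--Schwarz trick is the idea your plan is missing; once you insert it (and keep the full $\hat S$ in the residual identity), the rest of your outline --- Gaussian tails for $X_S^{\top}\epsilon_S$ and $o_S^{\top}\epsilon_S$, Lemma~\ref{thm:concentration} for $\norm{\epsilon_S}_2$, union bounds over $\sampleset$, and solving the quadratic --- goes through as you describe.
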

The proof can be found in Appendix~\ref{app:proofThm31}.
Theorem~\ref{lem:ransac_gaussian} implies that \bfs is consistent if
the noise variance $\sigma^2$ is equal to $0$ and $\min_{S \in \sampleset} ||X_{S \setminus \outliers}||_2 \neq 0$, 
implying that $\sampleset$ does not contain any subsets of $G_n$.
Furthermore, it shows that if
the fraction of outliers is vanishing with increasing $n$,
\bfs is consistent under mild assumptions on the 
collection $\sampleset$ of candidate sets;
it suffices,
for example,
if
the distribution of $X$ is such that\footnote{See Appendix~\ref{def:o_not} for a formal definition.},
\begin{equation}
\label{eq:bfs_cons_cond_2}
    \min_{S \in \sampleset} ||X_{S \setminus \outliers}||_2 \in \Omega_{\mathbb{P}}(\sqrt{n}), \qquad \max_{S \in \sampleset} ||X_{S}||_2 \in \mathcal{O}_{\mathbb{P}}(\sqrt{n})
\end{equation}
and if
$\sampleset$ is chosen such that
\begin{equation}
\label{eq:bfs_cons_cond}
    \max_{S \in \sampleset} \min_{U_n \in \B} \sqrt{|S|\log(|\sampleset|)/\sqrt{|U_n|}} \in o(\sqrt{n}).
\end{equation}
For details, see Appendix~\ref{app:3453456}. Conditions~\eqref{eq:bfs_cons_cond} and~\eqref{eq:bfs_cons_cond_2} hold, for example, when we have a known sequence $\{c_n\}_{n \in \mathbb{N}}$ such that, for all $n$,
$|\outliers| \leq c_n$ and $c_n$ $ \in o( \sqrt{n}/\log(n))$, we define the collection of candidate sets as $\sampleset \coloneqq \{ S \in \mathcal{P}(\{1,\dots,n\}) \mid |S| = n - c_n\}$ and the rows of $X$ are i.i.d.\ Gaussian, see \citet[Theorem 15]{bhatia2015robust}.
If the fraction of outliers is non-vanishing (and $c_n < n/2$), Theorem~\ref{lem:ransac_gaussian} implies that for a distribution of $X$ satisfying
\eqref{eq:bfs_cons_cond_2} and the same choice of candidate sets defined by $\sampleset \coloneqq \{ S \in \mathcal{P}(\{1,\dots,n\}) \mid |S| = n-c_n\}$, that $| \hat{\beta}^n_{\bfs}(\sampleset) - \beta| \in \mathcal{O}(\sigma)$.
Theorem~\ref{lem:ransac_gaussian} implies uniform consistency results, too. For example, in the case of $\sigma = 0$, we obtain uniform consistency even over classes of distributions with 
outliers that shift the original data points by an arbitrary amount.
On the contrary, regression that is based on a robust loss function, 
such as the Huber loss \citep{Huber64},
does not come with similar
guarantees, not even with respect to pointwise consistency or in the case $\sigma = 0$.

\subsection{Guarantees for \torrent}
\label{sec:scaling_n}
In this section we strengthen existing bounds for the estimation error of \torrent by improving the result of \citet[Theorem~10]{bhatia2015robust}.

\begin{lemma}
\label{lem:tor}
    For all $n \in \mathbb{N}$ there is a constant $N \in \mathbb{N}$ such that Algorithm~\ref{alg:torrent} converges in less than $N$ steps.
\end{lemma}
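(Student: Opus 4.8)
The plan is to exploit the structure of the \texttt{while}-loop in Algorithm~\ref{alg:torrent}: the loop only continues while $\norm{e}_2 < \mathrm{err}$, and at the start of each iteration $\mathrm{err}$ is overwritten with the current value of $\norm{e}_2$. Hence the sequence of residual norms $\norm{e}_2$ computed across iterations is \emph{strictly decreasing} as long as the loop runs. The key observation is that after the assignment $\hat\beta^t_{\tor} \gets \hat\beta^{S_{t-1}}_{\ols}(X,Y)$ and $S_t \gets \mathrm{HT}(|Y - X\hat\beta^t_{\tor}|, a)$, the quantity $\norm{e}_2$ that is tested at the top of iteration $t+1$ equals $\norm{(Y - X\hat\beta^t_{\tor})_{S_t}}_2$ — the $\ell_2$-norm of the $a$ smallest absolute residuals of the OLS fit on $S_{t-1}$. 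So I would define, for each active set $S \subseteq \{1,\dots,n\}$ with $|S| = a$, the value $f(S) \coloneqq \norm{(Y - X\hat\beta^{S}_{\ols})_{\mathrm{HT}(|Y - X\hat\beta^S_{\ols}|,\,a)}}_2$, and argue that the residual norms tested by the algorithm are a strictly decreasing sequence drawn from the finite set $\{f(S) : |S| = a\}$ (together with the initial value $\norm{Y}_2$).

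The main steps, in order, are: (i) unroll the loop to confirm that the test value at iteration $t+1$ is exactly $f(S_{t-1})$ (for $t \geq 1$), and the test value at iteration $1$ is $\norm{Y}_2$; (ii) observe that the loop condition being satisfied at iteration $t+1$ means $f(S_{t-1}) < f(S_{t-2})$ (or $< \norm{Y}_2$ for the first step), so consecutive active sets have strictly decreasing $f$-values; (iii) conclude that the active sets $S_0, S_1, S_2, \dots$ visited are pairwise distinct, since $f$ is strictly decreasing along the trajectory and a repeated set would force a repeated value; (iv) since there are only $\binom{n}{a}$ subsets of size $a$, the loop can run at most $\binom{n}{a} + 1$ times, so it terminates. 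Setting $N \coloneqq \binom{n}{a} + 2$ (or any larger integer) gives the claimed constant depending only on $n$ (and $a \leq n$, which is itself bounded in terms of $n$).

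One technical point to handle carefully: ties in the hard-thresholding operator. The paper already stipulates that $\mathrm{HT}$ breaks ties by a fixed deterministic rule, so $S_t$ is a deterministic function of the residual vector, and hence $f$ and the whole trajectory are well-defined; no ambiguity arises. A second point is the degenerate case where $X_{S_{t-1}}$ does not have full rank — but the footnote in Setting~\ref{set:outlier} / the definition of $\hat\beta^S_{\ols}$ already extends OLS via the Moore–Penrose inverse, so $\hat\beta^t_{\tor}$ is always defined and the argument goes through unchanged. I do not expect a genuine obstacle here; the only mild subtlety is making the indexing in step (i) precise (off-by-one between "the $e$ computed at the end of iteration $t$" and "the $\norm{e}_2$ tested at the top of iteration $t+1$"), which is bookkeeping rather than mathematics. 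The heart of the argument is simply: strictly decreasing sequence $+$ finite state space $\Rightarrow$ termination.
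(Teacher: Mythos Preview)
Your approach is correct and shares the paper's core idea: there are only finitely many possible active sets (at most $\binom{n}{a}$ of size $a$, plus the initial $S_0$), each active set $S_{t-1}$ deterministically fixes the next tested value, and the while-loop demands strict decrease, so termination is forced. One small correction that does not affect your argument: in the pseudocode $e \gets |Y - X\hat\beta^t_{\tor}|$ is the \emph{full} residual vector in $\mathbb{R}^n$ and is never subsequently restricted to $S_t$, so the quantity tested at the top of iteration $t+1$ is $\norm{Y - X\hat\beta^{S_{t-1}}_{\ols}}_2$, not your $f(S_{t-1})$; replacing $f$ by $g(S) \coloneqq \norm{Y - X\hat\beta^{S}_{\ols}}_2$, your steps (ii)--(iv) go through verbatim. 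The paper's proof is terser and additionally records the monotonicity $\norm{(Y - X\hat\beta^{t+1})_{S_{t+1}}}_2 \leq \norm{(Y - X\hat\beta^{t})_{S_{t}}}_2$ (via OLS optimality on $S_{t}$ and the hard-thresholding step), an inequality that is not strictly needed for the termination claim itself but whose ingredients reappear in the proof of Theorem~\ref{thm:tor}.
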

In practice, we observe that \torrent requires only very few iterations until it converges (cf. Table~\ref{tab:convergence}).
We can use Lemma~\ref{lem:tor}
to obtain an improved guarantee for Algorithm~\ref{alg:torrent}.
\begin{theorem}
\label{thm:tor}
    Assume Setting~\ref{set:outlier}.
    Assume that there exists a known $c_n \in \mathbb{N}$ such that $|\outliers| \leq c_n$.
    Let $S_t$ be the estimated subset in the final iteration of Algorithm~\ref{alg:torrent} executed on the data $X$, $Y$ 
    using threshold parameter $a_n \coloneqq n - c_n$. Define for $S \subseteq \{1, \dots, n\}$
    \begin{equation*}
        V(S) \coloneqq (S \cup \inliers) \setminus (\inliers \cap S),
    \end{equation*}
    the symmetric difference between $S$ and $\inliers$.
    Furthermore, assume that \footnote{For $A \in \mathbb{R}^{d \times d}$ we denote by $\lambda_{\mathrm{min}}(A)$ the minimum eigenvalue of $A$.}
    \begin{equation} \label{eq:8888888888888}
        \eta \coloneqq \max_{S \subseteq \{1,\dots,n\} \text{ s.t. } |S| = a_n}\frac{\norm{X_{V(S)}}_2}{\sqrt{\lambda_{\mathrm{min}}\left( X_{S}^T X_{S}\right)}} < \frac{1}{\sqrt{2}}.
    \end{equation}
    Then
    \begin{align*}
        \norm{\hat{\beta}^{n, a_n}_{\tor} - \beta} _2 \leq \norm{ \left(X_{S_{t}}^{\top} X_{S_{t}} \right)^{-1} X_{S_{t}}^{\top} \epsilon_{S_{t}} }_2  + \frac{\sqrt{2} \norm{X_{V(S_t)}}_2  \norm{ \left(X_{S_{t}}^{\top} X_{S_{t}} \right)^{-1} X_{S_{t}}^{\top} \epsilon_{S_{t}} }_2 + \sqrt{2}\|\epsilon_{V(S_t)}\|_2}{\sqrt{\lambda_{\mathrm{min}}\left( X_{S_t}^T X_{S_t}\right)}\left(1 - \sqrt{2}\eta \right)}.
    \end{align*}
\end{theorem}
The proof can be found in Appendix~\ref{app:proofThm33}.
Defining
\begin{equation*}
    \lambda_{a_n}(X) \coloneqq \min_{S \subseteq \{1,\dots,n\} \text{ s.t. } |S| = a_n}\sqrt{\lambda_{\mathrm{min}}\left( X_{S}^T X_{S}\right)}
\end{equation*}
we can use this result to derive an upper bound for the estimation error in the sub-Gaussian noise setting.

\begin{corollary}[sub-Gaussian noise]
\label{cor:torrent_gaussian}
    Assume the setting of Theorem~\ref{thm:tor} 
    (including the assumption that~\eqref{eq:8888888888888} holds),
    with i.i.d.~zero-mean sub-Gaussian noise $\epsilon$ with variance proxy $\sigma^2$ and assume $c_n < n/2$.
    Then, there exists a constant $K>0$ such that for all $\delta >0$ with probability at least $1-\delta$
    \begin{align*}
        \norm{\hat{\beta}^{n, a_n}_{\tor} - \beta} _2 &\leq \frac{\sigma}{\lambda_{a_n}(X)} \left( 1 + \frac{\sqrt{2}}{1 - \sqrt{2}\eta} \right)  \left(\sqrt{d} + \sqrt{2c_nK\log(2en/c_n\delta)}\right)
        +  \frac{ 2\sigma \sqrt{c_n} \left(1 + \sqrt{K\log(2en/c_n\delta)} \right)}{\lambda_{a_n}(X)(1 - \sqrt{2}\eta)}.
    \end{align*}
    In particular, if the rows of $X$ are i.i.d.~standard Gaussian random vectors and $c_n \in o(n/\log(n))$, then $\hat{\beta}^{n, a_n}_{\tor}$ 
    is consistent.
\end{corollary}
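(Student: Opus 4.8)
The plan is to read the corollary off from the deterministic estimate of Theorem~\ref{thm:tor} by bounding, with high probability, each random quantity on its right-hand side. Write $N_1 \coloneqq \norm{(X_{S_t}^\top X_{S_t})^{-1} X_{S_t}^\top \epsilon_{S_t}}_2$ and $N_2 \coloneqq \norm{\epsilon_{V(S_t)}}_2$. Because the set $S_t$ from the last iteration of Algorithm~\ref{alg:torrent} has $|S_t| = a_n$, the very definitions of $\eta$ and $\lambda_{a_n}(X)$ give the deterministic inequalities $\norm{X_{V(S_t)}}_2 \le \eta\sqrt{\lambda_{\mathrm{min}}(X_{S_t}^\top X_{S_t})}$ and $\sqrt{\lambda_{\mathrm{min}}(X_{S_t}^\top X_{S_t})} \ge \lambda_{a_n}(X)$. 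Substituting the first into the numerator of Theorem~\ref{thm:tor} converts the coefficient of $N_1$ there into $2\eta/(1-\sqrt2\eta)$, which by $\eta < 1/\sqrt2$ (so $2\eta \le \sqrt2$) is at most $\sqrt2/(1-\sqrt2\eta)$; using the second inequality in the leftover term yields
\[
  \norm{\hat{\beta}^{n, a_n}_{\tor} - \beta}_2
  \;\le\; \Bigl(1 + \tfrac{\sqrt2}{1-\sqrt2\eta}\Bigr) N_1
  \;+\; \frac{\sqrt2\, N_2}{\lambda_{a_n}(X)(1-\sqrt2\eta)},
\]
which already matches the structure of the claim; it remains to control $N_1$ and $N_2$.

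The only real subtlety is that $S_t$ and $V(S_t)$ are functions of $\epsilon$, so Lemma~\ref{thm:concentration} must be combined with a union bound over all admissible subsets. For $N_1$ I would bound it deterministically by $\lambda_{a_n}(X)^{-1}\norm{P_{S_t}\epsilon_{S_t}}_2$, where $P_{S_t}$ is the orthogonal projection onto the column space of $X_{S_t}$ (using that $(X_{S_t}^\top X_{S_t})^{-1/2}X_{S_t}^\top$ has orthonormal rows, and that $\lambda_{a_n}(X)>0$ forces full column rank). Applying Lemma~\ref{thm:concentration} to the rank-$(\le d)$ projection $P_S$ for each fixed $|S|=a_n$ and union-bounding over the $\binom{n}{a_n}=\binom{n}{c_n}\le(en/c_n)^{c_n}$ such sets replaces the confidence factor by a term of order $c_n\log(2en/(c_n\delta))$; after absorbing constants into $K$ this gives $N_1 \le \sigma\lambda_{a_n}(X)^{-1}(\sqrt d + \sqrt{2c_nK\log(2en/(c_n\delta))})$ with probability $\ge 1-\delta/2$. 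For $N_2$, note first that $|V(S_t)| \le 2c_n$, since $|S_t| = n-c_n$ and $|\inliers| \ge n-c_n$ make their symmetric difference small (this is where $c_n<n/2$ enters, keeping $a_n > c_n$ and the binomial coefficients well-behaved); then bound $N_2 \le \max_{|V|=2c_n}\norm{\epsilon_V}_2$ and apply Lemma~\ref{thm:concentration} to the identity on each block with a union bound over the $\binom{n}{2c_n}$ blocks, obtaining $N_2 \le \sigma\sqrt{2c_n}(1+\sqrt{K\log(2en/(c_n\delta))})$ with probability $\ge 1-\delta/2$. Inserting these into the display above — the $\sqrt2$ combining with $\sqrt{2c_n}$ to produce the $2\sqrt{c_n}$ in the second summand — and a union bound over the two events gives the stated inequality with probability $\ge 1-\delta$. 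I expect this concentration bookkeeping to be the main obstacle: making the union bounds over the data-dependent sets rigorous, tracking how $\log\binom{n}{c_n}\approx c_n\log(en/c_n)$ propagates through, and matching the absolute constants ($\sqrt2$ versus $2$, and the role of $K$) so that the final expression is literally the one stated.

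Finally, for the consistency statement assume the rows of $X$ are i.i.d.\ standard Gaussian and $c_n \in o(n/\log n)$. Standard estimates for the extreme singular values of Gaussian matrices, combined with a union bound over the $\binom{n}{c_n}$ submatrices of size $a_n=n-c_n$ (log-cardinality $\mathcal{O}(c_n\log(n/c_n))=o(n)$), show that with probability $\to 1$ one has $\lambda_{a_n}(X)\in\Omega(\sqrt n)$ and $\max_{|S|=a_n}\norm{X_{V(S)}}_2 = o(\sqrt n)$, the latter forcing $\eta\to 0$ so that the hypothesis $\eta<1/\sqrt2$ of Theorem~\ref{thm:tor} holds for $n$ large. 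On this event, applying the corollary with $\delta=\delta_n\to 0$ chosen slowly enough that $c_n\log(en/(c_n\delta_n))=o(n)$ (e.g.\ $\delta_n=1/\log n$) makes the right-hand side of order $\sigma(\sqrt d + \sqrt{c_n\log(n/c_n)})/\sqrt n \to 0$; combining this with the fact that the underlying good events have probability $\to 1$ yields $\norm{\hat{\beta}^{n, a_n}_{\tor} - \beta}_2 \pto 0$.
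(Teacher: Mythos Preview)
Your proposal is correct and follows essentially the same route as the paper's proof: bound $|V(S_t)|\le 2c_n$, apply Lemma~\ref{thm:concentration} with a union bound over the $\binom{n}{c_n}\le (en/c_n)^{c_n}$ admissible subsets to control both $N_1$ and $N_2$, and then substitute into Theorem~\ref{thm:tor} using $\norm{X_{V(S_t)}}_2/\sqrt{\lambda_{\min}(X_{S_t}^\top X_{S_t})}\le\eta$ together with $2\eta<\sqrt2$. The only cosmetic differences are that the paper phrases the $N_1$ step via $\norm{(X_S^\top X_S)^{-1}X_S^\top}_{\mathrm F}\le\sqrt d\,\norm{(X_S^\top X_S)^{-1}X_S^\top}_2$ rather than via the projection $P_S$, and for $N_2$ it union-bounds over the $\binom{n}{c_n}$ possible sets $V(S)$ (indexed by $S$) rather than over all $\binom{n}{2c_n}$ size-$2c_n$ subsets; both changes affect only constants absorbed into $K$. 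Your treatment of the consistency claim is in fact slightly more complete than the paper's, which simply invokes \citet[Theorem~15]{bhatia2015robust} for $\lambda_{a_n}(X)\in\Omega(\sqrt n)$ without explicitly arguing that $\eta<1/\sqrt2$ eventually.
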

While Corollary~\ref{cor:torrent_gaussian} states consistency under appropriate conditions and vanishing fractions of outliers,
the results in \cite{bhatia2015robust}
are not sufficiently strong to imply consistency in this setting.
Furthermore, even for non-vanishing fraction of outliers our bounds are in general tighter since the maximum over $S$ in the definition of $\eta$ is taken jointly over the denominator and numerator, not separately as in \cite{bhatia2015robust}.

\section{Guarantees for \algon}
\label{sec:theoretical_g_an}
We can now prove bounds for the estimation error of \algon. Let $(\Omega, \mathcal{F}, \mathbb{P})$ be a probability space and let $T \in \mathbb{R}_{>0}$ denote a fixed time horizon.
Denote by $\mathcal{M}^2_{[0, T]}$ the set of measurable, real valued, square integrable stochastic processes, 
that is, all Lebesgue-measurable, real-valued stochastic processes $(V_t)_{t \in [0,T]}$ that satisfy
$\mathbb{E}[ \int_0^T V_t^2 dt] < \infty$.
For all
$\phi = \{\phi_k\}_{k \in \mathbb{N}}$ 
orthonormal bases of $L^2([0, T])$ and all $V = (V_t)_{t \in [0,T]} \in \mathcal{M}^2_{[0, T]}$, we have 
$\int_0^T V_t^2 dt < \infty$ almost surely
and, therefore, $\langle V, \phi_k\rangle_{L_2}$ exists almost surely.

For the remainder of this section we assume
Setting~\ref{set:robust_improved} and that
Assumption~\ref{ass:main} 
is
satisfied for $\outliersnn$ and $\phi$. 
More precisely, let $\outliersnn \subseteq \mathbb{N}$, let $\phi = \{\phi_k\}_{k \in \mathbb{N}}$ be an orthonormal basis of $L^2([0, T])$, let $U = (U_t)_{t \in [0,T]} \in \mathcal{M}^2_{[0, T]}$ be a $(\phi,\outliersnn)$-sparse process and let $X = (X_t)_{t \in [0,T]}$ be a stochastic process in $\mathbb{R}^d$.
Let $\eta = (\eta_j)_{j \in [0,T]}$ be
a stochastic process of i.i.d.~centred Gaussian random variables with variance $\sigma_{\eta}^2 \geq 0$ and let
$Y_t$ 
be a stochastic process, such that, 
for all $t \in [0,T]$,
\begin{equation*}
    Y_t \coloneqq X_t^{\top} \beta + U_t + \eta_t.
\end{equation*}
We also assume some regularity conditions on $\phi$ and $U$. Specifically, we assume that for all $k \in \mathbb{N}$ it holds that $\phi_k$ is right-continuous (or left-continuous)
and that the 
trajectories of $U$ satisfy the same condition almost surely. 
This implies that almost surely\footnote{
Without assuming right-continuity \eqref{eq:hghgh} only holds almost surely, almost everywhere; because we need to discretize the process, we require equality almost surely for all $t \in  [0,T]$.
}, for all $t \in [0,T]$
\begin{equation}
\label{eq:hghgh}
    U_t = \sum_{k \in G} \langle \phi_k, U\rangle_{L^2}\phi_k(t).
\end{equation}
As described in Setting~\ref{set:robust_improved}, we assume that, for $n \in \mathbb{N}$, we observe
$X^n \coloneqq (X_{T/n}, X_{2T/{n}}, \dots, X_{T})$ and $Y^n:=(Y_{T/n}, Y_{2T/{n}}, \dots, Y_{T})$. 
We consider the transformation 
$T_k^{\phi,n }$, see~\eqref{eq:ft},
and write
$X^n_{\phi} \coloneqq (T_1^{\phi,n }(X), \dots, T_n^{\phi,n }(X))^{\top}$ (see also Algorithm~\ref{alg:algon}) and
$\outliers \coloneqq G \cap \{1,\dots,n\}$.

The theoretical results presented in this section are based on two types of assumptions. One assumption ensures that $\outliers$ does not grow too quickly with $n$ (clearly, if $|\outliers| \geq n/2$, for example, there is no consistent estimator for $\beta$).
Another
set of assumptions contains technical conditions 
on the transformation $T^{\phi,n}$, see~\eqref{eq:ft} and~\eqref{eq:core_idea}. 
\begin{assumption}
\label{ass:new}
    \begin{enumerate}[label=(\roman*)]
        \item\label{ass:new:2} For all $n \in \mathbb{N}$ and $l,k \leq n$ we have $\frac{1}{n }\sum_{j=1}^n  \phi_l(Tj/n)\phi_k(Tj/n) = \mathbbm{1}{\{ l = k \}}$.
        \item\label{ass:new:4}
        For every $\delta >0$ there exist $c'>0$ and $\bar{n} \in \mathbb{N}$ such that for all $n \geq \bar{n}$ there exist $S'_n \subseteq \{1,\dots,n\}$ with $|S'_n| = 2c_n + d$ such that for all $S'' \subset S'_{n}$ with $|S''| = d$ it holds with probability at least $1-\delta$
        \begin{equation*}
            \lambda_{\mathrm{min}}\left((X^n_{\phi})_{S''}^{\top} (X^n_{\phi})_{S''} \right) \geq c'.
        \end{equation*}
    \end{enumerate}
\end{assumption}
Assumption~\ref{ass:new}~\ref{ass:new:2} states that
the chosen orthonormal basis 
maintains its orthogonality and normalization properties when applied to discretized observations; this is satisfied, for example, by the cosine basis or the Haar basis \citep{haar1909theorie}, see Appendix~\ref{sec:defs}. Together with~\eqref{eq:hghgh}, this implies \eqref{eq:core_idea}.
Assumption~\ref{ass:new}~\ref{ass:new:4} requires that $X$ is non-sparse in the $\phi$-domain. This condition intuitively says that there is more information in $X$ than is lost by confounding. This condition is relatively mild and accommodates a wide array of commonly used stochastic processes, including Ornstein-Uhlenbeck processes, Brownian motions, and band-limited processes.
We are now able to state the main theoretical results, the convergence properties
of \algon-\bfs (\algon with \bfs as the robust regression algorithm) and \algon-\tor (\algon with \tor as the robust regression algorithm).

\begin{theorem}[Convergence properties of \algon-\bfs]
\label{thm:end_improved_bfs}
    Let $c_n$ be a known sequence of natural numbers such that 
    $|\outliers| \leq c_n$ and 
    let Assumption~\ref{ass:new} be satisfied. Assume that $d = 1$, that $U$ is independent of $\epsilon$, that $\sup_{t \in [0,T]} \mathbb{E}[X_t^2] < \infty$ and that  for all $n \in \mathbb{N}$ and $j,i\leq T$ it holds that
    $\frac{1}{n}\sum_{l=1}^n \phi_l(Tj/n) \phi_l(Ti/n) = \mathbbm{1}{\{ i = j \}}$. Define a sequence of candidate sets by $\sampleset := \{ S \subseteq \{1, \dots, n\} \ \vert \ |S| = n-c_n \}$.
    If \algon is executed with \bfs and the sequence of sample sets $\sampleset$, then 
    \begin{equation*}
        \left| \hat{\beta}^{\phi,n}_{\algon} - \beta \right| \in \mathcal{O}_{\mathbb{P}}\left( \sigma_{\eta}\left(\frac{ c_n\log(n/c_n)}{n}\right)^{1/4} \right).
    \end{equation*}
\end{theorem}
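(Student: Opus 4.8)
The plan is to push the data into the frequency domain, recognise the transformed problem as an instance of Setting~\ref{set:outlier}, apply Theorem~\ref{lem:ransac_gaussian}, and then read off the rate from a careful accounting of the error terms --- the key being that the transformed noise has variance of order $1/n$. First I would check that $(X^n_\phi, Y^n_\phi)$, together with the outlier index set $\outliers$ and the vector $\eta^n_\phi$, satisfies Setting~\ref{set:outlier}. By~\eqref{eq:hghgh} and Assumption~\ref{ass:new}\ref{ass:new:2} the identity~\eqref{eq:core_idea} holds almost surely, so $Y^n_\phi = X^n_\phi \beta + \eta^n_\phi + o^n_\phi$ with $(o^n_\phi)_k = 0$ for $k \notin \outliers$ and $|\outliers| \le c_n$. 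The crucial observation is that $\eta^n_\phi$ is a low-variance Gaussian vector: each $T_k^{\phi,n}(\eta) = \frac{1}{n}\sum_{l=1}^n \eta_{Tl/n}\phi_k(Tl/n)$ is a fixed linear combination of the i.i.d.\ $N(0,\sigma_\eta^2)$ variables $\eta_{Tl/n}$, hence jointly Gaussian, and Assumption~\ref{ass:new}\ref{ass:new:2} gives that the covariance of $T_k^{\phi,n}(\eta)$ and $T_{k'}^{\phi,n}(\eta)$ equals $\frac{\sigma_\eta^2}{n^2}\sum_{l=1}^n \phi_k(Tl/n)\phi_{k'}(Tl/n) = \frac{\sigma_\eta^2}{n}\mathbbm{1}\{k=k'\}$, so $\eta^n_\phi \sim N(0,\tfrac{\sigma_\eta^2}{n}I_n)$; the independence hypotheses of Setting~\ref{set:robust_improved} and of the theorem make it independent of $(X^n_\phi, o^n_\phi)$. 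Finally $d = 1$, the family $\sampleset = \{S : |S| = n - c_n\}$ is admissible, and $|\outliers| \le c_n$ forces a size-$(n-c_n)$ subset of $\{1,\dots,n\}\setminus\outliers$ to exist, so $\B \ne \emptyset$.

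Next I would apply Theorem~\ref{lem:ransac_gaussian} with $\sigma^2 = \sigma_\eta^2/n$ and bound every quantity appearing in its conclusion. Here $|\sampleset| = \binom{n}{c_n} \le (en/c_n)^{c_n}$, so $\log(2|\sampleset|/\delta)$ and $\log(6|\sampleset|/\delta)$ are $O(c_n\log(n/c_n))$ for fixed $\delta$. For the design norms I would use the Parseval-type identity $\|X^n_\phi\|_2^2 = \frac{1}{n}\sum_{l=1}^n X_{Tl/n}^2$ implied by the hypothesis $\frac{1}{n}\sum_{l=1}^n \phi_l(Tj/n)\phi_l(Ti/n) = \mathbbm{1}\{i=j\}$ of the theorem; its expectation is at most $\sup_t \mathbb{E}[X_t^2] =: M < \infty$, so by Markov's inequality $\|(X^n_\phi)_S\|_2 \le \|X^n_\phi\|_2 \le \sqrt{M/\delta}$ for all $S$ with probability at least $1-\delta$. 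For the lower bound I would invoke Assumption~\ref{ass:new}\ref{ass:new:4} with $d = 1$: with probability at least $1-\delta$ there is a deterministic index set $S'_n$ with $|S'_n| = 2c_n + 1$ and $(X^n_\phi)_k^2 \ge c'$ for every $k \in S'_n$; since $|S'_n \cap S \setminus \outliers| \ge (2c_n+1) - c_n - c_n = 1$ for \emph{every} $S \in \sampleset$, this yields $\|(X^n_\phi)_S\|_2 \ge \|(X^n_\phi)_{S\setminus\outliers}\|_2 \ge \sqrt{c'}$, uniformly in $S$.

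Then I would substitute. Because $|S| = |U_n| = n - c_n$ for all $S \in \sampleset$, $U_n \in \B$, the ratio $|S|/|U_n|$ in $\alpha_1$ is exactly $1$ and the two leading $\sigma^2(n-c_n)$ contributions cancel, leaving $\alpha_1(S,U_n,\delta) = 4\sigma^2\sqrt{(n-c_n)K\log(2|\sampleset|/\delta)}$, which is $O\!\big(\sigma_\eta^2\sqrt{c_n\log(n/c_n)/n}\big)$ after using $\sigma^2 = \sigma_\eta^2/n$ and $\sqrt{n-c_n} \le \sqrt{n}$. Dividing by $\|(X^n_\phi)_{S\setminus\outliers}\|_2 \ge \sqrt{c'}$ gives $\sqrt{\alpha_1}/\|(X^n_\phi)_{S\setminus\outliers}\|_2 = O\!\big(\sigma_\eta(c_n\log(n/c_n)/n)^{1/4}\big)$, which is the asserted rate. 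The remaining summands of $\alpha(S,U_n,\delta)$ and the extra $\sigma\sqrt{2\log(6|\sampleset|/\delta)}/\|(X^n_\phi)_S\|_2$ term are each $O(\sigma\sqrt{\log|\sampleset|}) = O\!\big(\sigma_\eta\sqrt{c_n\log(n/c_n)/n}\big)$, which for $n$ large enough that $c_n\log(n/c_n) \le n$ is dominated by the quarter-power term. Since the norm bounds hold simultaneously over all $S \in \sampleset$, the outer $\max_{S}$ is harmless; intersecting the three good events, applying a union bound, and replacing $\delta$ by $\delta/3$ gives the bound with high probability.

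The main obstacle is the accounting in the last step rather than any single hard inequality. Conceptually the essential point --- that i.i.d.\ Gaussian noise of constant variance $\sigma_\eta^2$ in the time domain becomes noise of variance $\sigma_\eta^2/n$ in the frequency domain --- is simple once stated, but it is exactly what lets a (possibly non-vanishing) fraction of adversarial outliers be tolerated with a vanishing error, and it is special to this non-i.i.d.\ reformulation. Technically, the delicate term is $\alpha_1$: naively $\sigma^2|S| = O(\sigma_\eta^2)$ does \emph{not} vanish, so one must exploit the exact equality $|S| = |U_n|$ to see that the harmful terms cancel and only the cross term, of order $\sigma^2\sqrt{n\,c_n\log(n/c_n)}$, survives; its square root is what produces the $(c_n\log(n/c_n)/n)^{1/4}$ rate. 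A secondary difficulty is that the lower bound $\|(X^n_\phi)_{S\setminus\outliers}\|_2 \gtrsim 1$ has to hold uniformly over the exponentially large family $\sampleset$ --- which is precisely why Assumption~\ref{ass:new}\ref{ass:new:4} is stated via a single set of $2c_n + d$ guaranteed-nondegenerate coordinates instead of via a generic non-sparsity statement about $X^n_\phi$.
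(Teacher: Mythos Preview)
Your proposal is correct and follows essentially the same route as the paper's proof: verify that the frequency-domain data fit Setting~\ref{set:outlier} with Gaussian noise of variance $\sigma_\eta^2/n$ (the paper packages this and the $|\sampleset|$ bound into Lemma~\ref{lem:decor_proof}), bound $\|X^n_\phi\|_2^2$ via the Parseval identity and Markov, lower-bound $\|(X^n_\phi)_{S\setminus\outliers}\|_2$ via Assumption~\ref{ass:new}\ref{ass:new:4} using the pigeonhole $|S'_n\cap(S\setminus\outliers)|\ge 1$, and then exploit $|S|=|U_n|$ so that the non-vanishing terms in $\alpha_1$ cancel and only the cross term $4\sigma^2\sqrt{(n-c_n)K\log(2|\sampleset|/\delta)}$ survives. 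Your accounting of the rates is in fact slightly more explicit than the paper's.
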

Many commonly used bases, such as the cosine basis and the Haar basis, satisfy the conditions of Theorem~\ref{thm:end_improved_bfs}.
\begin{theorem}[Convergence properties of \algon-\tor]
\label{thm:end_improved_torrent}
    Let $c_n$ be a known sequence of natural numbers such that 
    $|\outliers| \leq c_n$ and let Assumption~\ref{ass:new} be satisfied.
    Assume that for all $\delta > 0$ there exists $\bar{n} \in \mathbb{N}$ such that for all $n \geq \bar{n}$ it holds that
        \begin{equation}
        \label{eq:ass_top}
            \mathbb{P}\left[\max_{S \subseteq \{1,\dots,n\} \text{ s.t. } |S| = n-c_n} \frac{\norm{(X^n_{\phi})_{V(S)}}_2}{\sqrt{\lambda_{\mathrm{min}}\left( (X^n_{\phi})_{S}^T (X^n_{\phi})_{S}\right)}} < \frac{1}{\sqrt{2}} \right] \geq 1 - \delta.
        \end{equation}
        If \algon is executed with \torrent and the sequence of threshold parameters $a_n = n - c_n$, then 
    \begin{align*}
        \norm{\hat{\beta}^{\phi,n}_{\algon} - \beta}_2 &
        \in \mathcal{O}_{\mathbb{P}}\left( \sigma_{\eta}\sqrt{\frac{ c_n\log(n/c_n)}{n}} \right).
    \end{align*}
\end{theorem}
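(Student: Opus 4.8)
The plan is to apply Theorem~\ref{thm:tor} to the transformed data $(X^n_\phi, Y^n_\phi)$ and then control the resulting bound using the fact that the transformed noise $\eta^n_\phi$ has variance that vanishes with $n$. First I would verify that the transformed data genuinely fits Setting~\ref{set:outlier}: by~\eqref{eq:core_idea} (which holds by Assumption~\ref{ass:new}\ref{ass:new:2} together with~\eqref{eq:hghgh}) we have $Y^n_\phi = X^n_\phi \beta + \eta^n_\phi + o^n_\phi$ with $(o^n_\phi)_k = 0$ for $k \notin \outliers = G \cap \{1,\dots,n\}$, and $|\outliers| \le |G \cap \{1,\dots,n\}| \le c_n$ (interpreting $c_n$ as the relevant bound; here we need the extra mild point that $c_n$ is a valid bound on $|\outliers|$, which is given). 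The role of $\epsilon$ in Theorem~\ref{thm:tor} is played by $\eta^n_\phi = T_k^{\phi,n}(\eta)$. The hypothesis~\eqref{eq:ass_top} is exactly the condition $\eta < 1/\sqrt{2}$ required by Theorem~\ref{thm:tor}, holding with probability $\ge 1-\delta$ for $n$ large.

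Next I would analyze the distribution of $\eta^n_\phi$. Each component $(\eta^n_\phi)_k = \frac{1}{n}\sum_{l=1}^n \eta_{Tl/n}\phi_k(Tl/n)$ is a linear combination of the i.i.d.\ centered Gaussians $\eta_{Tl/n}$ with variance $\sigma_\eta^2$, hence jointly Gaussian; its covariance matrix is $\frac{\sigma_\eta^2}{n^2}\Phi\Phi^\top$ where $\Phi_{kl} = \phi_k(Tl/n)$, and by Assumption~\ref{ass:new}\ref{ass:new:2} we have $\frac{1}{n}\Phi\Phi^\top = I$, so $\mathrm{Cov}(\eta^n_\phi) = \frac{\sigma_\eta^2}{n} I$. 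Thus the transformed noise is i.i.d.\ $\mathcal{N}(0, \sigma_\eta^2/n)$ — in particular sub-Gaussian with variance proxy $\sigma^2 = \sigma_\eta^2/n$, which vanishes. This is the key mechanism alluded to in the introduction. I would then invoke Corollary~\ref{cor:torrent_gaussian} (or re-derive the analogous bound directly from Theorem~\ref{thm:tor} via standard sub-Gaussian tail and $\chi^2$-type concentration for $\|(X_{S_t}^\top X_{S_t})^{-1}X_{S_t}^\top \epsilon_{S_t}\|_2$ and $\|\epsilon_{V(S_t)}\|_2$, using a union bound over the at most $\binom{n}{c_n} \le (en/c_n)^{c_n}$ relevant subsets), obtaining with high probability
\begin{equation*}
    \norm{\hat{\beta}^{\phi,n}_{\algon} - \beta}_2 \in \mathcal{O}\!\left( \frac{\sigma_\eta/\sqrt{n}}{\lambda_{a_n}(X^n_\phi)}\Big(\sqrt{d} + \sqrt{c_n \log(en/c_n\delta)}\Big)\right),
\end{equation*}
where the $\eta < 1/\sqrt2$ factor from~\eqref{eq:ass_top} absorbs into the constant.

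The remaining step is to show $\lambda_{a_n}(X^n_\phi)$ is bounded below by a constant (independent of $n$) with high probability, so that the $1/\sqrt n$ from the noise and the $\sqrt{c_n\log(n/c_n)}$ from the union bound combine to give the claimed $\mathcal{O}(\sigma_\eta \sqrt{c_n\log(n/c_n)/n})$ rate. This is where Assumption~\ref{ass:new}\ref{ass:new:4} enters: it guarantees a subset $S'_n$ of size $2c_n + d$ all of whose $d$-subsets have $\lambda_{\min}((X^n_\phi)_{S''}^\top(X^n_\phi)_{S''}) \ge c'$; since any $S$ with $|S| = a_n = n - c_n$ must (for $n$ large, as $a_n \ge 2c_n + d$ eventually) contain at least $d$ indices of $S'_n$, and eigenvalues are monotone under adding rows, $\lambda_{\min}((X^n_\phi)_S^\top (X^n_\phi)_S) \ge c'$ uniformly, i.e.\ $\lambda_{a_n}(X^n_\phi) \ge \sqrt{c'}$. (For $d=1$ this is just the statement that some coordinate of $X^n_\phi$ restricted to $S$ is nonvanishing; the general-$d$ combinatorial counting is the slightly delicate bookkeeping.) I expect the main obstacle to be precisely this uniform lower eigenvalue bound: one must carefully handle the interaction between the random set $S_t$ returned by \torrent, the high-probability event from Assumption~\ref{ass:new}\ref{ass:new:4}, and the high-probability event from~\eqref{eq:ass_top}, combining them via a union bound and making sure all the "for $n$ large enough" thresholds are compatible. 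Everything else is plugging the vanishing variance proxy into Corollary~\ref{cor:torrent_gaussian} and simplifying, using $\sqrt{d} = O(\sqrt{c_n})$ and absorbing constants.
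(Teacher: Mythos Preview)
Your proposal is correct and follows essentially the same route as the paper: verify the transformed model fits Setting~\ref{set:outlier}, compute that $\eta^n_\phi$ is i.i.d.\ $\mathcal{N}(0,\sigma_\eta^2/n)$ via Assumption~\ref{ass:new}\ref{ass:new:2} (this is Lemma~\ref{lem:decor_proof}\ref{proof:2} in the paper), use~\eqref{eq:ass_top} for the $\eta<1/\sqrt{2}$ condition, apply Corollary~\ref{cor:torrent_gaussian}, and lower-bound $\lambda_{a_n}(X^n_\phi)$ through Assumption~\ref{ass:new}\ref{ass:new:4} (the paper does this via Lemma~\ref{lem:decor_proof}\ref{proof:4}). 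Your combinatorial counting is in fact slightly sharper than you state: $|S\cap S'_n|\ge |S|+|S'_n|-n=c_n+d\ge d$ holds for every $n$, not just eventually.
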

If the assumptions of Theorem~\ref{thm:end_improved_bfs} or Theorem~\ref{thm:end_improved_torrent} are satisfied and either there is no noise, that is, $\sigma_{\eta} = 0$, or the number of confounded components $c_n$ grows more slowly than the sample size $n$, that is, $c_n \in o(n)$, then \algon produces a consistent estimator for $\beta$. Conversely, if there is non-zero noise and the number of confounded components $c_n$ is 
asymptotically
proportional to $n$, that is,
$c_n \sim n$, the estimation error of \algon is asymptotically bounded by the noise variance $\sigma_{\eta}$, up to constant factors.
While \algon-\tor has polynomial-time complexity and in our experiments converges very fast (cf. Table~\ref{tab:convergence}),
the theoretical result (Theorem~\ref{thm:end_improved_torrent})
contains
stronger theoretical assumptions than the one based on \bfs (Theorem~\ref{thm:end_improved_bfs})
, specifically it requires \eqref{eq:ass_top} to hold.
In contrast, \algon-\bfs does not require \eqref{eq:ass_top}, but its computational complexity is exponential in $n$ when $c_n$ is not bounded by a constant.

In the next section, we show that when $X$ and $U$ are band-limited processes, \algon can be consistent, even for a constant fraction of outliers.
\subsection{Example: Band-Limited Processes}
\label{sec:band-limited}
Let $\{\phi_k\}_{k \in \mathbb{N}}$ be a basis of $L^2([0,T])$ and let $S \subset  \mathbb{N}$ be a bounded set. We call a real-valued stochastic process $V$ an 
\emph{$(S, \phi)$ band-limited process}
if there exists a set of
random variables $\{a^V_k\}_{k \in S}$ with finite second moments such that, almost surely,
    \begin{equation*}
        V = \sum_{k \in S} a^V_k \phi_k.
    \end{equation*}
Let $S_X, S_U \subset \mathbb{N}$ be bounded sets and assume that $X$ is a $(S_X, \phi)$ band-limited process and that $U$ is a $(S_U, \phi)$ band-limited process. Assume $\phi$ satisfies Assumption~\ref{ass:new}~\ref{ass:new:2}. Then, for all $k,n \in \mathbb{N}$, we have almost surely that
\begin{align*}
    (X^{\phi}_n)_k =
    \begin{cases}
        a^X_k & \text{if $k \in S_X$}\\
        0 & \text{if $k \notin S_X$}
    \end{cases}
\end{align*}
and analogously for $U$.
Let $c \coloneqq |S_X \cap S_U|$ 
and assume that $|S_X| \geq 2c + 1$. Assume further that, 
there exists $r > 0$ such that for all $k \in S_X$ it holds that almost surely 
$|a_k^U| \geq r$. This implies Assumption~\ref{ass:new}~\ref{ass:new:4} and therefore, by Theorem~\ref{thm:end_improved_bfs},
\algon-\bfs is consistent. 
If, additionally, condition \eqref{eq:ass_top} is satisfied -- this is the case with high probability, for example, if $\{a_k^X\}_{k \in S_X}$ are i.i.d.~Gaussian 
and $|S_X|$ is large 
-- then, by Theorem~\ref{thm:end_improved_torrent}, \algon-\tor is also consistent.
Moreover, \ols can be inconsistent in this setting, as shown in the following proposition.
\begin{proposition}
\label{prop:ols_not_consistent}
    Assume Setting~\ref{set:robust_improved} and assume Assumption~\ref{ass:new} holds. Then
    $\hat{\beta}^{\phi,n}_{\ols}$
    is not necessarily a consistent estimator for $\beta$. 
    Even if $X$ is a band-limited process, \ols may be inconsistent.
\end{proposition}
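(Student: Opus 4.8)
The plan is to prove Proposition~\ref{prop:ols_not_consistent} by exhibiting an explicit instance of Setting~\ref{set:robust_improved} that satisfies Assumption~\ref{ass:new} but for which $\hat\beta^{\phi,n}_{\ols}$ converges to a value different from $\beta$. The intuition is that \ols in the frequency domain uses \emph{all} $n$ transformed data points, including the confounded ones with index $k \in G$; since $U$ is sparse, these contribute a bounded number of ``outlier'' rows, but each such row carries a non-negligible confounding term $(o^n_\phi)_k = T^{\phi,n}_k(U)$, so the induced bias in $\hat\beta^{\phi,n}_{\ols}$ need not vanish.

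First I would reduce to computing the limit of $\hat\beta^{\phi,n}_{\ols} = ((X^n_\phi)^\top X^n_\phi)^{-1} (X^n_\phi)^\top Y^n_\phi$ using~\eqref{eq:hhgf}, which gives
\begin{equation*}
    \hat\beta^{\phi,n}_{\ols} - \beta = \bigl((X^n_\phi)^\top X^n_\phi\bigr)^{-1} (X^n_\phi)^\top \eta^n_\phi + \bigl((X^n_\phi)^\top X^n_\phi\bigr)^{-1} (X^n_\phi)^\top o^n_\phi.
\end{equation*}
The noise term vanishes in probability under the design assumptions (the argument behind Corollary~\ref{cor:torrent_gaussian} and Theorem~\ref{thm:end_improved_bfs} shows $\|(X^n_\phi)^\top \eta^n_\phi\|$ is of smaller order than $\lambda_{\min}((X^n_\phi)^\top X^n_\phi)$ in the relevant examples), so the obstruction is entirely the confounding term. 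I would then take the band-limited instance from Section~\ref{sec:band-limited}: choose $d=1$, a basis $\phi$ satisfying Assumption~\ref{ass:new}\ref{ass:new:2} (e.g.\ the cosine basis), finite index sets $S_X, S_U$ with $S_U \subseteq S_X$ and $|S_X| \ge 2|S_U|+1$, so that $(X^n_\phi)_k = a^X_k$ for $k \in S_X$ and $0$ otherwise, while $(o^n_\phi)_k = a^U_k$ for $k \in S_U$ and $0$ otherwise. For such $X$, $(X^n_\phi)^\top X^n_\phi = \sum_{k \in S_X} (a^X_k)^2$ and $(X^n_\phi)^\top o^n_\phi = \sum_{k \in S_U} a^X_k a^U_k$, both constant in $n$, so
\begin{equation*}
    \hat\beta^{\phi,n}_{\ols} - \beta \;\pto\; \frac{\sum_{k \in S_U} a^X_k a^U_k}{\sum_{k \in S_X} (a^X_k)^2},
\end{equation*}
and it remains to pick the random (or deterministic) coefficients $a^X_k, a^U_k$ so that this ratio is almost surely nonzero — for instance all $a^U_k$ equal to a fixed $r>0$ and the $a^X_k$ i.i.d.\ with nonzero mean, or simply a deterministic choice. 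One checks that this choice also satisfies Assumption~\ref{ass:new}\ref{ass:new:4} (via $|a^U_k| \ge r$, exactly as in Section~\ref{sec:band-limited}), so the hypotheses of the proposition are genuinely met.

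The main obstacle — though a mild one — is bookkeeping around the exact-equality claims needed to make the finite-$n$ computation clean: one must invoke Assumption~\ref{ass:new}\ref{ass:new:2} together with the band-limited representation to justify $(X^n_\phi)_k = a^X_k$ and $(o^n_\phi)_k = a^U_k$ \emph{for every} $n$ (not just in the limit), which is precisely the content worked out in Section~\ref{sec:band-limited} and can be cited. A secondary point is handling the Gaussian-noise term: rather than appeal to a general design assumption, in the band-limited instance $(X^n_\phi)^\top \eta^n_\phi = \sum_{k \in S_X} a^X_k\, T^{\phi,n}_k(\eta)$ and each $T^{\phi,n}_k(\eta)$ is a mean-zero Gaussian with variance $\sigma_\eta^2/n \to 0$ by Assumption~\ref{ass:new}\ref{ass:new:2}, so this term converges to $0$ in probability directly; combining with the constant denominator finishes the argument. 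Finally I would remark that the second sentence of the proposition (``even if $X$ is band-limited'') is covered by the very instance constructed, and that one can even arrange the same conclusion with $\sigma_\eta = 0$, showing the inconsistency is due to confounding and not noise.
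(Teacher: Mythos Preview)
Your proposal is correct and follows essentially the same approach as the paper: decompose $\hat\beta^{\phi,n}_{\ols}-\beta$ into a noise term and a confounding term, argue the noise term vanishes (the paper cites Lemma~\ref{lem:decor_proof}\ref{proof:2}, which is exactly your variance-$\sigma_\eta^2/n$ computation), and exhibit a band-limited instance (the paper takes $X=U+\hat\epsilon$ with both band-limited, matching your $S_U\subseteq S_X$ construction) where the confounding term is a nonzero constant. Your write-up is more explicit than the paper's terse proof, but the route is the same.
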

We regard these results surprising in that they describe a setting in which the number of outliers increases linearly in $n$ and there is a constant\footnote{The noise variance is only constant in the time domain. In the frequency domain, it vanishes with increasing $n$.} noise variance, and still 
robust regression yields a consistent estimator (and \ols does not).
As discussed in Appendix~\ref{app:lower_bound} such an estimator does not exist for i.i.d.~data.

\section{Empirical Evaluation}
\label{sec:exp}
We validate our theoretical findings through experiments on synthetic data and real-world data. The data sets and code for these experiments have been made available at \url{https://github.com/fschur/robust_deconfounding}.

\subsection{Synthetic Data}
In our synthetic experiments, we generate data according to the model described in Setting~\ref{set:robust_improved} and set $X \coloneqq U + \epsilon_X$. We sample $\epsilon_X$ and $U$ from either two independent Ornstein-Uhlenbeck processes or two independent band-limited processes. For the orthogonal basis $\phi$, we consider both the cosine basis (see Definition~\ref{def:cosine_basis}) and the Haar basis (see Definition~\ref{def:haar_basis}). 
The noise in $Y$ is set to have variance
$\sigma_{\eta}^2 \in \{0,1,4\}$.
To assess the accuracy of the different approaches, we calculate the mean absolute prediction error: $ \mathrm{MAE} \coloneqq \sum_{i=1}^n |\beta_i - \hat{\beta}^n_i|/n$.
For the remainder of this section, if not specified otherwise, we assume independent band-limited processes, the cosine basis, noise variance of $\sigma^2_{\eta}=1$, fraction of outliers of $25\%$ and $a = 0.7$.
Detailed information on the experimental setup is available in Appendix~\ref{app:detail_experiments}.
\begin{table}
     \centering
        \begin{tabular}{ |c|c|c|c|c| } 
        \hline
        $n$ & $\sigma_{\eta}^2$ & \ols & \algon-\tor & \algon-\bfs \\
        \hline
        8& 0 &1.69 (0.05) & 0.32 (0.04)& 0.00 (0.00)\\ 
        12& 0 &1.70 (0.05) & 0.13 (0.02)& 0.00 (0.00)\\ 
        16& 0 &1.66 (0.04) & 0.06 (0.02) & 0.00 (0.00)\\
        \hline
        8& 1 &1.70 (0.05) & 0.55 (0.03)& 0.24 (0.01)\\ 
        12& 1 &1.71 (0.05) & 0.33 (0.02)& 0.17 (0.01)\\ 
        16& 1 &1.67 (0.04) & 0.21 (0.01) & 0.14 (0.00)\\
        \hline
        \end{tabular}
    \caption{ Comparison of the mean absolute estimation error (with standard deviation of the mean in parenthesis) between \ols, 
    \algon-\bfs and \algon-\tor. For more than $16$ data points \bfs becomes computationally infeasible.}
    \label{tab:bfs}
\end{table}

We first compare 
the average estimation error of \ols, \algon-\tor and \algon-\bfs.
As discussed in Section~\ref{sec:theoretical_g}, the computation time of \algon-\bfs scales exponentially with the sample size $n$,
so we only consider sample sizes up to $n = 16$. 
Table~\ref{tab:bfs} shows the results.
In the absence of noise, that is, if $\sigma^2_{\eta} = 0$, \algon-\bfs correctly point-identifies the true parameter $\beta$ for all samples sizes -- this is in agreement with Theorem~\ref{thm:end_improved_bfs}.
Although the average estimation error of \algon-\tor decreases as the sample size increases, it does not reach zero, unlike the estimation error of \algon-\bfs. This discrepancy arises because condition~\eqref{eq:ass_top} is often not satisfied in smaller sample sizes (not shown). 
The discrepancy also suggests a trade-off between computational costs and average estimation error. 
Since \algon is modular in that it can be applied with any robust regression technique, it directly benefits from any methodological advancement in the field of  robust regression.
As expected, the estimation error for \ols remains constant regardless of sample size and is significantly larger than that of both \algon-\tor and \algon-\bfs.

In Section~\ref{sec:band-limited} we have shown
that under suitable conditions \algon is consistent when $U$ and $X$ are band-limited processes.
Figure~\ref{fig:exp_n_False_cosine} (left)
shows that the mean absolute estimation error for processes from this class vanishes with increasing sample size, supporting
this result.
Figure~\ref{fig:exp_n_False_cosine} (right) 
considers 
Ornstein-Uhlenbeck processes and 
suggests that \algon might also be consistent for this class of
processes. 
The results of Appendix~\ref{app:lower_bound}, however,
suggest that there are settings with Ornstein-Uhlenbeck processes where no consistent estimator exists;
it therefore seems that many of the cases in 
Figure~\ref{fig:exp_n_False_cosine} (right) contain confounding that is not worst-case, so \algon is still consistent.
In both cases (Figure~\ref{fig:exp_n_False_cosine} (left) and Figure~\ref{fig:exp_n_False_cosine} (right)) the error of \ols does not seem to vanish with growing sample size.

We present additional experimental results in Appendix~\ref{app:additional_exp}. Figure~\ref{fig:exp_n_False_haar} and Figure~\ref{fig:exp_n_True_2} suggest that \algon-\tor is consistent in the setting where the Haar basis is used in place of the cosine basis, and in the setting where $X$ is two-dimensional. 
Furthermore, Figure~\ref{fig:exp_n_True_cosine_ablation} 
investigates consistency of \algon-\tor under model misspecification;
more specifically, we consider a growing fraction of confounded components and a model introducing Gaussian noise to $U$ (and therefore violating
Assumption~\ref{ass:main}).
We also test the convergence speed of \algon-\tor (see Figure~\ref{tab:convergence}). We find that \algon-\tor converges in under 15 iterations for data sets up to size 1000.
\begin{figure}
     \centering
     \begin{subfigure}[t]{0.45\textwidth}
         \centering
         \includegraphics[width=\textwidth]{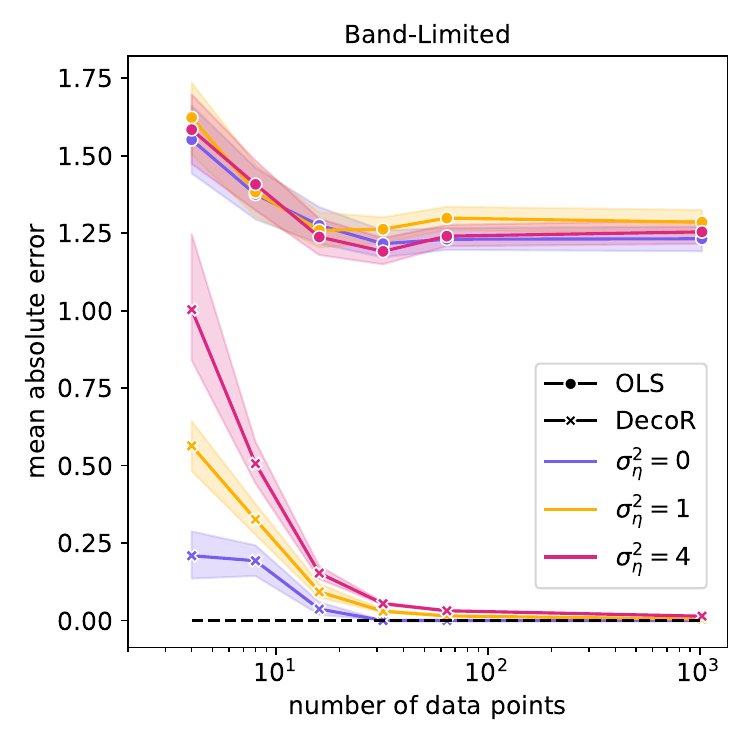}
     \end{subfigure}
     \hfill
     \begin{subfigure}[t]{0.45\textwidth}         
        \centering
         \includegraphics[width=\textwidth]{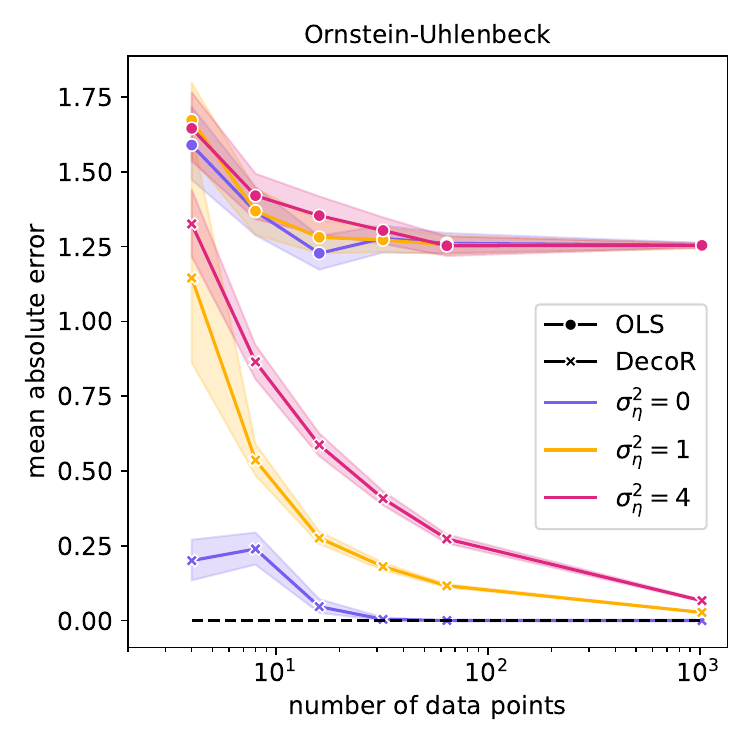}
     \end{subfigure}
     \caption{Synthetic experiment where $\epsilon_x$ and $U$ are generated by two independent band-limited processes (left) or two independent Ornstein-Uhlenbeck processes (right) and where we choose $\phi$ to be the cosine basis. For this experiment \algon-\tor is used.}
     \label{fig:exp_n_False_cosine}
\end{figure}

\subsection{Application to Earth System Science}

An important challenge in Earth System Science is to distinguish externally forced changes in climate systems from internal variability or ``climate noise'' \citep{madden1976estimates, schneider1994examination}. Drawing inspiration from \cite{sippel2019uncovering} we consider a specific instance involving atmospheric circulation and precipitation and frame the problem as a hidden confounder problem. Specifically, in our study, the covariate time series $X$ represents atmospheric circulation, in this case pressure at sea level, the target time series $Y$ corresponds to precipitation, and the hidden confounder $U$ denotes external forcing (such as greenhouse gas concentrations), which we presume to be sparse in the frequency domain. The goal is to estimate the internal circulation-induced contribution to variability of precipitation. Thus, after estimating the causal effect of $X$ on $Y$ we remove the confounded frequencies from $X$ and 
use the causal function from $X$ to $Y$ to produce fitted values; these fitted values correspond to the internal circulation-induced contribution to the variability of precipitation (assuming that the confounder acts linearly on $X$); the corresponding residuals of $Y$ can be used as an estimate of the total forced response.
We use the data set provided by the ClimEx project \citep{leduc2019climex}, based on a single run of a climate model, and consider precipitation data during
the winter months (December, January, and February) in the Alps and pressure at sea level data during the same time over Europe.
Our method, \algon-\tor with parameter $a = 0.9n$ and a cosine basis, is compared to the approach developed by \cite{sippel2019uncovering}. A critical aspect of the approach of \cite{sippel2019uncovering} is the prerequisite that the frequency bands in which the confounder is active are known; more specifically, one may assume that the confounder (external forcing) is smooth (see also \cite{wallace1995dynamic}), that is, the external forcing consists solely of low-frequency components.
If this assumption approximately holds, we expect
\algon to exclude
mainly low-frequency components when applied to the data set.
\begin{figure}
     \centering
     \begin{subfigure}[t]{0.45\textwidth}
         \centering
         \includegraphics[width=\textwidth]{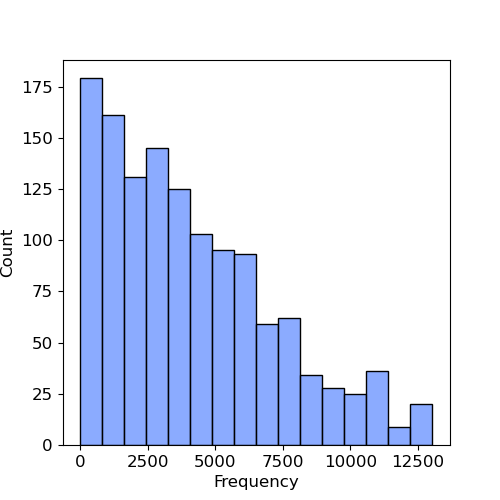}
     \end{subfigure}
     \hfill
     \begin{subfigure}[t]{0.45\textwidth}         
        \centering
         \includegraphics[width=\textwidth]{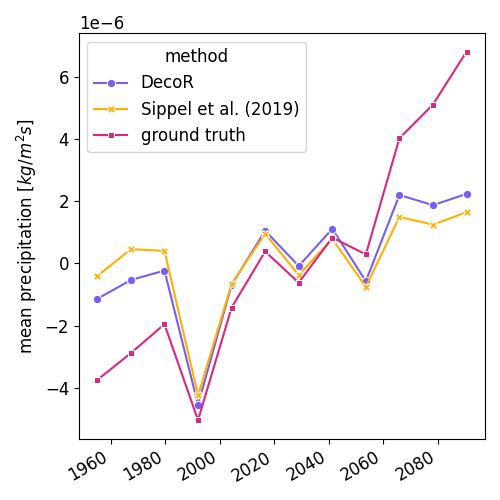}
     \end{subfigure}
     \caption{Histogram of the
     frequencies excluded by \algon in the final iteration (left) and centred average 5-year precipitation in the alps over time (right); 
     here, the 
     methods' outputs estimate the precipitation without external forcing. The ``ground truth'' data is the precipitation provided by the ClimEx project \citep{leduc2019climex}.
    }
     \label{fig:real_world}
\end{figure}
Indeed, an
analysis of the frequencies excluded by \algon, shown in Figure~\ref{fig:real_world}(a), confirms that our algorithm predominantly filters out the confounded low frequencies rather than the unconfounded high frequencies.
Furthermore, the coefficient of determination ($R^2$) for the fitted values (see above)
\algon equals $0.76$, the same as the one obtained by the method of \cite{sippel2019uncovering}.
Thus, \algon seems
competitive 
even without using the expert knowledge of which frequencies to exclude. 
The fitted values of the precipitation patterns (which ignore the confounded frequencies) are depicted in Figure~\ref{fig:real_world}(b); both \algon and the method used by \cite{sippel2019uncovering} estimate lower mean precipitation than observed in the data set, 
(which is to be expected, as it is believed that external forcing induces an upward trend in the data \citep{foster2011global, deser2016forced}).
In conclusion, our method matches the effectiveness of the approach by \cite{sippel2019uncovering} while imposing fewer assumptions.

\section{Summary and Future Work}
In this work we have developed \algon, an algorithm that estimates causal effects in the presence of
unobserved confounders in time series data. We leverage sparsity in
$L^2$ with respect to a known basis to derive conditions under which \algon consistently estimates the true causal effect.

Looking ahead, we see several promising directions for future work. First, 
adapting
our results to data sets where the data points are not observed in regular intervals. Similarly, extending our framework to include the asymptotics of
longer observational time horizons, as opposed to shorter time intervals, would be interesting. 
Second,
exploring scenarios where the effect of the hidden confounder has a sparse effect on $X$, but is dense toward $Y$, may be relevant for practical applications. 
We
hypothesize that, in this case, our approach
would be consistent under minor modifications.
Third,
it could be interesting to extend the results about the lower bound to other noise distributions such as the Gaussian distribution.
Fourth,
another promising direction is to consider nonlinear causal effects. In this setting, one strategy could be to generalize the derived results to finite-dimensional feature transformations or to kernelize the approach.

\section*{Acknowledgements}
We thank Nicolai Meinshausen
for his helpful input regarding the ideas of sparse confounding in the frequency domain and
the real data application and Michael Law for his helpful comments on convergence of the \ols estimator for time series data. We thank Maybritt Schillinger for helping us with the Earth System Science experiment. We also thank Pio Blieske for taking the time to read this paper and offering invaluable feedback.
We have used ChatGPT to improve minor language formulations.

\bibliographystyle{abbrvnat}
\bibliography{bibliography}

\begin{thebibliography}{45}
\providecommand{\natexlab}[1]{#1}
\providecommand{\url}[1]{\texttt{#1}}
\expandafter\ifx\csname urlstyle\endcsname\relax
  \providecommand{\doi}[1]{doi: #1}\else
  \providecommand{\doi}{doi: \begingroup \urlstyle{rm}\Url}\fi

\bibitem[Ahmed et~al.(1974)Ahmed, Natarajan, and Rao]{ahmed1974discrete}
N.~Ahmed, T.~Natarajan, and K.~R. Rao.
\newblock Discrete cosine transform.
\newblock \emph{IEEE transactions on Computers}, 100:\penalty0 90--93, 1974.

\bibitem[Angrist and Pischke(2009)]{angrist2009mostly}
J.~D. Angrist and J.-S. Pischke.
\newblock \emph{Mostly harmless econometrics: An empiricist's companion}.
\newblock Princeton University Press, 2009.

\bibitem[Bhatia et~al.(2015)Bhatia, Jain, and Kar]{bhatia2015robust}
K.~Bhatia, P.~Jain, and P.~Kar.
\newblock Robust regression via hard thresholding.
\newblock \emph{Advances in Neural Information Processing Systems}, 28, 2015.

\bibitem[Bhatia et~al.(2017)Bhatia, Jain, Kamalaruban, and
  Kar]{bhatia2017consistent}
K.~Bhatia, P.~Jain, P.~Kamalaruban, and P.~Kar.
\newblock Consistent robust regression.
\newblock \emph{Advances in Neural Information Processing Systems}, 30, 2017.

\bibitem[Bowden and Turkington(1990)]{bowden1990instrumental}
R.~Bowden and D.~Turkington.
\newblock \emph{Instrumental Variables}.
\newblock Econometric Society Monographs. Cambridge University Press, 1990.

\bibitem[{\'C}evid et~al.(2020){\'C}evid, B{\"u}hlmann, and
  Meinshausen]{cevid2020spectral}
D.~{\'C}evid, P.~B{\"u}hlmann, and N.~Meinshausen.
\newblock Spectral deconfounding via perturbed sparse linear models.
\newblock \emph{The Journal of Machine Learning Research}, 21:\penalty0
  9442--9482, 2020.

\bibitem[Chen et~al.(2013)Chen, Caramanis, and Mannor]{chen2013robust}
Y.~Chen, C.~Caramanis, and S.~Mannor.
\newblock Robust sparse regression under adversarial corruption.
\newblock In \emph{International Conference on Machine Learning}, pages
  774--782, 2013.

\bibitem[Clayton et~al.(1993)Clayton, Bernardinelli, and
  Montomoli]{clayton1993spatial}
D.~G. Clayton, L.~Bernardinelli, and C.~Montomoli.
\newblock Spatial correlation in ecological analysis.
\newblock \emph{International journal of epidemiology}, 22:\penalty0
  1193--1202, 1993.

\bibitem[Deser et~al.(2016)Deser, Terray, and Phillips]{deser2016forced}
C.~Deser, L.~Terray, and A.~S. Phillips.
\newblock Forced and internal components of winter air temperature trends over
  north america during the past 50 years: Mechanisms and implications.
\newblock \emph{Journal of Climate}, 29:\penalty0 2237--2258, 2016.

\bibitem[Diakonikolas et~al.(2019)Diakonikolas, Kong, and
  Stewart]{diakonikolas2019efficient}
I.~Diakonikolas, W.~Kong, and A.~Stewart.
\newblock Efficient algorithms and lower bounds for robust linear regression.
\newblock In \emph{Proceedings of the Thirtieth Annual ACM-SIAM Symposium on
  Discrete Algorithms}, pages 2745--2754, 2019.

\bibitem[Dupont et~al.(2022)Dupont, Wood, and Augustin]{dupont2022spatial+}
E.~Dupont, S.~N. Wood, and N.~H. Augustin.
\newblock Spatial+: a novel approach to spatial confounding.
\newblock \emph{Biometrics}, 78:\penalty0 1279--1290, 2022.

\bibitem[d’Orsi et~al.(2021)d’Orsi, Novikov, and Steurer]{d2021consistent}
T.~d’Orsi, G.~Novikov, and D.~Steurer.
\newblock Consistent regression when oblivious outliers overwhelm.
\newblock In \emph{International Conference on Machine Learning}, pages
  2297--2306, 2021.

\bibitem[Fair(1970)]{fair1970estimation}
R.~C. Fair.
\newblock The estimation of simultaneous equation models with lagged endogenous
  variables and first order serially correlated errors.
\newblock \emph{Econometrica: Journal of the Econometric Society}, pages
  507--516, 1970.

\bibitem[Foster and Rahmstorf(2011)]{foster2011global}
G.~Foster and S.~Rahmstorf.
\newblock Global temperature evolution 1979--2010.
\newblock \emph{Environmental research letters}, 6:\penalty0 044022, 2011.

\bibitem[Guan et~al.(2023)Guan, Page, Reich, Ventrucci, and
  Yang]{guan2023spectral}
Y.~Guan, G.~L. Page, B.~J. Reich, M.~Ventrucci, and S.~Yang.
\newblock Spectral adjustment for spatial confounding.
\newblock \emph{Biometrika}, 110:\penalty0 699--719, 2023.

\bibitem[Haar(1909)]{haar1909theorie}
A.~Haar.
\newblock \emph{Zur theorie der orthogonalen funktionensysteme}.
\newblock Georg-August-Universitat, Gottingen., 1909.

\bibitem[Huber(1964)]{Huber64}
P.~J. Huber.
\newblock {Robust Estimation of a Location Parameter}.
\newblock \emph{The Annals of Mathematical Statistics}, 35:\penalty0 73 -- 101,
  1964.

\bibitem[Hughes and Haran(2013)]{hughes2013dimension}
J.~Hughes and M.~Haran.
\newblock Dimension reduction and alleviation of confounding for spatial
  generalized linear mixed models.
\newblock \emph{Journal of the Royal Statistical Society Series B: Statistical
  Methodology}, 75:\penalty0 139--159, 2013.

\bibitem[Janzing et~al.(2009)Janzing, Peters, Mooij, and Sch\"{o}lkopf]{janzig}
D.~Janzing, J.~Peters, J.~Mooij, and B.~Sch\"{o}lkopf.
\newblock Identifying confounders using additive noise models.
\newblock In \emph{Proceedings of the Twenty-Fifth Conference on Uncertainty in
  Artificial Intelligence}, page 249–257, 2009.

\bibitem[Keller and Szpiro(2020)]{keller2020selecting}
J.~P. Keller and A.~A. Szpiro.
\newblock Selecting a scale for spatial confounding adjustment.
\newblock \emph{Journal of the Royal Statistical Society Series A: Statistics
  in Society}, 183:\penalty0 1121--1143, 2020.

\bibitem[Klivans et~al.(2018)Klivans, Kothari, and Meka]{klivans2018efficient}
A.~Klivans, P.~K. Kothari, and R.~Meka.
\newblock Efficient algorithms for outlier-robust regression.
\newblock In \emph{Conference On Learning Theory}, pages 1420--1430, 2018.

\bibitem[Leduc et~al.(2019)Leduc, Mailhot, Frigon, Martel, Ludwig, Brietzke,
  Gigu{\`e}re, Brissette, Turcotte, Braun, et~al.]{leduc2019climex}
M.~Leduc, A.~Mailhot, A.~Frigon, J.-L. Martel, R.~Ludwig, G.~B. Brietzke,
  M.~Gigu{\`e}re, F.~Brissette, R.~Turcotte, M.~Braun, et~al.
\newblock The climex project: A 50-member ensemble of climate change
  projections at 12-km resolution over europe and northeastern north america
  with the canadian regional climate model (crcm5).
\newblock \emph{Journal of Applied Meteorology and Climatology}, 58:\penalty0
  663--693, 2019.

\bibitem[Madden(1976)]{madden1976estimates}
R.~A. Madden.
\newblock Estimates of the natural variability of time-averaged sea-level
  pressure.
\newblock \emph{Monthly Weather Review}, 104:\penalty0 942--952, 1976.

\bibitem[Mahecha et~al.(2010)Mahecha, Reichstein, Carvalhais, Lasslop, Lange,
  Seneviratne, Vargas, Ammann, Arain, Cescatti, et~al.]{mahecha2010global}
M.~D. Mahecha, M.~Reichstein, N.~Carvalhais, G.~Lasslop, H.~Lange, S.~I.
  Seneviratne, R.~Vargas, C.~Ammann, M.~A. Arain, A.~Cescatti, et~al.
\newblock Global convergence in the temperature sensitivity of respiration at
  ecosystem level.
\newblock \emph{Science}, 329:\penalty0 838--840, 2010.

\bibitem[Marques et~al.(2022)Marques, Kneib, and Klein]{marques2022mitigating}
I.~Marques, T.~Kneib, and N.~Klein.
\newblock Mitigating spatial confounding by explicitly correlating gaussian
  random fields.
\newblock \emph{Environmetrics}, 33:\penalty0 2727, 2022.

\bibitem[Newey and West(1987)]{newey1986simple}
W.~K. Newey and K.~D. West.
\newblock A simple, positive semi-definite, heteroskedasticity and
  autocorrelation consistent covariance matrix.
\newblock \emph{Econometrica}, 55:\penalty0 703--708, 1987.

\bibitem[Paciorek(2010)]{paciorek2010importance}
C.~J. Paciorek.
\newblock The importance of scale for spatial-confounding bias and precision of
  spatial regression estimators.
\newblock \emph{Statistical Science: A Review Journal of the Institute of
  Mathematical Statistics}, 25:\penalty0 107, 2010.

\bibitem[Page et~al.(2017)Page, Liu, He, and Sun]{page2017estimation}
G.~L. Page, Y.~Liu, Z.~He, and D.~Sun.
\newblock Estimation and prediction in the presence of spatial confounding for
  spatial linear models.
\newblock \emph{Scandinavian Journal of Statistics}, 44:\penalty0 780--797,
  2017.

\bibitem[Pearl(2009)]{pearl2009causality}
J.~Pearl.
\newblock \emph{Causality}.
\newblock Cambridge University Press, 2009.

\bibitem[Pensia et~al.(2020)Pensia, Jog, and Loh]{pensia2020robust}
A.~Pensia, V.~Jog, and P.-L. Loh.
\newblock Robust regression with covariate filtering: Heavy tails and
  adversarial contamination.
\newblock \emph{arXiv preprint arXiv:2009.12976}, 2020.

\bibitem[Peters et~al.(2017)Peters, Janzing, and
  Sch{\"o}lkopf]{peters2017elements}
J.~Peters, D.~Janzing, and B.~Sch{\"o}lkopf.
\newblock \emph{Elements of Causal inference: Foundations and Learning
  Algorithms}.
\newblock The MIT Press, 2017.

\bibitem[Prates et~al.(2019)Prates, Assun{\c{c}}{\~a}o, and
  Rodrigues]{prates2019alleviating}
M.~O. Prates, R.~M. Assun{\c{c}}{\~a}o, and E.~C. Rodrigues.
\newblock Alleviating spatial confounding for areal data problems by displacing
  the geographical centroids.
\newblock \emph{Bayesian Analysis}, 14:\penalty0 623 -- 647, 2019.

\bibitem[Reich et~al.(2006)Reich, Hodges, and Zadnik]{reich2006effects}
B.~J. Reich, J.~S. Hodges, and V.~Zadnik.
\newblock Effects of residual smoothing on the posterior of the fixed effects
  in disease-mapping models.
\newblock \emph{Biometrics}, 62:\penalty0 1197--1206, 2006.

\bibitem[Reiers{\o}l(1945)]{reiersol1945confluence}
O.~Reiers{\o}l.
\newblock \emph{Confluence Analysis by Means of Instrumental Sets of
  Variables}.
\newblock PhD thesis, Almqvist \& Wiksell, 1945.

\bibitem[Rubin(2005)]{bb9b1ce1-c4cd-345c-b1f2-6bb227500876}
D.~B. Rubin.
\newblock Causal inference using potential outcomes: Design, modeling,
  decisions.
\newblock \emph{Journal of the American Statistical Association}, 100:\penalty0
  322--331, 2005.

\bibitem[Sasai and Fujisawa(2020)]{sasai2020robust}
T.~Sasai and H.~Fujisawa.
\newblock Robust estimation with lasso when outputs are adversarially
  contaminated.
\newblock \emph{arXiv preprint arXiv:2004.05990}, 2020.

\bibitem[Scheidegger et~al.(2023)Scheidegger, Guo, and
  B{\"u}hlmann]{scheidegger2023spectral}
C.~Scheidegger, Z.~Guo, and P.~B{\"u}hlmann.
\newblock Spectral deconfounding for high-dimensional sparse additive models.
\newblock \emph{arXiv preprint arXiv:2312.02860}, 2023.

\bibitem[Schneider and Kinter(1994)]{schneider1994examination}
E.~Schneider and J.~Kinter.
\newblock An examination of internally generated variability in long climate
  simulations.
\newblock \emph{Climate Dynamics}, 10:\penalty0 181--204, 1994.

\bibitem[Sippel et~al.(2019)Sippel, Meinshausen, Merrifield, Lehner,
  Pendergrass, Fischer, and Knutti]{sippel2019uncovering}
S.~Sippel, N.~Meinshausen, A.~Merrifield, F.~Lehner, A.~G. Pendergrass,
  E.~Fischer, and R.~Knutti.
\newblock Uncovering the forced climate response from a single ensemble member
  using statistical learning.
\newblock \emph{Journal of Climate}, 32:\penalty0 5677--5699, 2019.

\bibitem[Suggala et~al.(2019)Suggala, Bhatia, Ravikumar, and
  Jain]{suggala2019adaptive}
A.~S. Suggala, K.~Bhatia, P.~Ravikumar, and P.~Jain.
\newblock Adaptive hard thresholding for near-optimal consistent robust
  regression.
\newblock In \emph{Conference on Learning Theory}, pages 2892--2897, 2019.

\bibitem[Thaden and Kneib(2018)]{thaden2018structural}
H.~Thaden and T.~Kneib.
\newblock Structural equation models for dealing with spatial confounding.
\newblock \emph{The American Statistician}, 72:\penalty0 239--252, 2018.

\bibitem[Thams et~al.(2022)Thams, S{\o}ndergaard, Weichwald, and
  Peters]{thams2022identifying}
N.~Thams, R.~S{\o}ndergaard, S.~Weichwald, and J.~Peters.
\newblock Identifying causal effects using instrumental time series: Nuisance
  {IV} and correcting for the past.
\newblock \emph{arXiv preprint arXiv:2203.06056}, 2022.

\bibitem[Vershynin(2018)]{vershynin2018high}
R.~Vershynin.
\newblock \emph{High-Dimensional Probability: An Introduction with Applications
  in Data Science}.
\newblock Cambridge University Press, 2018.

\bibitem[Wallace et~al.(1995)Wallace, Zhang, and Renwick]{wallace1995dynamic}
J.~M. Wallace, Y.~Zhang, and J.~A. Renwick.
\newblock Dynamic contribution to hemispheric mean temperature trends.
\newblock \emph{Science}, 270:\penalty0 780--783, 1995.

\bibitem[Wright(1928)]{wright1928tariff}
P.~Wright.
\newblock \emph{The Tariff on Animal and Vegetable Oils}.
\newblock Macmillan, 1928.

\end{thebibliography}

\appendix

\section{Visualization of \algon}
\label{app:figure}
Figure~\ref{fig:algon_vis} provides another visualization of the proposed procedure \algon.
\begin{figure}[ht]
    \centering
    \begin{tikzpicture}
        \node[inner sep=0pt] (1) at (0,0)
        {\includegraphics[width=.25\textwidth]{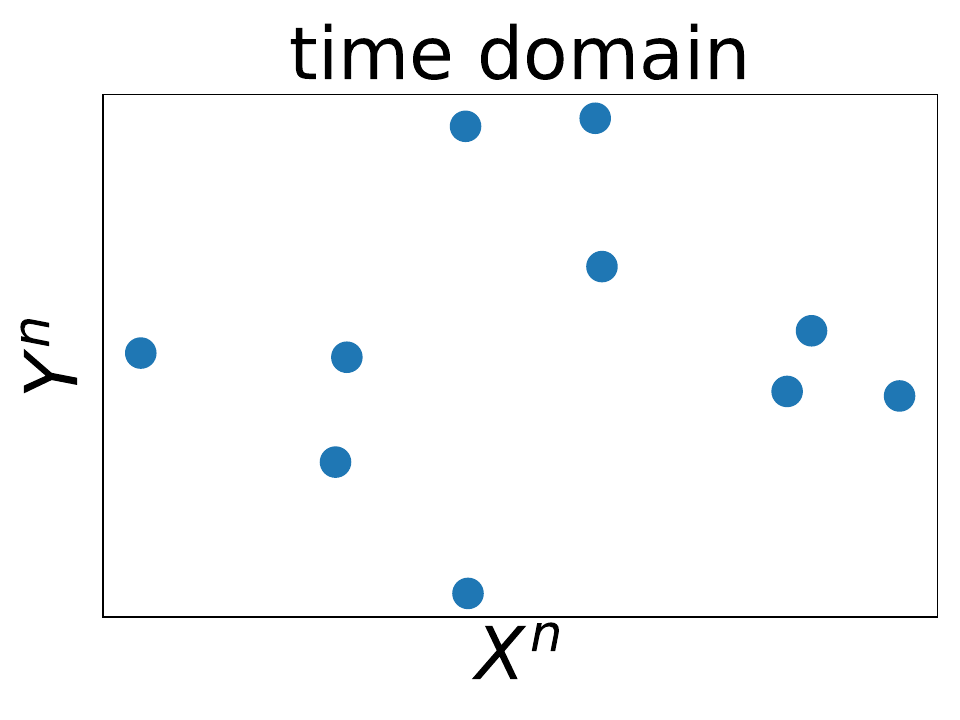}};
        \node[inner sep=0pt] (2) at (6,0)
        {\includegraphics[width=.25\textwidth]{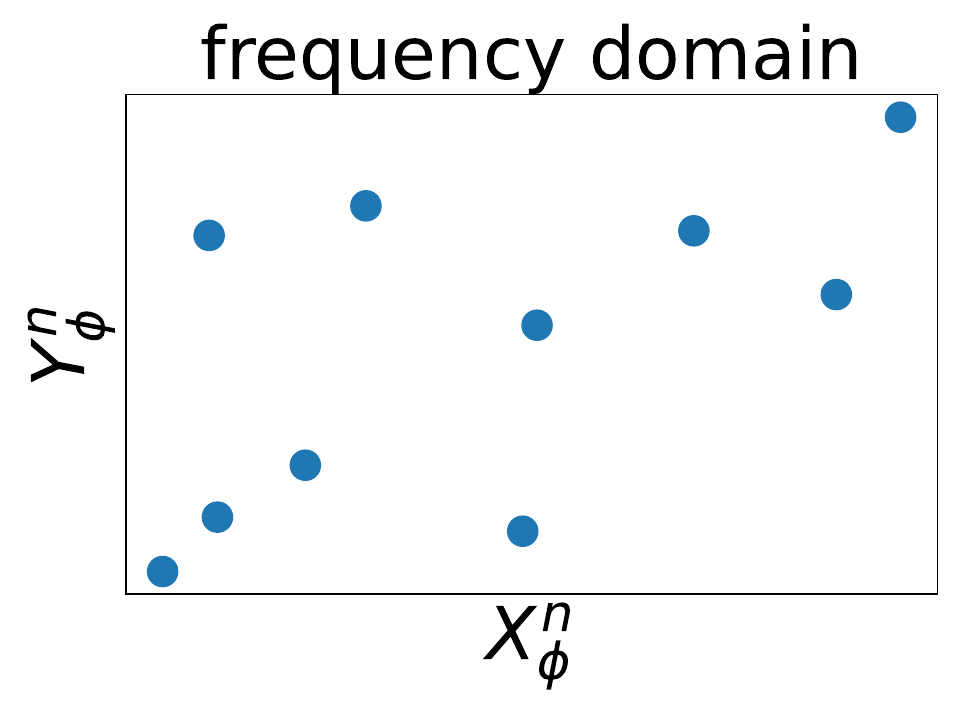}};
        \node[inner sep=0pt] (3) at (12,0)
        {\includegraphics[width=.25\textwidth]{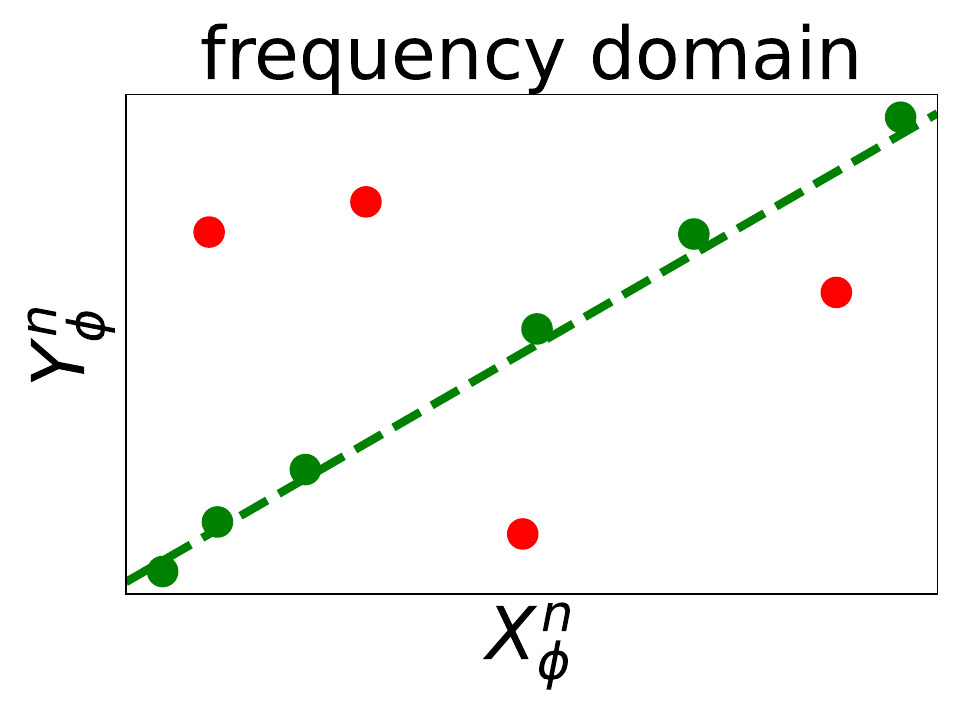}};
        
        \draw[->, thick, -Latex] (1) to node[above,sloped] {$T^{\phi,n}$} (2);
        \draw[->, thick, -Latex] (2) to node[above,sloped] {robust} node[below,sloped] {regression} (3);
    \end{tikzpicture}
    \caption{
    Visualization of \algon
    (Section~\ref{sec:sparse_deconfounding_robust}): in the time domain (left), the processes are confounded, so estimators based on least squares will generally be biased. Assuming that the confounding is sparse in the frequency domain (middle), we can apply robust regression methods to estimate the causal effect (right). We prove that \algon is consistent under weak assumptions if the confounding is sparse; see Section~\ref{sec:theoretical_g_an}.
    }
    \label{fig:algon_vis}
\end{figure}

\section{Comment on Setting~\ref{set:robust_improved}}
\label{sec:more_general}
Setting~\ref{set:robust_improved} 
is more general than the additive structure may initially suggest.
To see this, 
let $\tilde{X} = (\tilde{X}_t)_{t \in [0,T]}$ be a stochastic process and let $\epsilon = (\epsilon_t)_{t \in [0,T]}$ be any noise process; in particular, $\epsilon$ might depend on $\tilde{X}$.
Assume that
the stochastic process $\tilde{Y} = (\tilde{Y}_t)_{t \in [0,T]}$ 
satisfies
\begin{equation}
\label{eq:hhde}
    \tilde{Y}_t \coloneqq \tilde{\beta} \tilde{X}_t + \epsilon_t,
\end{equation} 
where $\tilde{\beta} \in \mathbb{R}^d$ denotes the total causal effect of $\tilde{X}_t$ on $\tilde{Y}_t$.
Without loss of generality, there exist
random vectors $\tilde{U} = \{\tilde{U}_t\}_{t \in [0,T]}$, an independent stochastic process $\tau = (\tau_t)_{t \in [0,T]}$, and measurable function $(h_t)_{t \in [0,T]}$ such that
\begin{align*}
    \tilde{X}_t &\coloneqq h_t(\tilde{U}_t, \tau),\\
    \tilde{Y}_t &\coloneqq \tilde{\beta} \tilde{X}_t + \tilde{U}_t
\end{align*}
and $\tilde{U}$ and $\tau$ are independent.
This statement follows when defining
$\tilde{U}_t := \epsilon_t$ and choosing the equation $h_t$ and $\tau$ accordingly
\citep[see][Proposition~4.1]{peters2017elements}.
In this structural causal model, $\tilde{\beta}$ is the causal effect from $\tilde{X}_t$ to $\tilde{Y}_t$.
Since we allow for a degenerate noise process,
Setting~\ref{set:robust_improved} thus includes \eqref{eq:hhde} as a special case.

\section{Additional Definitions}
\label{sec:defs}
\begin{definition}[Cosine basis]
\label{def:cosine_basis}
    Let $T \in \mathbb{R}_{>0}$ be a fixed constant. 
    Define, for all $k \in \mathbb{N}$, the function $\phi_k: [0,T] \to \mathbb{R}$ by
    \begin{equation*}
        x \mapsto
        \begin{cases}
            1 & \text{if $x=0$ and}\\
            \sqrt{2}\cos(x \pi (k + 1/2)) & \text{otherwise.}
        \end{cases}
    \end{equation*}
    We call $\phi \coloneqq (\phi_k)_{k \in \mathbb{N}}$ the \emph{cosine basis}.
\end{definition}
By definition, the cosine basis is right-continuous and its discretization
is also called the inverse DCT \citep{ahmed1974discrete}.

\begin{definition}[Haar basis \citep{haar1909theorie}]
\label{def:haar_basis}
    Let $T \in \mathbb{R}_{>0}$ be a fixed constant. Define the function $\phi^{\mathrm{H}}_{0}: [0,T] \to \mathbb{R}$ by
    \begin{equation*}
        x \mapsto
        \begin{cases}
            1 & \text{if $x \leq T/2$}\\
            -1 & \text{if $x > T/2$}.
        \end{cases}
    \end{equation*}
    Define, for all $k \in \mathbb{N}$, the function $\phi^{\mathrm{H}}_{k}: [0,T] \to \mathbb{R}$ by
    \begin{equation*}
        x \mapsto 2^{-k/2} \phi^{\mathrm{H}}_{0}(2^k x).
    \end{equation*}
    We call $\phi \coloneqq (\phi_k)_{k \in \mathbb{N}}$ the \emph{Haar basis}.
\end{definition}
By definition, the Haar basis is right-continuous.

\begin{definition}[Bachmann–Landau notation]
\label{def:o_not_1}
    Let $f,g: \mathbb{N} \to \mathbb{R}$. We say $|f|$ is \emph{bounded from above by g asymptotically} and write $f(n) \in \mathcal{O}(g(n))$ if 
    \begin{equation*}
        \limsup_{n \to \infty} \frac{f(n)}{g(n)} < \infty.
    \end{equation*}
    We say $f$ is \emph{bounded from below by g asymptotically } and write  $f(n) \in \Omega(g(n))$ if 
    \begin{equation*}
        \liminf_{n \to \infty} \frac{f(n)}{g(n)} > 0.
    \end{equation*}
\end{definition}

\begin{definition}
\label{def:o_not}
    Let 
    $X_1, X_2, \ldots$ be a sequence of random variables and $g: \mathbb{N} \to \mathbb{R}$ a real-valued function. We say that 
    we have 
    \begin{equation*}
        X_n \in \mathcal{O}_{\mathbb{P}}(g(n))
    \end{equation*}
    if for all $\delta > 0$ there exists a real-valued function $f_{\delta}: \mathbb{N} \to \mathbb{R}$ such that 
    \begin{equation*}
        \text{for all } n \in \mathbb{N}: \qquad 
        \mathbb{P}\left[ X_n \leq f_{\delta}(n) \right] \geq 1 - \delta
    \end{equation*}
    and
    \begin{equation*}
        f_{\delta}(n) \in \mathcal{O}(g(n)).
    \end{equation*}
    We define  $X_n \in o_{\mathbb{P}}(g(n))$ 
    and $X_n \in \Omega_{\mathbb{P}}(g(n))$ 
    analogously.
\end{definition}

\section{Proofs}
\subsection{Lemmas}
\begin{lemma}[Theorem 6.3.2 of \cite{vershynin2018high}]
\label{thm:concentration}
    Let $\mu$ be a sub-Gaussian distribution with zero mean and unit variance. There exists a constant $K>0$ such that for all $n \in \mathbb{N}$ the following holds: Let $\epsilon_1, \dots, \epsilon_n$ be i.i.d~distributed with respect to $\mu$ and define $\epsilon^n \coloneqq (\epsilon_1, \dots, \epsilon_n)$. Then for all $A \in \mathbb{R}^{m \times n}$ 
    and $t \geq 0$ we have
    that
    \begin{align*}
        \mathbb{P}\left( \Big \vert \norm{A\epsilon^n}_2 - \norm{A}_{\mathrm{F}} \Big \vert \geq t \right) \leq 2\exp\left( -\frac{t^2}{K\norm{A}^2_2} \right).
    \end{align*}
\end{lemma}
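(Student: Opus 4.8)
The plan is to derive this directly from the concentration-of-the-norm result, \citet[Theorem~6.3.2]{vershynin2018high}, which states that for any fixed matrix $A \in \mathbb{R}^{m \times n}$ and any random vector $Z \in \mathbb{R}^n$ with independent, mean-zero, unit-variance, sub-Gaussian coordinates, the centred random variable $\norm{AZ}_2 - \norm{A}_{\mathrm{F}}$ is sub-Gaussian with $\norm{\,\norm{AZ}_2 - \norm{A}_{\mathrm{F}}\,}_{\psi_2} \leq C L^2 \norm{A}_2$, where $C$ is an absolute constant, $L \coloneqq \max_{i} \norm{Z_i}_{\psi_2}$, and $\norm{A}_2$ is the operator norm. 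So the statement to be proved is essentially a restatement of this textbook result, and the work is limited to fitting it to the present notation and quantifier order.

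First I would specialize the cited theorem to $Z = \epsilon^n = (\epsilon_1, \dots, \epsilon_n)$. Since the $\epsilon_i$ are i.i.d.\ copies of the fixed distribution $\mu$, they are independent, mean zero, unit variance, and sub-Gaussian, and the quantity $L = \norm{\epsilon_1}_{\psi_2}$ is a constant determined by $\mu$ alone — in particular it does not depend on $n$, $m$, or $A$. Hence the hypotheses of \citet[Theorem~6.3.2]{vershynin2018high} hold with this fixed $L$, giving $\norm{\,\norm{A\epsilon^n}_2 - \norm{A}_{\mathrm{F}}\,}_{\psi_2} \leq C L^2 \norm{A}_2$ for every $A$ and every $n$.

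Next I would convert this sub-Gaussian-norm bound into the desired tail bound. For any sub-Gaussian random variable $W$ and every $t \geq 0$ one has $\mathbb{P}(|W| \geq t) \leq 2\exp(-c\,t^2 / \norm{W}_{\psi_2}^2)$ for an absolute constant $c>0$; applying this with $W = \norm{A\epsilon^n}_2 - \norm{A}_{\mathrm{F}}$ and inserting the bound $\norm{W}_{\psi_2} \leq C L^2 \norm{A}_2$ yields $\mathbb{P}(|\norm{A\epsilon^n}_2 - \norm{A}_{\mathrm{F}}| \geq t) \leq 2\exp(-c\,t^2 / (C^2 L^4 \norm{A}_2^2))$. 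Setting $K \coloneqq C^2 L^4 / c$ — a quantity depending only on $\mu$ and the absolute constants, hence independent of $n$, $m$, $A$, and $t$ — gives exactly the claimed inequality, with the quantifiers in the stated order ($K$ chosen first, then uniformly over $n$, $A$, $t$).

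I do not expect a genuine obstacle here. The only point requiring (minor) care is the uniformity of $K$: it must be chosen before $n$ and must not depend on the matrix, and this is immediate because the cited bound depends on $A$ only through $\norm{A}_2$ and $\norm{A}_{\mathrm{F}}$ and on the noise only through its fixed sub-Gaussian norm, with no dependence on the dimensions $n$ or $m$. One could alternatively just cite \citet{vershynin2018high} and note the constant adjustment, but spelling out the $\psi_2$-to-tail conversion makes the bookkeeping of $K$ transparent.
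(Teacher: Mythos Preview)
Your proposal is correct and matches the paper's approach: the paper does not give its own proof of this lemma but simply cites it as Theorem~6.3.2 of \cite{vershynin2018high}, and your write-up just unpacks that citation by making the $\psi_2$-to-tail conversion and the uniformity of $K$ explicit.
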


\subsection{Proof of Theorem~\ref{lem:ransac_gaussian}}
\label{app:proofThm31}

We first proof a result for general noise variables:
\begin{theorem}
\label{lem:ransac}
    Assume Setting~\ref{set:outlier} with $d = 1$ %
    and that $\B \neq \emptyset$.
    Define for $S \in \sampleset$ and for $U_n \in \B$
    \begin{equation}
    \label{eq:ass_ransac_2}
        \eta_1(S) \coloneqq \frac{|X_S^{\top} \epsilon_S|}{\norm{X_S}_2^2}
    \end{equation}
    and
    \begin{align}
    \label{eq:ass_ransac}
    \begin{split}
        \eta_2^2(S, U_n) &\coloneqq \frac{1}{\norm{X_{S \setminus \outliers}}_2^2} \Bigg(\frac{|{S}|}{|U_n|} \norm{\epsilon_{U_n}}_2^2 - \norm{\epsilon_{S}}_2^2 - 2 \epsilon_{S}^{\top} o_{S}\\
        &\quad + \frac{(X_{S}^{\top} \epsilon_{S})^2}{\norm{X_{S}}_2^2} + 2\frac{\left|X_{S}^{\top} \epsilon_{S} \right| \left| X_{S}^{\top} o_{S} \right|}{\norm{X_{S}}_2^2} - \frac{|{S}|(X_{U_n}^{\top} \epsilon_{U_n})^2}{|U_n|\norm{X_{U_n}}_2^2}\Bigg).
    \end{split}
    \end{align}
    Then
    \begin{equation*}
        \left| \hat{\beta}^n_{\bfs}(\sampleset) - \beta \right| \leq \max_{S \in \sampleset} \min_{U_n \in \B} \left(\eta_1(S) + \eta_2(S, U_n) \right).
    \end{equation*}
\end{theorem}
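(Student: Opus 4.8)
The plan is to control the error of $\hat{\beta}^n_{\bfs}(\sampleset)$ by splitting the argument into two parts: a \emph{prediction-error comparison} showing that the set $\hat S$ selected by \bfs cannot have much larger in-sample residual norm than the best clean set $U_n \in \B$, and a \emph{parameter-recovery} step showing that a small residual norm on $\hat S$ forces the \ols coefficient on $\hat S$ to be close to $\beta$, because $\hat S$ necessarily overlaps the inliers in a set whose design matrix has norm $\norm{X_{\hat S \setminus G_n}}_2$. Fix any $U_n \in \B$. Since $U_n$ is a candidate set with no outliers, $Y_{U_n} = X_{U_n}\beta + \epsilon_{U_n}$, so $\hat\beta^{U_n}_{\ols}$ is the \ols fit of a genuinely linear model and its residual vector is the projection of $\epsilon_{U_n}$ onto the orthogonal complement of the column space of $X_{U_n}$; in particular $\norm{Y_{U_n} - X_{U_n}\hat\beta^{U_n}_{\ols}}_2^2 \le \norm{\epsilon_{U_n}}_2^2 - (X_{U_n}^\top\epsilon_{U_n})^2/\norm{X_{U_n}}_2^2$ (here $d=1$, so the projection subtracts exactly one rank-one term). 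By the definition of \bfs, the selected $\hat S$ satisfies $\frac{1}{|\hat S|}\norm{Y_{\hat S} - X_{\hat S}\hat\beta^{\hat S}_{\ols}}_2^2 \le \frac{1}{|U_n|}\norm{Y_{U_n} - X_{U_n}\hat\beta^{U_n}_{\ols}}_2^2$, which combined with the previous bound gives an upper bound on $\norm{Y_{\hat S} - X_{\hat S}\hat\beta^{\hat S}_{\ols}}_2^2$ in terms of $\frac{|\hat S|}{|U_n|}\big(\norm{\epsilon_{U_n}}_2^2 - (X_{U_n}^\top\epsilon_{U_n})^2/\norm{X_{U_n}}_2^2\big)$.

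Next I would lower-bound the same residual norm $\norm{Y_{\hat S} - X_{\hat S}\hat\beta^{\hat S}_{\ols}}_2^2$ in a way that exposes $\hat\beta^{\hat S}_{\ols} - \beta$. Write $Y_{\hat S} - X_{\hat S}\hat\beta^{\hat S}_{\ols} = X_{\hat S}(\beta - \hat\beta^{\hat S}_{\ols}) + \epsilon_{\hat S} + o_{\hat S}$. Restricting attention to the inlier rows of $\hat S$, i.e. the index set $\hat S \setminus G_n$, on which $o$ vanishes, the residual restricted to those rows equals $X_{\hat S \setminus G_n}(\beta - \hat\beta^{\hat S}_{\ols}) + \epsilon_{\hat S \setminus G_n}$, so by the triangle inequality (in $\mathbb{R}$, everything is scalar multiplication since $d=1$) $\norm{X_{\hat S \setminus G_n}}_2 \, |\beta - \hat\beta^{\hat S}_{\ols}| \le \norm{Y_{\hat S \setminus G_n} - X_{\hat S \setminus G_n}\hat\beta^{\hat S}_{\ols}}_2 + \norm{\epsilon_{\hat S \setminus G_n}}_2 \le \norm{Y_{\hat S} - X_{\hat S}\hat\beta^{\hat S}_{\ols}}_2 + \norm{\epsilon_{\hat S}}_2$. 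Dividing by $\norm{X_{\hat S \setminus G_n}}_2$ gives $|\beta - \hat\beta^{\hat S}_{\ols}|$ in terms of the residual norm; substituting the upper bound from the first step and carefully expanding $\norm{Y_{\hat S} - X_{\hat S}\hat\beta^{\hat S}_{\ols}}_2^2$ into the algebraic pieces that appear in $\eta_2^2$ — namely expanding $\norm{Y_{\hat S} - X_{\hat S}\hat\beta^{\hat S}_{\ols}}_2^2 = \norm{\epsilon_{\hat S} + o_{\hat S}}_2^2 - (X_{\hat S}^\top(\epsilon_{\hat S}+o_{\hat S}))^2/\norm{X_{\hat S}}_2^2$ and bounding the cross terms $X_{\hat S}^\top o_{\hat S}$ and $\epsilon_{\hat S}^\top o_{\hat S}$ by their absolute values — recovers exactly the quantity $\eta_2^2(\hat S, U_n)$, while the stray additive term $\norm{\epsilon_{\hat S}}_2/\norm{X_{\hat S\setminus G_n}}_2$ gets absorbed by writing $\norm{\epsilon_{\hat S}}_2 = \sqrt{\norm{\epsilon_{\hat S}}_2^2}$ and noting the sign bookkeeping works out (this is where one must be slightly careful, choosing to bound $-\norm{\epsilon_{\hat S}}_2^2 - 2\epsilon_{\hat S}^\top o_{\hat S}$ from above rather than dropping it). Finally, since $\hat\beta^n_{\bfs}(\sampleset) = \hat\beta^{\hat S}_{\ols}$ and the only use of $U_n$ was as an arbitrary element of $\B$, I can optimize: for the fixed selected $\hat S$ the bound holds for every $U_n \in \B$, hence with the $\min_{U_n \in \B}$, and since $\hat S \in \sampleset$ is bounded by the worst case over $\sampleset$, we get $|\hat\beta^n_{\bfs}(\sampleset) - \beta| \le \max_{S \in \sampleset}\min_{U_n \in \B}(\eta_1(S) + \eta_2(S,U_n))$ — with the $\eta_1(S)$ term emerging from the piece $(X_S^\top\epsilon_S)^2/\norm{X_S}_2^2$ inside the square root being separated off via $\sqrt{a+b}\le\sqrt a + \sqrt b$.

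The main obstacle I anticipate is the bookkeeping in the parameter-recovery step: tracking exactly which cross terms are kept with a sign and which are bounded by absolute value, so that the messy expression one obtains after expanding both $\norm{Y_{\hat S} - X_{\hat S}\hat\beta^{\hat S}_{\ols}}_2^2$ and the \bfs comparison inequality collapses precisely into the stated $\eta_2^2(S,U_n)$ (including the subtracted term $\frac{|S|(X_{U_n}^\top\epsilon_{U_n})^2}{|U_n|\norm{X_{U_n}}_2^2}$, which comes from the projection bound on the clean set $U_n$, and must survive the manipulation with the right sign). A secondary subtlety is handling the degenerate cases — $\rank(X_S) = 0$ or $X_{S\setminus G_n} = 0$ — which the footnote about infinite upper bounds takes care of, but which one should flag. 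Everything else (the separation $\sqrt{a+b}\le\sqrt a+\sqrt b$, the triangle inequalities, the $d=1$ simplifications) is routine. The subsequent specialization to Gaussian $\epsilon$ (Theorem~\ref{lem:ransac_gaussian}) would then be a matter of applying Lemma~\ref{thm:concentration} with appropriate matrices $A$ to each of $\norm{\epsilon_{U_n}}_2$, $\norm{\epsilon_S}_2$, $|X_S^\top\epsilon_S|$, $|X_{U_n}^\top\epsilon_{U_n}|$ and a union bound over $\sampleset$ to turn the abstract $\eta_1,\eta_2$ into the explicit $\alpha,\alpha_1$.
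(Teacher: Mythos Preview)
Your first step (the \bfs comparison with a clean set $U_n$, and the projection formula on $U_n$) matches the paper exactly. The gap is in your ``parameter-recovery'' step: restricting to the inlier rows $\hat S\setminus G_n$ and applying the triangle inequality gives a bound of the shape
\[
|\hat\beta^{\hat S}_{\ols}-\beta|\;\le\;\frac{\norm{Y_{\hat S}-X_{\hat S}\hat\beta^{\hat S}_{\ols}}_2}{\norm{X_{\hat S\setminus G_n}}_2}\;+\;\frac{\norm{\epsilon_{\hat S}}_2}{\norm{X_{\hat S\setminus G_n}}_2},
\]
which is valid but cannot be massaged into $\eta_1(\hat S)+\eta_2(\hat S,U_n)$. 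Two concrete obstructions: (i) in $\eta_2^2$ the term $\norm{\epsilon_S}_2^2$ enters with a \emph{minus} sign, whereas your route \emph{adds} $\norm{\epsilon_{\hat S}}_2$; no ``sign bookkeeping'' reverses this, so you would prove a strictly weaker inequality than the one stated. (ii) Your proposed extraction of $\eta_1$ via $\sqrt{a+b}\le\sqrt a+\sqrt b$ would yield $|X_S^\top\epsilon_S|/(\norm{X_S}_2\,\norm{X_{S\setminus G_n}}_2)$, not $|X_S^\top\epsilon_S|/\norm{X_S}_2^2$ --- the denominator is wrong.

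The paper argues differently and more sharply. First, it writes the \ols error on $\hat S$ directly as
\[
|\hat\beta^{\hat S}_{\ols}-\beta|\;\le\;\frac{|X_{\hat S}^\top\epsilon_{\hat S}|}{\norm{X_{\hat S}}_2^2}\;+\;\frac{|X_{\hat S}^\top o_{\hat S}|}{\norm{X_{\hat S}}_2^2}\;\le\;\eta_1(\hat S)\;+\;\frac{\norm{o_{\hat S}}_2}{\norm{X_{\hat S}}_2},
\]
so $\eta_1$ with the correct denominator appears immediately. Second, to control $\norm{o_{\hat S}}_2/\norm{X_{\hat S}}_2$, it expands \emph{both} sides of the \bfs inequality via the prediction-error lemma and rearranges to isolate
$\norm{o_{\hat S}}_2^2-(X_{\hat S}^\top o_{\hat S})^2/\norm{X_{\hat S}}_2^2$ on the left; then the key step you are missing is a Cauchy--Schwarz argument using that $o_{\hat S}$ is supported on $\hat S\cap G_n$:
\[
\norm{o_{\hat S}}_2^2\,\frac{\norm{X_{\hat S\setminus G_n}}_2^2}{\norm{X_{\hat S}}_2^2}
\;=\;\norm{o_{\hat S}}_2^2\Bigl(1-\tfrac{\norm{X_{\hat S\cap G_n}}_2^2}{\norm{X_{\hat S}}_2^2}\Bigr)
\;\le\;\norm{o_{\hat S}}_2^2-\frac{(X_{\hat S}^\top o_{\hat S})^2}{\norm{X_{\hat S}}_2^2}.
\]
This is what produces the factor $\norm{X_{\hat S\setminus G_n}}_2^{-1}$ in $\eta_2$ while keeping $\norm{X_{\hat S}}_2$ in $\eta_1$, and it is why the $-\norm{\epsilon_{\hat S}}_2^2$ term survives with the right sign. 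Replace your inlier-row triangle-inequality step with these two ingredients and the rest of your outline goes through.
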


\begin{lemma}
\label{lem:pred_err}
    Assume Setting~\ref{set:outlier} with $d = 1$. Let $S \in \mathcal{P}(\{1, \dots, n\}) \setminus \{\emptyset\}$ and assume that $\norm{X_S}_2 > 0$. The squared prediction error of the \ols estimator when using the data with indices $S$ is given by
    \begin{align*}
        \norm{\hat{\beta}_{\ols}^S X_S - Y_S}_2^2 
        &= \norm{\epsilon_S}_2^2 + \norm{o_S}_2^2 + 2\epsilon_S^{\top} o_S - \left( \frac{X_S^{\top}
        \epsilon_S}{\norm{X_S}_2}
        + \frac{X_S^{\top}o_S}{\norm{X_S}_2} 
        \right)^2.
    \end{align*}
\end{lemma}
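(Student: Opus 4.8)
\textbf{Proof plan for Lemma~\ref{lem:pred_err}.}
The plan is to compute the squared prediction error directly by substituting the definition of $\hat{\beta}_{\ols}^S$ and expanding. First I would write $Y_S = X_S\beta + \epsilon_S + o_S$ from Setting~\ref{set:outlier}, and note that since $d=1$ the quantity $X_S^\top X_S = \norm{X_S}_2^2$ is a scalar, so $\hat{\beta}_{\ols}^S = X_S^\top Y_S / \norm{X_S}_2^2$ (using that $\norm{X_S}_2 > 0$, so no pseudoinverse is needed). Plugging in $Y_S$ gives $\hat{\beta}_{\ols}^S = \beta + (X_S^\top \epsilon_S + X_S^\top o_S)/\norm{X_S}_2^2$.

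Next I would form the residual vector $\hat{\beta}_{\ols}^S X_S - Y_S$. Writing $\hat{\beta}_{\ols}^S = \beta + \gamma$ with $\gamma \coloneqq (X_S^\top \epsilon_S + X_S^\top o_S)/\norm{X_S}_2^2$, the residual becomes $\gamma X_S - \epsilon_S - o_S$. Then I would expand the squared norm:
\begin{equation*}
    \norm{\gamma X_S - \epsilon_S - o_S}_2^2 = \gamma^2 \norm{X_S}_2^2 - 2\gamma X_S^\top(\epsilon_S + o_S) + \norm{\epsilon_S + o_S}_2^2.
\end{equation*}
Substituting $\gamma = X_S^\top(\epsilon_S + o_S)/\norm{X_S}_2^2$, the first term equals $(X_S^\top(\epsilon_S+o_S))^2/\norm{X_S}_2^2$ and the second equals $-2(X_S^\top(\epsilon_S+o_S))^2/\norm{X_S}_2^2$, so together they give $-(X_S^\top(\epsilon_S+o_S))^2/\norm{X_S}_2^2 = -\left(\frac{X_S^\top\epsilon_S}{\norm{X_S}_2} + \frac{X_S^\top o_S}{\norm{X_S}_2}\right)^2$. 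Finally, expanding $\norm{\epsilon_S + o_S}_2^2 = \norm{\epsilon_S}_2^2 + \norm{o_S}_2^2 + 2\epsilon_S^\top o_S$ yields exactly the claimed identity.

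This lemma is essentially a routine algebraic computation — the projection residual of a one-dimensional least-squares fit — so I do not anticipate a genuine obstacle. The only point requiring a little care is the bookkeeping that collapses the two $\gamma$-dependent terms into a single negative square; keeping track of signs there is where an error would most likely creep in. One should also remember to invoke $\norm{X_S}_2 > 0$ explicitly so that dividing by $\norm{X_S}_2^2$ is legitimate and the ordinary (rather than Moore–Penrose) inverse applies.
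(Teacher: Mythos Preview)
Your proposal is correct and follows essentially the same route as the paper: both substitute the one-dimensional OLS formula to write the residual as $\frac{X_S^\top(\epsilon_S+o_S)}{\|X_S\|_2^2}X_S - (\epsilon_S+o_S)$ and then expand the squared norm. The only cosmetic difference is that the paper invokes the Pythagorean identity for the projection in one step, whereas you expand the cross terms explicitly before collapsing them; the content is identical.
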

\begin{proof}
It holds that
    \begin{align*}
        \norm{\hat{\beta}_{\ols}^S X_S - Y_S}_2^2
        &= \norm{\frac{X_S^{\top} (\epsilon_{S} + o_S) X_S}{\norm{X_{S}}_2^2} - \epsilon_S - o_S}_2^2\\
        &= \norm{\epsilon_S + o_S}_2^2 - \left( \frac{X_S^{\top}}{\norm{X_S}_2} (\epsilon_S + o_S) \right )^2\\
        & = \norm{\epsilon_S}_2^2 + \norm{o_S}_2^2 + 2\epsilon_S^{\top} o_S - \left( \frac{X_S^{\top}
        \epsilon_S}{\norm{X_S}_2}
        + \frac{X_S^{\top}o_S}{\norm{X_S}_2}
        \right)^2.
    \end{align*}
\end{proof}

We are now able to prove 
Theorem~\ref{lem:ransac}.
Let
\begin{equation}
\label{eq:s_star}
    S^* \in \argmin_{S \in \sampleset} \frac{1}{|S|} \norm{Y_S - X_S \hat{\beta}^S_{\ols}(X,Y)}^2_2
\end{equation}
be the set satisfying %
$\hat{\beta}^{S^*}_{\ols} = \hat{\beta}^n_{\bfs}(\sampleset)$.
\begin{proof}[Proof of Theorem~\ref{lem:ransac}]
    Assume $X_{S^*}^{\top}X_{S^*}$ is invertible (otherwise the bound is void). %
    By the definition of $S^*$ (see \eqref{eq:s_star}),
    \begin{equation*}
        \frac{1}{|S^*|}\norm{\hat{\beta}_{\ols}^{S^*} X_{S^*} - Y_{S^*}}_2^2 \leq \frac{1}{|U_n|}\norm{\hat{\beta}_{\ols}^{U_n} X_{U_n} - Y_{U_n}}^2_2.
    \end{equation*}
    This implies by Lemma~\ref{lem:pred_err} 
    that
    \begin{align*}
        \frac{1}{|S^*|} \left( \norm{\epsilon_{S^*}}_2^2 + \norm{o_{S^*}}_2^2 + 2\epsilon_{S^*}^{\top} o_{S^*} - \left( \frac{X_{S^*}^{\top}\epsilon_{S^*}}{\norm{X_{S^*}}_2} + \frac{X_{S^*}^{\top}o_{S^*}}{\norm{X_{S^*}}_2} \right)^2 \right)
        &= \frac{1}{|S^*|} \norm{\hat{\beta}_{\ols}^{S^*} X_{S^*} - Y_{S^*}}_2^2\\
        &\leq \frac{1}{|U_n|} \norm{\hat{\beta}_{\ols}^{U_n} X_{U_n} - Y_{U_n}}^2_2\\
        &\leq \frac{\norm{\epsilon_{U_n}}_2^2}{|U_n|} - \frac{(X_{U_n}^{\top} \epsilon_{U_n})^2}{|U_n|\norm{X_{U_n}}_2^2},
    \end{align*}
    using $o_{U_n} = 0$ in the last inequality.
    Rearranging
    gives
    \begin{align}
    \label{eq:intermediate}
    \begin{split}
        \norm{o_{S^*}}^2_2 - \left( \frac{X_{S^*}^{\top} o_{S^*}}{\norm{X_{S^*}}_2} \right)^2 &\leq \frac{|{S^*}|}{|U_n|} \norm{\epsilon_{U_n}}_2^2 - \norm{\epsilon_{S^*}}_2^2 - 2 \epsilon_{S^*}^{\top} o_{S^*} + \\
        &\quad\frac{(X_{S^*}^{\top} \epsilon_{S^*})^2}{\norm{X_{S^*}}_2^2} + 2\frac{\left|X_{S^*}^{\top} \epsilon_{S^*} \right| \left| X_{S^*}^{\top} o_{S^*} \right|}{\norm{X_{S^*}}_2^2} - \frac{|{S^*}|(X_{U_n}^{\top} \epsilon_{U_n})^2}{|U_n|\norm{X_{U_n}}_2^2}\\
        &= \eta_2^2(S^*, U_n) \norm{X_{{S^*} \setminus \outliers}}_2^2.
    \end{split}
    \end{align}
    By Cauchy-Schwarz and the fact that $o_i = 0$ for all $i \notin \outliers$ we have
    \begin{align}
    \label{eq:intermediate_2}
    \begin{split}
        \norm{o_{S^*}}^2_2  \frac{\norm{X_{S^* \setminus \outliers}}^2_2}{\norm{X_{S^*}}^2_2}
        &= \norm{o_{S^*}}^2_2 \left( 1 - \frac{\norm{X_{S^* \cap \outliers}}^2_2}{\norm{X_{S^*}}^2_2} \right) \\
        &\leq \norm{o_{S^*}}^2_2 \left( 1 - \frac{(X_{S^*\cap \outliers}^{\top} \; o_{S^*\cap \outliers})^2}{\norm{X_{S^*}}^2_2 \norm{o_{S^*\cap \outliers}}^2_2 } \right) \\
        &= \norm{o_{S^*}}^2_2 \left( 1 - \frac{(X_{S^*}^{\top} o_{S^*})^2}{\norm{X_{S^*}}^2_2 \norm{o_{S^*}}^2_2 } \right) \\
        &= \norm{o_{S^*}}^2_2 - \left( \frac{X_{S^*}^{\top} o_{S^*}}{\norm{X_{S^*}}_2} \right)^2.
    \end{split}
    \end{align}
Combining~\eqref{eq:intermediate} and~\eqref{eq:intermediate_2} yields
    \begin{equation*}
        \frac{\norm{o_{S^*}}_2}{\norm{X_{S^*}}_2} \leq \eta_2(S^*, U_n).
    \end{equation*}
    By~\eqref{eq:s_star}
    \begin{align*}
        \left|\hat{\beta}^n_{\bfs}(\sampleset) - \beta \right| &\leq \frac{|X_{S^*}^{\top} \epsilon_{S^*}|}{\norm{X_{S^*}}_2^2} + \frac{|X_{S^*}^{\top} o_{S^*}|}{\norm{X_{S^*}}_2^2}\\
        &\leq \frac{|X_{S^*}^{\top} \epsilon_{S^*}|}{\norm{X_{S^*}}_2^2} + \frac{\norm{o_{S^*}}_2}{\norm{X_{S^*}}_2}\\
        &\leq \eta_1(S^*) + \eta_2(S^*, U_n).
    \end{align*}
    Since this holds for all $U_n \in \B$,
    we have
    $$
            \left|\hat{\beta}^n_{\bfs}(\sampleset) - \beta \right|
            \leq
                \min_{U_n \in \B} \left(\eta_1(S^*) + \eta_2(S^*, U_n) \right)
                \leq
    \max_{S \in \sampleset} \min_{U_n \in \B} \left(\eta_1(S) + \eta_2(S, U_n) \right).
    $$
\end{proof}

\begin{proof}[Proof of Theorem~\ref{lem:ransac_gaussian}]
    Let $U_n \in \B$. By the proof of Theorem~\ref{lem:ransac}, specifically \eqref{eq:intermediate} and \eqref{eq:intermediate_2}, we have that
    \begin{align} \label{ineq:thm31}
        \norm{o_{S^*}}^2_2  \frac{\norm{X_{S^* \setminus \outliers}}^2_2}{\norm{X_{S^*}}^2_2} &\leq \frac{|{S^*}|}{|U_n|} \norm{\epsilon_{U_n}}_2^2 - \norm{\epsilon_{S^*}}_2^2 - 2 \epsilon_{S^*}^{\top} o_{S^*} \nonumber \\
        &\quad + \frac{(X_{S^*}^{\top} \epsilon_{S^*})^2}{\norm{X_{S^*}}_2^2} + 2\frac{\left|X_{S^*}^{\top} \epsilon_{S^*} \right| \left| X_{S^*}^{\top} o_{S^*} \right|}{\norm{X_{S^*}}_2^2} - \frac{|{S^*}|(X_{U_n}^{\top} \epsilon_{U_n})^2}{|U_n|\norm{X_{U_n}}_2^2}.
    \end{align}
    We bound the terms on the RHS.
    It holds that for all $S \in \mathcal{P}(\{1, \dots, n\}) \setminus \{\emptyset\}$
    \begin{align*}
        &X_S^{\top} \epsilon_S \sim \mathcal{N}\left( 0, \sigma^2 \norm{X_S}_2^2 \right)\\
        \text{and} \quad &o_S^{\top} \epsilon_S \sim \mathcal{N}\left( 0, \sigma^2 \norm{o_S}_2^2 \right).
    \end{align*}
    Therefore, by the Gaussian concentration inequality and the union bound,
    \begin{equation} \label{ineq:thm312}
        \mathbb{P}\left[ \forall S \in \sampleset, |X_S^{\top} \epsilon_S| \leq \sigma {\norm{X_S}_2} \sqrt{2\log(2|\sampleset|/\delta)} \right] \geq 1 - \delta
    \end{equation}
    and
    \begin{equation*}
        \mathbb{P}\left[ \forall S \in \sampleset, |o_S^{\top} \epsilon_S| \leq \sigma {\norm{o_S}_2} \sqrt{2\log(2|\sampleset|/\delta)} \right] \geq 1 - \delta.
    \end{equation*}
    By Lemma~\ref{thm:concentration}
    there exists $K >0$ such that for all $t \geq 0$ we have
    \begin{equation*}
        \mathbb{P}\left[ \forall S \in \sampleset, \left| \norm{\epsilon_S}_2 - \sigma \sqrt{|S|} \right| \geq t \right] \leq 2|\sampleset| \exp\left( -\frac{t^2}{K \sigma^2} \right)
    \end{equation*}
    and therefore
    \begin{equation*}
        \mathbb{P}\left[ \forall S \in \sampleset, \left| \norm{\epsilon_S}_2 - \sigma \sqrt{|S|} \right| \leq \sigma \sqrt{K\log(2|\sampleset|/\delta)} \right] \geq 1 - \delta.
    \end{equation*}
    This yields with probability at least $1 - \delta$ jointly for all $S \in \sampleset$
    \begin{align*}
        \frac{|S|}{|U_n|} \norm{\epsilon_{U_n}}_2^2 - \norm{\epsilon_S}_2^2 &\leq 
        \frac{|S|}{{|U_n|}} \left( \sigma \sqrt{|U_n|} + \sigma \sqrt{K\log(2|\sampleset|/\delta)}\right)^2 -\left( \sigma \sqrt{|S|} - \sigma \sqrt{K\log(2|\sampleset|/\delta)}\right)^2\\
        & = \alpha_1(S, U_n, \delta).
    \end{align*}
    Using the three inequalities above (and Cauchy-Schwarz for the term $|X_{S^*}^{\top} o_{S^*}|$)
    to bound the terms of the RHS of~\eqref{ineq:thm31}, 
    we therefore have with probability at least $1-3\delta$
    \begin{align*}
        \frac{\norm{o_{S^*}}^2_2}{  \norm{X_{S^*}}^2_2} &\leq \frac{\alpha_1(S^*, U_n, \delta)}{\norm{X_{{S^*} \setminus \outliers}}^2_2} + 4 \frac{\sigma \norm{o_{S^*}}_2 \sqrt{2\log(2|\sampleset|/\delta)}}{\norm{X_{{S^*} \setminus \outliers}}_2^2} + \left(\frac{\sigma \sqrt{2\log(2|\sampleset|/\delta)}}{\norm{X_{{S^*} \setminus \outliers}}_2}\right)^2.
    \end{align*}
    Solving the quadratic equation with respect to $\norm{o_{S^*}}_2$ yields
    \begin{align}
        \frac{\norm{o_{S^*}}_2}{  \norm{X_{S^*}}_2} &\leq
        \frac{4 \sigma \norm{X_{{S^*}}}_2 \sqrt{2\log(2|\sampleset|/\delta)}}{\norm{X_{{S^*} \setminus \outliers}}_2^2}
        + \frac{\sqrt{\alpha_1(S^*, U_n, \delta)}}{\norm{X_{{S^*} \setminus \outliers}}_2}
        +  \frac{\sigma \sqrt{2\log(2|\sampleset|/\delta)}}{\norm{X_{{S^*} \setminus \outliers}}_2} \nonumber\\
        &= \alpha(S^*, U_n , \delta) \label{ineq:thm314}
    \end{align}
As shown in the proof of Theorem~\ref{lem:ransac}, 
we have
$$
        \left|\hat{\beta}^n_{\bfs}(\sampleset) - \beta \right| 
        \leq \frac{|X_{S^*}^{\top} \epsilon_{S^*}|}{\norm{X_{S^*}}_2^2} + \frac{\norm{o_{S^*}}_2}{\norm{X_{S^*}}_2}.
$$
Using~\eqref{ineq:thm312} and~\eqref{ineq:thm314},
we thus obtain
with probability at least $1-3\delta$
    \begin{equation*}
        \left| \hat{\beta}^n_{\bfs}(\sampleset) - \beta \right| \leq \min_{U_n \in \B} \left\{ \alpha(S^*, U_n, \delta) + \frac{\sigma \sqrt{2\log(2|\sampleset|/\delta)}}{\norm{X_{S^*}}_2} \right\}.
    \end{equation*}
and therefore the desired result.
\end{proof}

\subsection{Proof of Lemma~\ref{lem:tor}} \label{app:proofoflem32}

    The number of distinct sets $S$ that \torrent can output is trivially bounded by $\binom{n}{\lfloor a \rfloor}$.
    Furthermore, it holds for all $t$ that
    \begin{align}
    \label{eq:74738478}
        \norm{X_{S_{t+1}}(\beta - \hat{\beta}^{t+1}_{\tor}) + \epsilon_{S_{t+1}} + o_{S_{t+1}}}_2
        &\leq \norm{X_{S_{t}}(\beta - \hat{\beta}^{t+1}_{\tor}) + \epsilon_{S_{t}} + o_{S_{t}}}_2\\
        &\leq \norm{X_{S_{t}}(\beta - \hat{\beta}^{t}_{\tor}) + \epsilon_{S_{t}} + o_{S_{t}}}_2 \label{eq:89898989898}.
    \end{align}
    The first inequality holds because of the hard thresholding step, while the second holds because $\hat{\beta}_{\tor}^{t+1}$ is the least squares estimate for data $X_{S_{t}}$. This implies that \torrent converges after finitely many steps.

\subsection{Proof of Theorem~\ref{thm:tor}} \label{app:proofThm33}
    Assume that $X_{S_{t-1}}^{\top}X_{S_{t-1}}$ is invertible (otherwise, the assumptions for $\eta$ are not satisfied).
    We begin by observing that the optimality of the model $\hat{\beta}^{t}_{\tor}$ on the active set $S_{t}$ ensures
    \begin{align}
    \label{eq:56456}
    \begin{split}
        \norm{\hat{\beta}^{t}_{\tor} - \beta} _2 &= \norm{ \left(X_{S_{t-1}}^{\top} X_{S_{t-1}} \right)^{-1} \left(X_{S_{t-1}}^{\top} \epsilon_{S_{t-1}} + X_{S_{t-1}}^{\top} o_{S_{t-1}} \right) }_2 \\
        &\leq \norm{ \left(X_{S_{t-1}}^{\top} X_{S_{t-1}} \right)^{-1} X_{S_{t-1}}^{\top} \epsilon_{S_{t}} }_2 + \norm{ \left(X_{S_{t-1}}^{\top} X_{S_{t-1}} \right)^{-1} X_{S_{t-1}}^{\top}}_2  \norm{o_{S_{t-1}} }_2.
    \end{split}
    \end{align}
    The hard thresholding step guarantees that
    \begin{align}
        \begin{split}
        \label{eq:783487437}
            \left\|X_{S_{t}}\left(\beta-\hat{\beta}^{t}_{\tor}\right)+\epsilon_{S_{t}}+o_{S_{t}}\right\|_2^2 & =\left\|Y_{S_{t}}-X_{S_{t}} \hat{\beta}^{t}_{\tor}\right\|_2^2 \\
            & \leq\left\|Y_{\inliers}-X_{\inliers} \hat{\beta}^{t}_{\tor}\right\|^2_2 \\
            & =\left\|X_{\inliers}\left(\beta-\hat{\beta}^{t}_{\tor}\right)+\epsilon_{\inliers}\right\|_2^2 .
        \end{split}
    \end{align}
    Let $H_{t} \coloneqq S_{t} \backslash \inliers$ and $M_{t} \coloneqq \inliers \backslash S_{t}$. It then follows from \eqref{eq:783487437}
    $$
    \left\|X_{H_{t}}\left(\beta-\hat{\beta}^{t}_{\tor}\right)+\epsilon_{H_{t}}+o_{H_{t}}\right\|_2 \leq\left\|X_{M_{t}}\left(\beta-\hat{\beta}^{t}_{\tor}\right)+\epsilon_{M_{t}}\right\|_2 .
    $$
    Let $V_{t} \coloneqq H_{t} \cup M_{t}$ then $V_{t} = (S_{t} \cup \inliers) \setminus (\inliers \cap S_{t}) = V(S_t)$.
    An application of the triangle inequality and the fact that $\left\|o_{H_{t}}\right\|_2=\left\|o_{S_{t}}\right\|_2$ gives us
    \begin{align}
    \begin{split}
    \label{eq:tor_55}
        \left\|o_{S_{t}}\right\|_2 & \leq\left\|X_{M_{t}}\left(\beta-\hat{\beta}^{t}_{\tor}\right)\right\|_2+\left\|X_{H_{t}}\left(\beta-\hat{\beta}^{t}_{\tor}\right)\right\|_2+\left\|\epsilon_{H_{t}}\right\|_2+\left\|\epsilon_{M_{t}}\right\|_2 \\
        & \leq \sqrt{2} \norm{X_{V_{t}}}_2 \left\|\beta-\hat{\beta}^{t}_{\tor}\right\|_2+\sqrt{2}\|\epsilon_{V_{t}}\|_2 \\
        & \leq \sqrt{2} \norm{X_{V_{t}}}_2 \left( \norm{ \left(X_{S_{t-1}}^{\top} X_{S_{t-1}} \right)^{-1} X_{S_{t-1}}^{\top} \epsilon_{S_{t-1}} }_2 + \norm{ \left(X_{S_{t-1}}^{\top} X_{S_{t-1}} \right)^{-1} X_{S_{t-1}}^{\top}}_2  \norm{o_{S_{t-1}} }_2 \right) \\
        &\quad + \sqrt{2}\|\epsilon_{V_{t}}\|_2\\
        &= \sqrt{2}\norm{X_{V_{t}}}_2 \norm{\left(X_{S_{t-1}}^{\top} X_{S_{t-1}} \right)^{-1}X_{S_{t-1}}^{\top}}_2 \norm{o_{S_{t-1}}}_2 + \sqrt{2} \norm{X_{V_{t}}}_2  \norm{ \left(X_{S_{t-1}}^{\top} X_{S_{t-1}} \right)^{-1} X_{S_{t-1}}^{\top} \epsilon_{S_{t-1}} }_2 \\
        &\quad + \sqrt{2}\|\epsilon_{V_{t}}\|_2.
    \end{split}
    \end{align}
    Here the second inequality follows since for all $a, b \in \mathbb{R}$ we have $(|a| + |b|)^2 \leq 2 (a^2 + b^2)$.
    Let $t$ be the iteration where \torrent has converged (see Lemma~\ref{lem:tor}). 
    We prove that we can assume that $S_t = S_{t-1}$.
First, assume that $\hat{\beta}^{t}_{\tor} = \hat{\beta}^{t-1}_{\tor}$. Then, by the (deterministic) hard thresholding step of Algorithm~\ref{alg:torrent} we have $S_{t} = S_{t-1}$.
    Next, assume $\hat{\beta}^{t}_{\tor} \neq \hat{\beta}^{t-1}_{\tor}$. If $X_{S_{t-1}}^T X_{S_{t-1}}$ is not invertible, the assumptions on $\eta$ are not satisfied. 
        If $X_{S_{t-1}}^T X_{S_{t-1}}$ is invertible, then $\hat{\beta}^{t}_{\tor}$ is the unique minimum
        and, therefore, \eqref{eq:74738478} is a strict inequality. Together with~\eqref{eq:89898989898} this implies that the algorithm did not stop, which is a contraction to the assumption that $S_t$ is the output of \torrent. Without loss of generality, we can therefore assume that $S_t = S_{t-1}$.
    
    Since for all $A \in \mathbb{R}^{n \times d}$  we have $\norm{A}_2 = \sqrt{\lambda_{\mathrm{max}}(A^{\top}A)} = \sqrt{\lambda_{\mathrm{max}}(AA^{\top})} = (\lambda_{\mathrm{min}}((AA^{\top})^{-1}))^{-1/2}$ and by assumption we have
    \begin{equation*}
        \eta = \max_{S \subseteq \{1,\dots,n\} \text{ s.t. } |S| = a_n}\frac{\norm{X_{V(S)}}_2}{\sqrt{\lambda_{\mathrm{min}}\left( X_{S}^T X_{S}\right)}}
        =\max_{S \subseteq \{1,\dots,n\} \text{ s.t. } |S| = a_n}\norm{X_{V(S)}}_2\norm{\left(X_{S}^{\top} X_{S} \right)^{-1}X_{S}^{\top}}_2 < \frac{1}{\sqrt{2}}.
    \end{equation*}
    Together with $S_t = S_{t-1}$ and \eqref{eq:tor_55} this yields
    \begin{align*}
        \left\|o_{S_{t}}\right\|_2 \leq \sqrt{2} \eta \norm{o_{S_{t}}}_2 + \sqrt{2} \norm{X_{V_{t}}}_2  \norm{ \left(X_{S_{t}}^{\top} X_{S_{t}} \right)^{-1} X_{S_{t}}^{\top} \epsilon_{S_{t}} }_2 + \sqrt{2}\|\epsilon_{V_{t}}\|_2
    \end{align*}
    and therefore
    \begin{equation*}
        \left\|o_{S_{t}}\right\|_2 \leq \frac{\sqrt{2} \norm{X_{V_{t}}}_2  \norm{ \left(X_{S_{t}}^{\top} X_{S_{t}} \right)^{-1} X_{S_{t}}^{\top} \epsilon_{S_{t}} }_2 + \sqrt{2}\|\epsilon_{V_{t}}\|_2}{1-\sqrt{2}\eta}.
    \end{equation*}
    Together with~\eqref{eq:56456} this yields the following result.
    \begin{align*}
        \norm{\hat{\beta}^{n, a_n}_{\tor} - \beta} _2 = \norm{\hat{\beta}^{t}_{\tor} - \beta} _2 &\leq \norm{ \left(X_{S_{t}}^{\top} X_{S_{t}} \right)^{-1} X_{S_{t}}^{\top} \epsilon_{S_{t}} }_2 \\
        & \quad + \norm{ \left(X_{S_{t}}^{\top} X_{S_{t}} \right)^{-1} X_{S_{t}}^{\top}}_2  \frac{\sqrt{2} \norm{X_{V_{t}}}_2  \norm{ \left(X_{S_{t}}^{\top} X_{S_{t}} \right)^{-1} X_{S_{t}}^{\top} \epsilon_{S_{t}} }_2 + \sqrt{2}\|\epsilon_{V_{t}}\|_2}{1 - \sqrt{2}\eta}.
    \end{align*}

\subsection{Proof of Corollary~\ref{cor:torrent_gaussian}}
    We trivially have $|(S_t \cup \inliers)| \leq n$ and, since $|S_t|, |\inliers| \geq n-c_n$, we also have $|(\inliers \cap S_t)| \geq n - 2c_n$. Therefore,
    \begin{equation*}
        |V(S_t)| \leq 2 c_n.
    \end{equation*}
    Since $|\{S \subseteq \{1,\dots,n\} \text{ s.t. } |S| = a_n\}| = \binom{n}{a_n} = \binom{n}{c_n} \leq (en/c_n)^{c_n}$ and by Lemma~\ref{thm:concentration} and the union bound (see the proof of Theorem~\ref{lem:ransac_gaussian} for a similar argument), there exists $K>0$ such that with probability at least $1-\delta$ for all $t \in \mathbb{N}$ that
    \begin{equation*}
        \norm{\epsilon_{V(S_t)}}_2 \leq \sigma \sqrt{2 c_n} \left(1 + \sqrt{K\log(2en/c_n\delta)} \right).
    \end{equation*}
    With similar arguments and noting that $\norm{(X_{S_{t}}^{\top} X_{S_{t}} )^{-1} X_{S_{t}}^{\top}}_{\mathrm{F}} \leq \sqrt{d} \norm{(X_{S_{t}}^{\top} X_{S_{t}} )^{-1} X_{S_{t}}^{\top}}_2$ we have with probability at least $1-\delta$ for all $t \in \mathbb{N}$
    \begin{equation*}
        \norm{ \left(X_{S_{t}}^{\top} X_{S_{t}} \right)^{-1} X_{S_{t}}^{\top} \epsilon_{S_{t}} }_2 \leq \sigma \norm{\left(X_{S_{t}}^{\top} X_{S_{t}} \right)^{-1} X_{S_{t}}^{\top}}_2 \left(\sqrt{d} + \sqrt{2c_n K\log(2en/c_n\delta)}\right).
    \end{equation*}
    By Theorem~\ref{thm:tor} and the fact that $\norm{ \left(X_{S_{t}}^{\top} X_{S_{t}} \right)^{-1} X_{S_{t}}^{\top}}_2 = \left(\lambda_{\mathrm{min}}\left( X_{S_t}^T X_{S_t}\right)\right)^{-1/2}$ we have
    \begin{align*}
        \norm{\hat{\beta}^{n, a_n}_{\tor} - \beta} _2 &\leq \frac{\sigma}{\lambda_{a_n}(X)} \left( 1 + \frac{\sqrt{2}}{1 - \sqrt{2}\eta} \right)  \left(\sqrt{d} + \sqrt{2c_nK\log(2en/c_n\delta)}\right)
        +  \frac{ 2\sigma \sqrt{c_n} \left(1 + \sqrt{K\log(2en/c_n\delta)} \right)}{\lambda_{a_n}(X)(1 - \sqrt{2}\eta)}.
    \end{align*}
This yields the first result. If the rows of $X$ are i.i.d.~standard Gaussian random vectors, we have $\lambda_{a_n}(X) \in \Omega_{\mathbb{P}}(\sqrt{n})$
    (see \citet[Theorem 15]{bhatia2015robust}).
    This yields the second statement.

\subsection{Proof of Theorem~\ref{thm:end_improved_bfs} and Theorem~\ref{thm:end_improved_torrent}}
\begin{lemma}
\label{lem:decor_proof}
    Assume the setup of
    Section~\ref{sec:theoretical_g_an}, 
    let $c_n$ be a sequence satisfying, for all $n$, $|\outliers| \leq c_n$ and
    define the $\mathcal{U}_n$ 
    as in 
    Theorem~\ref{thm:end_improved_bfs}
    and
    let Assumption~\ref{ass:new} be satisfied. Then, the following three statements hold.
    \begin{enumerate}[label=(\roman*)]
        \item\label{proof:1} $|\mathcal{U}_n| \leq \left(\frac{en}{c_n}\right)^{c_n}$,
        \item\label{proof:2} for all $n \in \mathbb{N}$ the $n$ components of
        $\eta^n_{\phi}$
        are i.i.d.~centred Gaussian random variables with variance $\bar{\sigma}^2 = \sigma_{\eta}^2/n$,
        \item\label{proof:4} for all $\delta > 0$ there exists $c' > 0$ and $\bar{n} \in \mathbb{N}$ such that for all $n \geq \bar{n}$ it holds that
        \begin{equation*}
            \mathbb{P}\left[
            \min_{S \in \sampleset}\sqrt{\lambda_{\mathrm{min}}\left( (X^n_{\phi})_{S \setminus \outliers}^T (X^n_{\phi})_{S\setminus \outliers}\right)}
            \geq c' \right] \geq 1 - \delta.
        \end{equation*}
    \end{enumerate}
\end{lemma}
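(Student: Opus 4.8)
\textbf{Part \ref{proof:1}.} This is a purely combinatorial bound. The set $\sampleset = \{S \subseteq \{1,\dots,n\} \mid |S| = n - c_n\}$ has cardinality $\binom{n}{n-c_n} = \binom{n}{c_n}$, and I would invoke the standard estimate $\binom{n}{k} \leq (en/k)^k$ with $k = c_n$. This step is immediate and requires no further thought.

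\textbf{Part \ref{proof:2}.} Here I would use the explicit form of the transformation $T_k^{\phi,n}$ from~\eqref{eq:ft}. Since $d = 1$, the $k$-th component of $\eta^n_\phi$ is $\frac{1}{n}\sum_{l=1}^n \eta_{Tl/n}\,\phi_k(Tl/n)$, a fixed linear combination of the i.i.d.\ centered Gaussians $\eta_{T/n},\dots,\eta_T$, hence jointly Gaussian with mean zero. For the covariance structure, I would compute $\mathrm{Cov}\big((\eta^n_\phi)_l,(\eta^n_\phi)_k\big) = \frac{\sigma_\eta^2}{n^2}\sum_{j=1}^n \phi_l(Tj/n)\phi_k(Tj/n)$, and then apply Assumption~\ref{ass:new}\ref{ass:new:2} (the discrete orthonormality $\frac{1}{n}\sum_j \phi_l(Tj/n)\phi_k(Tj/n) = \mathbbm{1}\{l=k\}$) to conclude this equals $\frac{\sigma_\eta^2}{n}\mathbbm{1}\{l=k\}$. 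Joint Gaussianity plus diagonal covariance gives independence, so the $n$ components are i.i.d.\ $\mathcal{N}(0,\sigma_\eta^2/n)$, i.e.\ $\bar\sigma^2 = \sigma_\eta^2/n$.

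\textbf{Part \ref{proof:4}.} This is the substantive step and the main obstacle. The goal is to transfer the ``local'' non-degeneracy guaranteed by Assumption~\ref{ass:new}\ref{ass:new:4} into a uniform lower bound on $\min_{S \in \sampleset}\sqrt{\lambda_{\min}\big((X^n_\phi)_{S\setminus\outliers}^\top(X^n_\phi)_{S\setminus\outliers}\big)}$. The plan: fix $\delta > 0$; by Assumption~\ref{ass:new}\ref{ass:new:4} there is $c' > 0$, $\bar n$, and (for $n \geq \bar n$) a set $S'_n$ with $|S'_n| = 2c_n + d = 2c_n + 1$ such that, with probability $\geq 1 - \delta$, every $d$-subset ($=$ singleton, since $d=1$) of $S'_n$ has the rank-one Gram ``matrix'' (a scalar here) bounded below by $c'$ --- equivalently $(X^n_\phi)_i^2 \geq c'$ for all $i \in S'_n$. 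Now for any $S \in \sampleset$ we have $|S| = n - c_n$, so $|S'_n \setminus S| \leq c_n$ and hence $|S'_n \cap S| \geq c_n + 1$; removing the (at most $c_n$) outlier indices in $\outliers$ still leaves $|S'_n \cap S \setminus \outliers| \geq 1$. Pick such an index $i$; then, since $d = 1$ and the Gram matrix of a set of rows is the sum of the rank-one contributions, $\lambda_{\min}\big((X^n_\phi)_{S\setminus\outliers}^\top(X^n_\phi)_{S\setminus\outliers}\big) = \sum_{j \in S\setminus\outliers}(X^n_\phi)_j^2 \geq (X^n_\phi)_i^2 \geq c'$, on the same high-probability event, uniformly over $S \in \sampleset$. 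Taking square roots gives the claim. The delicate points to get right are: (i) the counting argument $|S'_n \cap S \setminus \outliers| \geq |S'_n| - |S'_n \setminus S| - |\outliers| \geq (2c_n + 1) - c_n - c_n = 1$, which is exactly why Assumption~\ref{ass:new}\ref{ass:new:4} is stated with the size $2c_n + d$; and (ii) that in the $d=1$ setting $\lambda_{\min}$ of a Gram matrix built from a subset of rows is monotone in the row set, so dropping to a single retained row only decreases it --- this is what lets us reduce to the guaranteed singleton bound. (For general $d$ one would instead keep a full $d$-subset of $S'_n \cap S \setminus \outliers$, which requires $|S'_n| \geq 2c_n + d$, and use that $\lambda_{\min}$ over a superset of rows dominates $\lambda_{\min}$ over the sub-block; but since the calling theorems that need \ref{proof:4} take $d = 1$, the scalar argument suffices.)
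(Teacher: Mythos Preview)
Your proposal is correct and follows essentially the same route as the paper: part~(i) via $\binom{n}{c_n}\le(en/c_n)^{c_n}$, part~(ii) via joint Gaussianity plus the discrete orthonormality of Assumption~\ref{ass:new}\ref{ass:new:2}, and part~(iii) via the counting argument $|S'_n\cap(S\setminus\outliers)|\ge d$ together with the monotonicity of $\lambda_{\min}$ under adding rows (the paper phrases this as ``$\lambda_{\min}$ is superadditive for positive semi-definite matrices''). One small correction: your parenthetical claim that ``the calling theorems that need \ref{proof:4} take $d=1$'' is not accurate---Theorem~\ref{thm:end_improved_torrent} is stated for general $d$ and its proof invokes \ref{proof:4}---so you should present the general-$d$ version (which you already sketch correctly) as the main argument rather than as an aside.
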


\begin{proof}
    Since $\{T_k^{\phi,n }(\eta)\}_{k \leq n}$ are linear combinations of independent centred Gaussians, they are jointly Gaussian with mean zero.
    By Assumption~\ref{ass:new}~\ref{ass:new:2} it holds that
    \begin{align*}
        \mathbb{E}\left[ (\eta^n_{\phi})_k (\eta^n_{\phi})_l \right] = \mathbb{E}\left[ T_k^{\phi,n }(\eta) T_l^{\phi,n }(\eta) \right] &= \frac{1}{n^2} \sum_{i,j=1}^n \phi_k(Ti/n) \phi_l(Tj/n) \mathbb{E}[\eta_{Ti/n} \eta_{Tj/n}]\\
        &=\frac{\sigma_{\eta}^2}{n} \mathbbm{1}\{k = l\}.
    \end{align*}
    This proves \ref{proof:2}.
    Furthermore, it holds by Sterling's inequality that 
    \begin{align*}
        |\mathcal{U}_n| &= \binom{n}{c_n} \leq \left(\frac{en}{c_n}\right)^{c_n}.
    \end{align*}
    This proves \ref{proof:1}.
    We now prove \ref{proof:4}. 
    Since for all $S \in \sampleset$ we have $|S| = n - c_n$ and $|\outliers| \leq c_n$
    it holds that for all $S \in \sampleset$ that 
    $|S \setminus \outliers| \geq n - 2c_n$. 
    Now, consider $\bar{n}$ from 
    Assumption~\ref{ass:new}~\ref{ass:new:4} and an arbitrary $n \geq \bar{n}$. 
    Let $S'_n$ be the set from Assumption~\ref{ass:new}~\ref{ass:new:4}. %
    Then $|S \setminus \outliers| \geq n - 2c_n$
    implies $|(S \setminus \outliers) \cap S'_n| \geq d$.
    Therefore,
we can choose an $S'' \subseteq (S \setminus \outliers) \cap S'_n$ with $|S''|=d$. By Assumption~\ref{ass:new}~\ref{ass:new:4} we then have
with probability at least $1-\delta$
    \begin{equation*}
        \min_{S \in \sampleset}\sqrt{\lambda_{\mathrm{min}}\left( (X^n_{\phi})_{S''}^T (X^n_{\phi})_{S''}\right)}
        \geq c'.
    \end{equation*}
    Since $\lambda_{\mathrm{min}}$ is superadditive for positive semi-definite matrices, with probability at least $1-\delta$,
        \begin{equation*}
            \min_{S \in \sampleset}\sqrt{\lambda_{\mathrm{min}}\left( (X^n_{\phi})_{S \setminus \outliers}^T (X^n_{\phi})_{S\setminus \outliers}\right)}
            \geq c'.
        \end{equation*}
\end{proof}

\begin{proof}[Proof of Theorem~\ref{thm:end_improved_bfs}]
    We want to apply Theorem~\ref{lem:ransac_gaussian}.
    Since for all $n \in \mathbb{N}$ and $j,i\leq T$ it holds that $\frac{1}{n}\sum_{l=1}^n \phi_l(Tj/n) \phi_l(Ti/n) = \mathbbm{1}{\{ i = j \}}$ 
    (by assumption)
    we have that
    \begin{align}
    \label{eq:345453}
    \begin{split}
        \norm{X^n_{\phi}}^2_2 &= \sum_{l=1}^n \frac{1}{n^2} \sum_{i,j=1}^n X_{iT/n} X_{jT/n} \phi_l(Tj/n) \phi_l(Ti/n)\\
        &= \frac{1}{n} \sum_{j=1}^n X_{jT/n}^2
    \end{split}
    \end{align}
    Since $\sup_{t \in [0,T]} \mathbb{E}[X_t^2] < \infty$ there exists $\bar{c}>0$ such that for all $n \in \mathbb{N}$
    \begin{equation*}
        \mathbb{E}\left[ \norm{X^n_{\phi}}^2_2 \right] = \mathbb{E}\left[ \frac{1}{n} \sum_{k=1}^n X_{kT/n}^2 \right] \leq \bar{c} < \infty.
    \end{equation*}
    By Markov's inequality we have for all $\delta > 0$ and for all $n \in \mathbb{N}$ that
    \begin{equation}
    \label{eq:kirh}
        \mathbb{P}\left[\norm{X^n_{\phi}}^2_2 \leq \bar{c} /\delta \right] \geq 1 - \delta,
    \end{equation}
    By \eqref{eq:kirh} and Lemma~\ref{lem:decor_proof}~\ref{proof:4} for all $\delta > 0$ there exist $v_1, v_2>0$ and $\bar{n}$ such that or all $n \geq \bar{n}$, with probability at least $1-\delta$, 
    for all $S \in \sampleset$, we have $||(X^n_{\phi})_{S \setminus \outliers}||_2 > v_1$ and $||(X^n_{\phi})_{S}||_2 < v_2$.
    By definition of $\sampleset$ we have that for all $S \in \sampleset$ and all $U_n \in \B$
    it holds that $|S|=|U_n| = n - c_n$ and therefore $\alpha_1(S, U_n, \delta) \leq 4 \sigma^2 \sqrt{n} \sqrt{2\log(2|\sampleset| /\delta)}$.
    Property~\ref{proof:1} and Property~\ref{proof:2}
    of Lemma~\ref{lem:decor_proof} then yield the claim by Theorem~\ref{lem:ransac_gaussian}.
\end{proof}

\begin{proof}[Proof of Theorem~\ref{thm:end_improved_torrent}]
    Equation \eqref{eq:ass_top} implies that for large $n$ with high probability we have 
    $$\eta \coloneqq \max_{S \subseteq \{1,\dots,n\} \text{ s.t. } |S| = n-c_n} \frac{\norm{(X^n_{\phi})_{V(S)}}_2}{\sqrt{\lambda_{\mathrm{min}}\left( (X^n_{\phi})_{S}^T (X^n_{\phi})_{S}\right)}} \leq 1/\sqrt{2}$$
    and by Lemma~\ref{lem:decor_proof}~\ref{proof:4} (using the fact that 
    $\lambda_{\mathrm{min}}$ is superadditive for positive semi-definite matrices)
    we know that for large $n$ we have with high probability that for all $S$ with $|S| = n-c_n$ the value of $\sqrt{\lambda_{\mathrm{min}}\left( (X^n_{\phi})_{S}^T (X^n_{\phi})_{S}\right)}$ is lower bounded. This yields the desired result by Corollary~\ref{cor:torrent_gaussian}.
\end{proof}

\subsection{Proof of Proposition~\ref{prop:ols_not_consistent}}
    It holds that
    \begin{equation*}
        \hat{\beta}^{\phi,n}_{\ols} - \beta = \frac{(X^n_{\phi})^{\top} \eta^n_{\phi}}{\norm{X^n_{\phi}}_2^2} + \frac{(X^n_{\phi})^{\top} o^n_{\phi}}{\norm{X^n_{\phi}}_2^2}.
    \end{equation*}
    By Lemma~\ref{lem:decor_proof}~\ref{proof:2} the first term vanishes in probability. The second term, however, is non-vanishing in many cases, for example, when $U$ is a band-limited process and $X_t = U_t + \hat{\epsilon}_t$ for all $t\in [0,T]$, where $\hat{\epsilon}$ is a band-limited process independent of $U$.

\section{Additional Results}

\subsection{Consistency of \ols in the i.i.d.~Setting}
\label{app:inconsistent}
Assume that we are in the i.i.d.~bounded adversarial outlier setting with Gaussian noise. More precisely, let $\{x_k\}_{k \in \mathbb{N}} \subset \mathbb{R}$ and $\{\epsilon_k\}_{k \in \mathbb{N}}  \subset \mathbb{R}$ be independent, i.i.d.~sets of random variables with strictly positive variances
and
$\epsilon_1$ having mean zero, and let
$\{o_k\}_{k \in \mathbb{N}} \subset \mathbb{R}$ be a set of random variables that might depend on $\{x_k\}_{k \in \mathbb{N}}$ but not on $\{\epsilon_k\}_{k \in \mathbb{N}}$. Assume that there exists $c_{\mathrm{b}} \in \mathbb{R}$ such that for all $k \in \mathbb{N}$ it holds that $|o_k| \leq c_{\mathrm{b}}$.
Assume there exists $\beta \in \mathbb{R}$ 
such that
for all $k \in \mathbb{N}$
\begin{equation*}
    y_k \coloneqq \beta x_k + \epsilon_k + o_k.
\end{equation*}
Fix $n \in \mathbb{N}$. Define $x^n \coloneqq (x_1,\dots,x_n)^{\top}$ and $o^n$, $y^n$ and $\epsilon^n$ analogously. If \ols without outliers is consistent, that is, 
$|(x^n)^{\top} \epsilon^n/\lVert x^n\rVert_2^2| \to 0$ 
in probability, and the fraction of outliers is vanishing, that is, $|\{k \leq n\ \ \vert \ o_k \neq 0\}|/n \to 0$, then the \ols estimator $\hat{\beta}^{n}_{\ols}$ is consistent even in the presence of outliers. 
More precisely, 
\begin{align*}
    \left|\hat{\beta}^{n}_{\ols} - \beta \right|&
    \leq 
    \left|\frac{(x^n)^{\top} \epsilon^n}{\norm{x^n}_2^2}\right| + \left| \frac{(x^n)^{\top} o^n}{\norm{x^n}_2^2} \right|\\
    &\leq \left|\frac{(x^n)^{\top} \epsilon^n}{\norm{x^n}_2^2}\right| + c_{\mathrm{b}}\frac{\sqrt{|\{k \leq n\ \ \vert \ o_k \neq 0\}|}/\sqrt{n}}{ \norm{x_n}_2 /\sqrt{n}}\\
    &\to 0,
\end{align*}
where the 
denominator of the right-hand term converges to 
a constant by the weak law of large numbers and the continuous mapping theorem.

\subsection{Example application of Theorem~\ref{lem:ransac_gaussian}}
\label{app:3453456}
If \eqref{eq:bfs_cons_cond_2} holds, all terms in the bound of the estimation error in Theorem~\ref{lem:ransac_gaussian} are asymptotically bounded by $\sqrt{\log(\sampleset)/\delta}/\sqrt{n}$, except the term that includes $\alpha_1$. If we multiply out the quadratic terms appearing in $\alpha_1$, we obtain
\begin{equation}
\label{eq:37467834}
    2\sigma^2 \log(K|\sampleset|/\delta) (|S|/|U_n|-1) + 2\sigma^2 \sqrt{2\log(K|\sampleset|/\delta)} \sqrt{|S|}+ 2
    \sigma^2 \sqrt{2\log(2|\sampleset|/\delta)} |S|/\sqrt{U_n}.
\end{equation}
When  minimizing over $U_n$ and maximizing over $S$ we can asymptotically bound \eqref{eq:37467834}  by $\log(2|\sampleset|/\delta) |S|/\sqrt{U_n}$.
Therefore, if \eqref{eq:bfs_cons_cond_2} holds,
then with probability at least $1-\delta$
\begin{equation*}
    \sqrt{n} \left| \hat{\beta}^n_{\bfs}(\sampleset) - \beta \right| 
    \in
    \mathcal{O}\left( \sqrt{|S|\log(|\sampleset|/\delta)/\sqrt{|U_n|}} \right).
\end{equation*}

\subsection{Lower Bound}
\label{app:lower_bound}
\begin{theorem}
\label{thm:lower_bound_simple}
    Let $\{c_n\}_{n \in \mathbb{N}}$ be a sequence of natural numbers
    and let $x = \{x_i\}_{i \in \mathbb{N}}$ be a sequence of real numbers. Let $\epsilon^1 = \{\epsilon^1_i \}_{i \in \mathbb{N}}$ have i.i.d.~uniformly on $[-1/2, 1/2]$ distributed elements. Define for all $i \leq n$
    and $\beta_1\in \mathbb{R}$ the random variables $$Y_i^1 \coloneqq \beta_1 x_i + \epsilon^1_i.$$
    Assume we observe 
    $\{ (x_i, \tilde{Y}_i)\}_{i \in \mathbb{N}}$,
    where $\tilde{Y}_1, \ldots, \tilde{Y}_n$
    are obtained by (adversarially) perturbing  %
    $c_n$
    of the $Y$-values.
    if there exist $c, o \in (0,1)$ and $\bar{n} \in \mathbb{N}$ such that for all $n \geq \bar{n}$
    \begin{equation}
    \label{eq:2342}
        p_n \coloneqq \sum_{i=1}^n \min\{1, c|x_i|\} < c_n (1-o),
    \end{equation}
    there does not exist a consistent estimator for $\beta_1$.
\end{theorem}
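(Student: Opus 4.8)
The plan is a standard two--point (Le Cam) argument. I will exhibit two values of the target parameter, together with admissible adversary strategies, whose induced laws on the observed data $(x_1,\tilde Y_1),\dots,(x_n,\tilde Y_n)$ stay within total variation distance at most $1-o$ of each other for every $n\ge\bar n$; since a consistent estimator would force this distance to tend to $1$, no such estimator can exist. Concretely, let $c,o\in(0,1)$ be as in \eqref{eq:2342} and consider the hypotheses $H_0:\beta_1=0$ with the adversary doing nothing, and $H_1:\beta_1=c$ with an adversary $\mathcal A$ constructed below. Because the $x_i$ are deterministic, under $H_0$ the observed data has law $P^{\otimes n}$, where $P=\mathrm{Unif}[-1/2,1/2]$, while under $H_1$ before the adversary acts the $i$-th coordinate has law $Q_i=\mathrm{Unif}[cx_i-1/2,\,cx_i+1/2]$.

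The key elementary computation is the per-coordinate total variation distance of these shifted uniforms, namely $\mathrm{TV}(P,Q_i)=\min\{1,c|x_i|\}$, because the densities of $P$ and $Q_i$ overlap on a set of Lebesgue measure $\max\{0,1-c|x_i|\}$. Take, independently over $i\le n$, a maximal coupling $(Y_i,Z_i)$ with $Y_i\sim Q_i$, $Z_i\sim P$ and $\mathbb P[Y_i\neq Z_i]=\min\{1,c|x_i|\}$; for uniforms this coupling is explicit --- keep $Y_i$ whenever it lands in the overlap interval and otherwise resample from the complement. Set $F:=\{i\le n:Y_i\neq Z_i\}$, so that $\mathbb E|F|=\sum_{i\le n}\min\{1,c|x_i|\}=p_n$ by linearity of expectation. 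The adversary $\mathcal A$ receives $Y=(Y_1,\dots,Y_n)$, draws $Z$ from the conditional law of the coupling given $Y$ using its own randomness, and then outputs $Z$ --- perturbing exactly the coordinates in $F$ --- if $|F|\le c_n$, and otherwise outputs $Y$ unchanged. In both branches $\mathcal A$ alters at most $c_n$ coordinates, so it is admissible, and on the event $\{|F|\le c_n\}$ its output is an honest draw from $P^{\otimes n}$.

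It follows that the law $\mathcal D_1^{\mathcal A}$ of the observed data under $H_1$ with adversary $\mathcal A$ obeys $\mathrm{TV}(\mathcal D_1^{\mathcal A},P^{\otimes n})\le\mathbb P[|F|>c_n]$, and Markov's inequality together with \eqref{eq:2342} gives $\mathbb P[|F|>c_n]\le\mathbb P[|F|\ge c_n]\le\mathbb E|F|/c_n=p_n/c_n<1-o$ for all $n\ge\bar n$ (note $c_n\ge 1$ whenever \eqref{eq:2342} holds). Hence $\mathrm{TV}(\mathcal D_1^{\mathcal A},\mathcal D_0)\le 1-o$ for all $n\ge\bar n$. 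Now suppose $\hat\beta_n$ were a consistent estimator. Applied under $H_0$ it satisfies $\mathbb P_{\mathcal D_0}[\hat\beta_n\ge c/2]\to 0$, and applied under $H_1$ with adversary $\mathcal A$ it satisfies $\mathbb P_{\mathcal D_1^{\mathcal A}}[\hat\beta_n\ge c/2]\to 1$. Since the law of $\hat\beta_n$ is a fixed, possibly randomized, function of the observed data, $\big|\mathbb P_{\mathcal D_1^{\mathcal A}}[\hat\beta_n\ge c/2]-\mathbb P_{\mathcal D_0}[\hat\beta_n\ge c/2]\big|\le\mathrm{TV}(\mathcal D_1^{\mathcal A},\mathcal D_0)\le 1-o$ eventually, which contradicts the left-hand side tending to $1$.

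I expect the only genuinely delicate point to be the construction of $\mathcal A$: one has to check that it can be realized as a randomized function of the \emph{observed} data (the conditional law $Z\mid Y$ of the maximal coupling is what makes this work) and that it perturbs at most $c_n$ coordinates in every realization. Everything else reduces to the total variation identity for shifted uniforms and a single use of Markov's inequality; in particular it is worth noting that no concentration inequality for $|F|$ is needed, since the crude estimate $\mathrm{TV}(\mathcal D_1^{\mathcal A},\mathcal D_0)\le 1-o<1$ already precludes consistency.
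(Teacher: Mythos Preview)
Your proof is correct and follows the same two-point (Le Cam) strategy as the paper, exploiting that the total variation between two unit-width uniforms shifted by $s$ equals $\min\{1,|s|\}$. The execution differs in two respects, and both of your choices are simplifications. First, the paper treats the two hypotheses symmetrically: it equips \emph{each} model with a randomizing adversary that, with probability $1/2$, relocates every out-of-overlap observation to the other model's non-overlap region, and then argues that the two perturbed laws coincide on the event $K_n=\{N^1\le c_n\}\cap\{N^2\le c_n\}$. You instead keep $H_0$ clean and let only the $H_1$ adversary act, using the explicit maximal coupling to push $Q_i$-draws onto $P$; this avoids having to verify equality of two conditioned laws. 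Second, the paper bounds $\mathbb{P}[K_n^c]$ via a Chernoff bound for the Poisson--binomial count $N^1$, obtaining $\mathbb{P}[K_n^c]\to 0$ and hence that the two observational distributions are asymptotically \emph{indistinguishable}. You observe that consistency only requires the total variation to stay bounded away from $1$, so Markov's inequality on $|F|$ already suffices. Your argument is thus more elementary and yields the theorem with less work; the paper's route gives the stronger intermediate conclusion $\mathrm{TV}\to 0$, which is not needed for the stated result but may be of independent interest.
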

\begin{proof}
Let $\beta_2 \in \mathbb{R}$ and $n \in \mathbb{N}$. Let $\epsilon^2 = (\epsilon^2_1, \dots, \epsilon^2_n)^{\top} \subset [-1/2,1/2]^n$ have i.i.d.~uniformly distributed elements independent of $\epsilon_1$. Define for all $i \leq n$ the random variables $Y^2_i \coloneqq \beta_2 x_i + \epsilon^2_i$.
For $u \in \mathbb{R}$, let $A_1(u) \coloneqq \{v \in \mathbb{R} \ \vert \ |v-\beta_1 u| \leq 1 \}$ be the support of $Y \coloneqq \beta_1 u + \epsilon$, where $\epsilon$ is distributed uniformly in $[-1/2,1/2]$. Define $A_2(u) \coloneqq \{v \in \mathbb{R} \ \vert \ |v-\beta_2 u| \leq 1/2 \}$. Let $\mu$ be the Lebesgue measure and define the difference of support $Q_1(u)$ by $Q_1(u) \coloneqq A^1(u) \setminus A^2(u)$ and $Q_2(u)$ by $Q_2(u) \coloneqq A^2(u) \setminus A^1(u)$. We have $\mu(Q_1(u)) = \mu(Q_2(u)) = \min\{1, |\beta_1 - \beta_2|u\}$ %
Therefore, for all $i \leq n$ we find that $N^1_i \coloneqq \mathbbm{1}\{Y^1_i \in Q_1(x_i)\}$ and $N^2_i \coloneqq \mathbbm{1}\{Y^2_i \in Q_2(x_i)\}$ are independently Bernoulli distributed with parameter $\min\{1, |\beta_1 - \beta_2||x_i|\}$. Let $N^1$ 
be the number of datapoints for which 
$Y^1_i \in Q_1(x_i)$,
\begin{equation*}
    N^1 \coloneqq \sum_{i=1}^n N^1_i = \sum_{i=1}^n \mathbbm{1}\{Y^1_i \in Q_1(x_i)\}.
\end{equation*}
Define $N^2$ analogously. Let $K_n$ 
be the event that $N^1 \leq c_n$ and $N^2(x) \leq c_n$. Assume $K_n$ holds. Let $H^1_n, \dots, H^1_1$ and $H^2_1, \dots, H^2_n$ be i.i.d.~Bernoulli distributed with parameter $0.5$.
For all $B^1_1, \dots, B^1_n$ and $B^2_1, \dots, B^2_n$ be jointly independently distributed such that for all $i \leq n$ we have that $B^1_i$ is uniformly distributed on $Q_2(x_i)$ and $B^2_i$ is uniformly distributed on $Q_1(x_i)$.
Define for all $i \leq n$
\begin{equation*}
    Y'^1_i \coloneqq N^1_i \left( H^1_i Y^1_i + (1-H^1_i) B^1_i \right) + (1- N^1_i) Y^1_i.
\end{equation*}
Put simply, if $Y^1_i \in Q_1(x_i)$ with probability 50\%, we move the data point to $Q^1(x_i)$. We define $Y'^2_1, \dots, Y'^2_n$ analogously.
By construction we have that $Y'^1_1, \dots, Y'^1_n \mid K_n$ and $Y'^2_1, \dots, Y'^2_n \mid K_n$ have the same distribution. Therefore, under $K_n$, no estimator can differentiate between $\beta_1$ and $\beta_2$.

It remains to bound $1 - \mathbb{P}[K_n]$. We have
\begin{equation*}
    1 - \mathbb{P}[K_n] \leq  \mathbb{P}\left[ N^1 > c_n \right] + \mathbb{P}\left[ N^2 > c_n \right] = 2 \mathbb{P}\left[ N^1 > c_n \right].
\end{equation*}
We find that $N^1$ is Poisson binomial distributed with parameters $\min\{1, |\beta_1 - \beta_2||x_{1}|\}, \dots, \min\{1, |\beta_1 - \beta_2||x_n|\}$. Define $p_n \coloneqq \sum_{i=1}^n \min\{1, |\beta_1 - \beta_2||x_i|\}$.
By Chernoff's bound we have for $c_n \geq p_n$ 
that
\begin{equation*}
    \mathbb{P}\left[ N^1 > c_n \right] \leq \exp\left( c_n - p_n - c_n\log(c_n/p_n) \right).
\end{equation*}
Define $q_n \coloneqq p_n / c_n$, then
\begin{align*}
    \exp\left( c_n - p_n - c_n\log(c_n/p_n) \right) &=
    \exp\left( c_n \left(1- q_n - \log(1/q_n) \right)\right).
\end{align*}
If we have data such that there exists $\bar{n}$ and $o > 0$ such that for all $n \geq \bar{n}$ we have 
$q_n = p_n/ c_n < 1 - o$,
then $\mathbb{P}\left[ N^1 > c_n \right] \to 0$ and therefore $\mathbb{P}\left[ K_n\right] \to 1$ and no consistent estimator exists.
\end{proof}

\begin{remark}
    Assume that $X_1, X_2, \dots$ are i.i.d.~distributed such that for all $i \in \mathbb{N}$ we have $\mathbb{P}[X_i = 1] = \mathbb{P}[X_i = -1] = 1/2$ and let $c_n = \lfloor \eta n \rfloor$ for $\eta \in (0,1]$ (that is, $\eta$ is the fraction of outliers). Then $p_n \leq n |\beta_1 - \beta_2|$ and for $|\beta_1 - \beta_2|$ small enough there exists $o>0$ such that for all $n$,
    $p_n/\lfloor \eta n \rfloor < 1 - o$. Therefore, by Theorem~\ref{thm:lower_bound_simple}(in general) there does not exist a consistent estimator if we allow for
    a constant fraction of outliers.
\end{remark}

\section{Additional Information on the Experiments}

\subsection{Experimental Details}
\label{app:detail_experiments}
For the synthetic experiments, we generate data from the model specified in Setting~\ref{set:robust_improved} with $\beta=3$ and $T=1$.
We sample $\epsilon_X$ and $U$ either from two Ornstein-Uhlenbeck processes with parameters $(1, -0.8)$ and $(1, -0.5)$, respectively,
or from two band-limited processes
with coefficients
drawn i.i.d.~standard Gaussian with $S = \{1, \dots, 50\}$.
We choose the confounded components $\outliersnn$ uniformly at random with probability $0.25$.
For the othonormal basis $\phi$ we consider the cosine basis (see Definition~\ref{def:cosine_basis}) and the Haar basis (see Definition~\ref{def:haar_basis}).
We choose the threshold parameter $a$ for \torrent and \bfs as $0.7$.
We consider noise with variances $\sigma_{\eta}^2 \in \{0,1,4\}$.
We evaluate the mean absolute prediction error on 1000 sample sets, that is, $\mathrm{MAE} = \sum_{i=1}^{1000} |\beta - \hat{\beta}^{1000}_i|/{1000}$.
In all experiments, except for the one presented in Table~\ref{tab:bfs}, we use 
\algon-\tor.

\subsection{Additional Experiments}
\label{app:additional_exp}

In this section, we present additional experiments on synthetic data. Figure~\ref{fig:exp_n_False_haar} suggests that \algon-\tor is consistent when the Haar basis is used instead of the cosine basis. 
Figure~\ref{fig:exp_n_True_cosine_ablation}
considers 
a setting
with model misspecification. In Figure~\ref{fig:exp_n_True_cosine_ablation} (left) we plot the performance of \algon-\tor as the fraction of confounded datapoints increases. \algon-\tor seems to be consistent even when 50\% of the datapoints are confounded. As expected, when more than 50\% of the datapoints are confounded, \algon-\tor is not consistent.
In Figure~\ref{fig:exp_n_True_cosine_ablation} (right) we add standard Gaussian noise to $U$, which makes the model misspecified since Assumption~\ref{ass:main} is no longer satisfied for a `small' $G$.
In this sense, the confounding is dense (with $c_n = n$). 
However, since we are in the frequency domain, the variance of the noise converges  to $0$ (with increasing $n$), 
which we suspect to be the reason that \algon-\tor remains to appear consistent.
Figure~\ref{fig:exp_n_True_2} considers multivariate processes and suggests that \algon-\tor is consistent when $X$ is two-dimensional.
Lastly, Table~\ref{tab:convergence} shows that \algon-\tor converges in under 15 interations for sample sizes up to 1000 in the case where $X$ and $U$ are band-limited processes and the noise variance $\sigma_{\eta}$ is set to 1. 
\begin{figure}[ht]
     \centering
         \centering
         \includegraphics[width=0.45\textwidth]{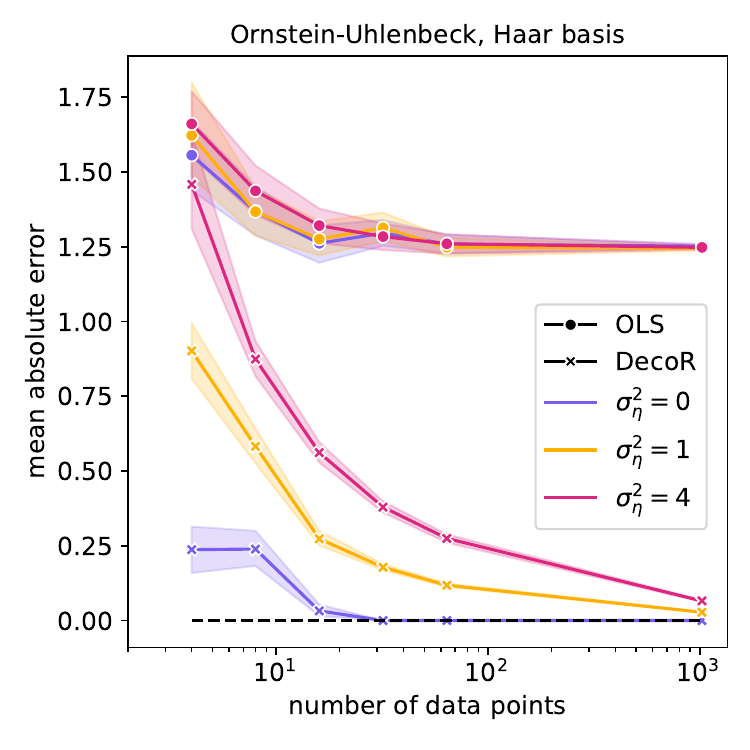}
         \caption{Synthetic experiment where $\epsilon_x$ and $U$ are generated by two independent Ornstein-Uhlenbeck processes and we choose $\phi$ to be the Haar basis. 
         }
         \label{fig:exp_n_False_haar}
\end{figure}

\begin{figure}[ht]
     \centering
     \begin{subfigure}[t]{0.45\textwidth}
         \centering
         \includegraphics[width=\textwidth]{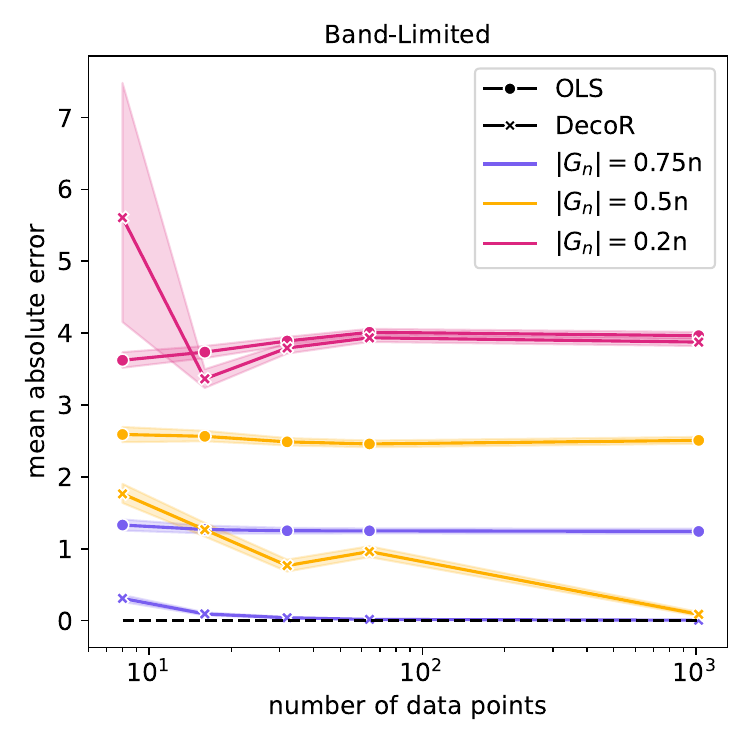}
     \end{subfigure}
     \hfill
     \begin{subfigure}[t]{0.45\textwidth}
         \centering
         \includegraphics[width=\textwidth]{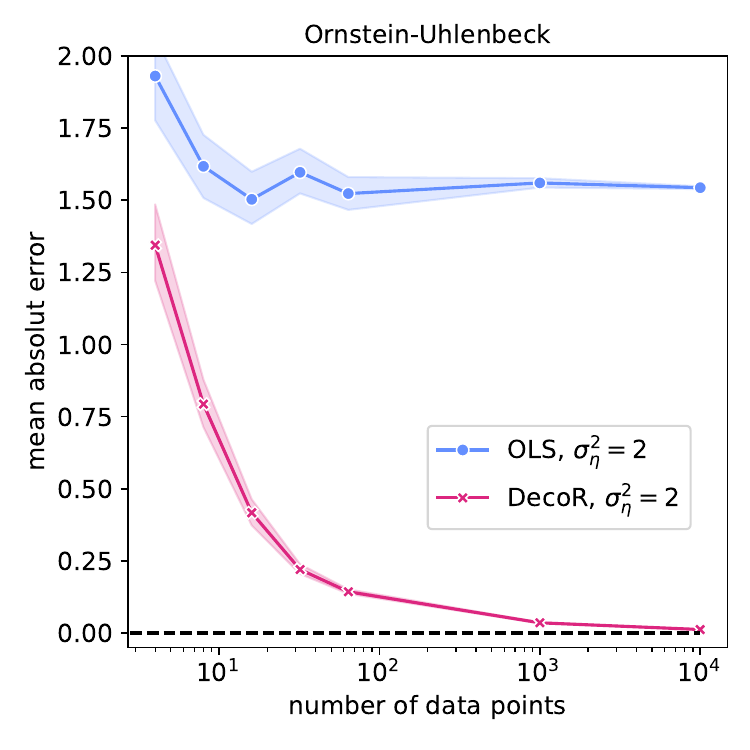}
     \end{subfigure}
    \caption{
         Left: Synthetic experiment where $\epsilon_x$ and $U$ are generated by two independent band-limited processes.
         Right: Synthetic experiment where $\epsilon_x$ and $U$ are generated by two independent Ornstein-Uhlenbeck processes. For this experiment we add i.i.d.~standard Gaussian noise to $U$, which makes the confounder dense and, therefore, the model is misspecified (cf. Setting~\ref{set:robust_improved}).
         }
    \label{fig:exp_n_True_cosine_ablation}
\end{figure}

\begin{figure}[ht]
     \centering
     \begin{subfigure}[t]{0.45\textwidth}
         \centering
         \includegraphics[width=\textwidth]{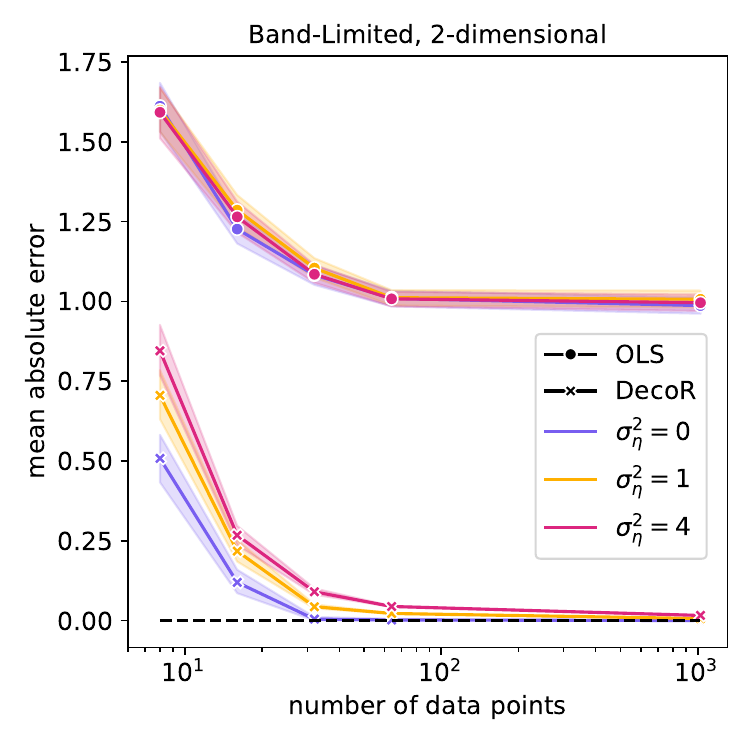}
     \end{subfigure}
     \hfill
     \begin{subfigure}[t]{0.45\textwidth}
         \centering
         \includegraphics[width=\textwidth]{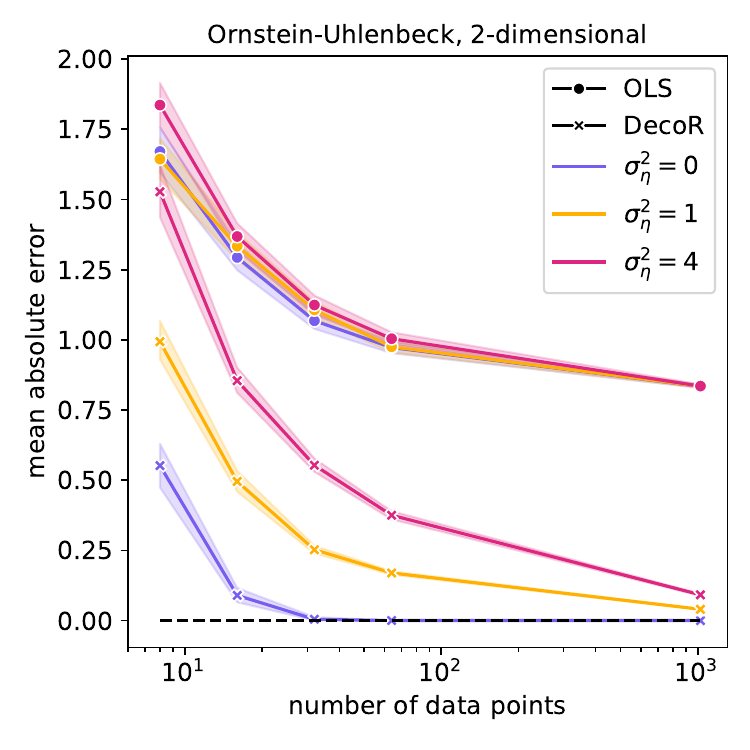}
     \end{subfigure}
    \caption{Left: Synthetic experiment where $\epsilon_x$ is generated by a 2-dimensional band-limited process and $U$ by a 1-dimensional band-limited process.
    Right: Synthetic experiment where $\epsilon_x$ is generated by a 2-dimensional Ornstein-Uhlenbeck process and $U$ by a 1-dimensional Ornstein-Uhlenbeck process.}
    \label{fig:exp_n_True_2}
\end{figure}

\begin{figure}[ht]
     \centering
        \begin{tabular}{ |c|c|c|c| } 
        \hline
        $n$ & mean & min & max \\
        \hline
        10 & 2.42 & 2 & 3 \\
        100 & 5.14 & 3 & 8 \\
        1000 & 8.26 & 4 & 13 \\
        \hline
        \end{tabular}
    \caption{Number of iterations until convergence for \algon-\tor. Even for sample size $n=1000$, \algon converges after $<15$ iterations. 
    }
    \label{tab:convergence}
\end{figure}

\end{document}